\documentclass[11pt]{article}
\usepackage{fullpage}
\usepackage{amsmath,amsthm,amsfonts,amssymb,dsfont,bm}
\usepackage{enumerate,color,xcolor}
\usepackage[colorlinks,linkcolor=cyan,citecolor=blue]{hyperref}
\usepackage{url}
\usepackage{enumitem}
\usepackage{graphicx} 
\usepackage{float}
\usepackage{subcaption}
\usepackage{comment}
\numberwithin{equation}{section}

\theoremstyle{plain}
\newtheorem{theorem}{Theorem}[section]
\newtheorem{corollary}[theorem]{Corollary}
\newtheorem{lemma}[theorem]{Lemma}
\newtheorem{proposition}[theorem]{Proposition}
\newtheorem{assumption}[theorem]{Assumption}

\theoremstyle{remark}
\newtheorem{remark}[theorem]{Remark}

\newcommand{\R}{\mathbb{R}}
\newcommand{\ip}[2]{\left\langle #1,#2\right\rangle}
\newcommand{\tr}{\mathrm{Tr}}

\title{Variance Reduction for Stochastic Gradient Generalized Non-reversible Langevin Monte Carlo Algorithms}
\author{Bingye Ni\thanks{FinTech Thrust, Hong Kong University of Science and Technology (Guangzhou), Guangzhou, Guangdong, People's Republic of China; \texttt{bni151@connect.hkust-gz.edu.cn}}
\and
Xiaoyu Wang\thanks{Corresponding author. FinTech Thrust, Hong Kong University of Science and Technology (Guangzhou), Guangzhou, Guangdong, People's Republic of China; \texttt{xiaoyuwang@hkust-gz.edu.cn}}
  \and
  Yingli Wang\thanks{School of Mathematical Science, Fudan University, Shanghai, People's Republic of China; \texttt{yingliwang@fudan.edu.cn}}
  \and
  Lingjiong Zhu\thanks{Department of Mathematics, Florida State University, Tallahassee, Florida, United States of America; \texttt{zhu@math.fsu.edu}}}
\date{\today}

\begin{document}

\maketitle

\begin{abstract}
We study the leading-order fluctuation of stochastic gradient Euler--Maruyama estimators for generalized non-reversible Langevin dynamics. Under structural assumptions tailored to the small-stepsize central limit theorem 
and under an unbiased stochastic gradient oracle, we prove that the empirical average over a horizon of order the inverse squared stepsize satisfies a central limit theorem in the vanishing-stepsize regime. 
The limiting variance is characterized through the Poisson equation of the limiting full-gradient diffusion. We then rewrite this constant in an operator form that links it to the continuous-time asymptotic variance and, under standard operator-theoretic assumptions, derive a sufficient condition under which an anti-symmetric perturbation strictly reduces the leading-order fluctuation constant relative to the reversible baseline. We also identify bounded smooth predictive observables that are directly covered by the main theorem. As a separate Gaussian calculation beyond the bounded-test-function regime, we obtain closed-form formulas for quadratic Hamiltonians and linear observables. The framework covers non-reversible Langevin dynamics and augmented-state examples including Hessian-free high-resolution dynamics and a positive-definite subclass of gradient-adjusted underdamped Langevin dynamics that allow stochastic gradients.
Numerical experiments on basic examples and Bayesian linear regression using synthetic data, and Bayesian logistic regression using real data support the predicted Gaussian fluctuations and show that the non-reversible schemes consistently reduce the root mean squared error (RMSE) relative to their reversible baselines.
\end{abstract}

\section{Introduction}

The problem of \textit{sampling} a target distribution
$\pi(\theta)\propto\exp(-U(\theta))$
for a function $U:\mathbb{R}^d \to \mathbb{R}$
arises in many applications, including Bayesian statistical inference \cite{gelman1995bayesian}, Bayesian formulations of inverse problems \cite{stuart2010inverse}, and Bayesian classification and regression tasks in machine learning \cite{andrieu2003introduction,teh2016consistency,DistMCMC19,GIWZ2024,DIGing,regime2025}.

\textit{Langevin algorithms} are popular Markov chain Monte Carlo (MCMC) methods that allow one to sample from a target distribution of interest.
Although Langevin algorithms have a long history and are well studied in computational physics and chemistry (see e.g. \cite{LRS2010,coffey2012langevin,leimkuhler2016molecular} and the references therein), their applications to machine learning lead to a number of computational challenges and require rethinking and redesigning \cite{ma2015complete,teh2016consistency}.
The classical Langevin algorithm is based on the discretization of
{\it overdamped Langevin dynamics} \cite{Dalalyan,DM2017,DK2017,Raginsky}:
\begin{equation}\label{eq:overdamped-2}
d\theta_{t}=-\nabla U(\theta_{t})dt+\sqrt{2}dB_{t},
\end{equation}
where $U:\mathbb{R}^{d}\rightarrow\mathbb{R}$
and $B_{t}$ is a standard $d$-dimensional Brownian motion. Under some mild assumptions on $U(\cdot)$, the diffusion \eqref{eq:overdamped-2} admits a unique stationary distribution with the density $\pi(\theta)\propto\exp(-U(\theta))$,
also known as the \emph{Gibbs distribution} \cite{chiang1987diffusion,stroock-langevin-spectrum}. In computing practice, the diffusion \eqref{eq:overdamped-2} is simulated by considering its discretization,
and one of the most commonly used discretization schemes is the Euler--Maruyama discretization of
\eqref{eq:overdamped-2}, often known as the \textit{unadjusted Langevin algorithm} (ULA) in the literature; see e.g. \cite{DM2017}:
\begin{equation}\label{discrete:overdamped}
\theta_{k+1}=\theta_{k}-\eta\nabla U(\theta_{k})+\sqrt{2\eta}\xi_{k+1},
\end{equation}
where $\xi_{k}$ are independent and identically distributed (i.i.d.) $\mathcal{N}(0,I_{d})$ Gaussian random vectors and $\eta>0$ is the stepsize.
The first non-asymptotic result of \eqref{discrete:overdamped}
is due to \cite{Dalalyan}, which was improved soon after
by \cite{DM2017} with a particular emphasis on the dependence on the dimension $d$.
Both works consider the total variation (TV) distance for the convergence guarantees.
Later, \cite{DM2016} studied the convergence guarantees in the 2-Wasserstein distance,
and \cite{DMP2016} studied variants of \eqref{discrete:overdamped} when $U$ is not smooth.
\cite{CB2018} studied the convergence in the Kullback-Leibler (KL) divergence.

In many settings in machine learning, the full gradient $\nabla U$ can be expensive or impractical to compute. 
For example, in Bayesian learning problems, $U$ can be written as a finite-sum form as the sum of many component functions over all the data points
and the number of data points can be large \cite{xu2018global}. In such settings, algorithms that rely on \emph{stochastic gradients}, i.e., unbiased stochastic estimates of the gradient obtained by a randomized sampling of the data points, is often more efficient \cite{bottou2010large}. This fact motivated the development of Langevin algorithms that can support stochastic gradients. 
In particular, if one replaces the full gradient $\nabla U$ in \eqref{discrete:overdamped} by a stochastic gradient, 
the resulting algorithm
is known as the \emph{stochastic gradient Langevin dynamics} (SGLD) \cite{welling2011bayesian,Raginsky}.
Non-asymptotic convergence for SGLD have been studied; see e.g. \cite{DK2017,Raginsky}.

In the literature, \textit{non-reversible Langevin dynamics} (NLD) and the associated algorithms have been extensively studied.
Compared with their reversible counterparts, they are known to exhibit faster convergence or smaller continuous-time asymptotic variance in a variety of settings, including constrained sampling.
In particular, by adding an \textit{anti-symmetric} matrix $J=-J^{\top}$
to the overdamped Langevin dynamics \eqref{eq:overdamped-2}, 
the \textit{non-reversible Langevin dynamics} (NLD) takes the form:
\begin{equation}\label{eqn:anti}
d\theta_{t}=-(I_{d}+J)\nabla U(\theta_{t})dt+\sqrt{2}dB_{t},
\end{equation}
where $B_{t}$ is a standard $d$-dimensional Brownian motion, and its stationary distribution has density $\pi(\theta)\propto
e^{-U(\theta)}$, which is the same Gibbs distribution as that of the overdamped Langevin dynamics in \eqref{eq:overdamped-2}. 
In computing practice, the diffusion \eqref{eqn:anti} is simulated by considering its discretization,
such as the Euler–Maruyama discretization:
\begin{equation}\label{discrete:anti}
\theta_{k+1}=\theta_{k}-\eta(I_{d}+J)\nabla U(\theta_{k})+\sqrt{2\eta}\xi_{k+1},
\end{equation}
where $\xi_{k}$ are i.i.d. $\mathcal{N}(0,I_{d})$ Gaussian vectors and $\eta>0$ is the stepsize.
The stochastic gradient version of \eqref{discrete:anti}, known as the \emph{non-reversible stochastic gradient Langevin dynamics} (NSGLD), has also been studied; see e.g. \cite{HWGGZ20}.

The dynamics \eqref{eqn:anti} is called \emph{non-reversible Langevin dynamics} (NLD) because this diffusion is a non-reversible Markov process due to the addition of the $J$ matrix whereas the overdamped Langevin diffusion is reversible (see \cite{HHS93,HHS05} for
details) with optimal choice of $J$ discussed in \cite{Lelievre-optdrift,WHC2014}. The main idea explored here is that non-reversible processes converge to their equilibrium often faster than their reversible counterparts \cite{HHS93,HHS05}
and this has been applied
to sampling \cite{rey2016improving,DLP2016,reyGraphs,DPZ17,reyLDP,FSS20}
and non-convex optimization \cite{GGZ2,HWGGZ20}.

It is well-known in the literature that under ergodicity assumption on $(\theta_{t})_{t\geq 0}$, the continuous-time time-average estimator
$\pi_T(h) := \frac{1}{T}\int_0^T h(\theta_t)dt$
satisfies a central limit theorem (CLT):
\begin{align}\label{eqn:CLT}
\sqrt{T}\left(\pi_T(h)-\pi(h)\right)\Rightarrow\mathcal{N}(0,\sigma_h^2),
\end{align}
as $T\rightarrow\infty$, where $\pi$ is the unique invariant distribution for $(\theta_{t})_{t\geq 0}$ and the asymptotic variance $\sigma_h^2$ is characterized by the Poisson equation $-\mathcal{L}\phi = h-\pi(h)$ via
\[
\sigma_h^2 = 2\left\langle h-\pi(h),(-\mathcal{L})^{-1}(h-\pi(h))\right\rangle_{L^2(\pi)};
\]
see e.g.\ \cite{DLP2016}.
Moreover, \cite{rey2016improving} proved a general principle for Markov processes: adding an irreversible perturbation $\mathcal{A}$ (anti-self-adjoint in $L^2(\pi)$) to a reversible infinitesimal generator $\mathcal{L}_0$ always satisfies $\sigma^2(h)\leq\sigma_0^2(h)$ for all $h\in L^2(\pi)$, and simultaneously improves the large deviations rate function for the empirical measure
$\frac{1}{t}\int_0^t\delta_{\theta_s}ds$.
In the Langevin setting, \cite{DLP2016} gave a detailed study of the nonreversible dynamics
$d\theta_t = \left(\nabla\log\pi(\theta_t)+\alpha\gamma(\theta_t)\right)dt+\sqrt{2}dB_t$,
where $\gamma$ is divergence-free with respect to $\pi$ (e.g.\ $\gamma(\theta)=J\nabla\log\pi(\theta)$ for anti-symmetric $J$ as in \eqref{eqn:anti}), $\alpha\in\mathbb{R}$ and $B_{t}$ is a standard $d$-dimensional Brownian motion. They proved that $\sigma_h^2(\alpha)\leq\sigma_h^2(0)$ for all $\alpha\in\R$ and all $f\in L^2(\pi)$, and identified the optimal perturbation explicitly for Gaussian targets and linear or quadratic observables. The variance $\sigma_h^2(\alpha)\to 0$ as $|\alpha|\to\infty$ if and only if the nullspace of $\mathcal{A}=\gamma\cdot\nabla$ in the weighted Sobolev space $\mathcal{H}^1$ contains only constants.
More recently, \cite{WTWZ2026} extended this to the constrained setting, studying the skew-reflected non-reversible Langevin dynamics (SRNLD)
on a compact domain $K\subset\R^d$. They established a large deviation principle for the empirical measure $\frac{1}{t}\int_0^t\delta_{\theta_s}ds$ via the G\"{a}rtner--Ellis theorem, showed that the SRNLD rate function dominates that of the reversible reflected Langevin dynamics, and proved an explicit asymptotic variance reduction for the time-average estimator $\pi_T(h)$.
Finally, it is worth noting that the non-asymptotic convergence guarantees and the acceleration of NLD in \eqref{eqn:anti} to the target distribution $\pi$ have also been obtained in \cite{HHS93,HHS05}.
However, all of these results pertain to the \emph{continuous-time} setting. 

For the Euler--Maruyama discretization in \eqref{discrete:anti} (with stochastic gradients), 
the discretization error has been rigorously analyzed in e.g. \cite{HWGGZ20}. 
It was also extended to the constrained setting (without stochastic gradients) in \cite{DFTWZ2025} recently. 
However, even though the non-asymptotic convergence guarantees
and the acceleration have been obtained for the discretized algorithms
for NLD, the acceleration was due to the acceleration of the convergence
of NLD in continuous-time to the target distribution instead
of the discretization error \cite{HWGGZ20,DFTWZ2025}. 
Moreover, the non-asymptotic convergence guarantees for the 
discretized algorithms for NLD are obtained only through upper bounds on Wasserstein distances, 
and such upper bounds may not be sharp \cite{HWGGZ20,DFTWZ2025}.

On the other hand, in the recent applied probability literature, for Euler--Maruyama discretizations of a stochastic differential equation (with constant and invertible diffusion coefficient $\sigma\in\mathbb{R}^{d\times d}$):
\begin{equation}\label{general:SDE}
dX_{t}=g(X_{t})dt+\sigma dB_{t},
\end{equation}
\cite{lu2022central} established a small-stepsize \textit{central limit theorem} (CLT) for the time-average estimator 
\begin{equation}
\pi_\eta(h):=\frac{1}{[\eta^{-2}]}\sum_{k=0}^{[\eta^{-2}]-1}h(x_{k}),
\end{equation}
as $\eta\rightarrow 0$, where $x_{k}$ is the Euler--Maruyama discretization of \eqref{general:SDE}
\begin{equation}
x_{k+1}=x_{k}+\eta g(x_{k})+\sqrt{\eta}\sigma\xi_{k+1},    
\end{equation}
where $(\xi_k)_{k\ge1}$ are i.i.d.\ standard Gaussian random vectors $\mathcal{N}(0,I_{d})$ in $\R^d$ and $\eta>0$ is the stepsize. 
The limiting variance for the CLT is characterized by the Poisson equation associated with the infinitesimal generator of the limiting diffusion. Such a CLT result can be applied
to the Langevin settings for the Euler--Maruyama discretization in \eqref{discrete:overdamped}
for the overdamped Langevin dynamics in \eqref{eq:overdamped-2} and 
the Euler--Maruyama discretization in \eqref{discrete:anti}
for the non-reversible Langevin dynamics in \eqref{eqn:anti} to help understand
and gain insights at the \textit{algorithm} level.

The result of \cite{lu2022central} is stated for exact Euler--Maruyama schemes. In contrast, the algorithms studied in this paper also allow \emph{stochastic gradients} that can be applicable in many machine learning problems \cite{welling2011bayesian,Raginsky}. Therefore, we do not apply their CLT theorem directly. Instead, we adapt their Poisson-equation martingale decomposition and show that, under an $\eta$-uniform conditional moment bound on the stochastic gradient error, the additional stochastic gradient martingale is negligible because it enters the update at order $\eta$, whereas the Brownian discretization noise enters at order $\sqrt\eta$. This fixed-variance scaling is essential for the stated leading-order variance formula.

This paper connects these two lines of work in the stochastic gradient setting. We focus on the leading-order fluctuation constant of the stochastic gradient discretized empirical measure in the small-stepsize regime $\eta\to0$, rather than on the continuous-time asymptotic variance itself. This viewpoint allows us to analyze and compare overdamped and variants of Langevin dynamics at the algorithm level, namely through their stochastic gradient Euler--Maruyama discretizations.

To demonstrate the breadth of the framework, we propose and study \textit{stochastic gradient generalized non-reversible Langevin Monte Carlo} (SG-GNLMC), that not only covers the classical NLD, but also two augmented-state examples: Hessian-free high-resolution (HFHR) dynamics and a positive-definite subclass of gradient-adjusted underdamped Langevin (GAUL) dynamics, that also allow stochastic gradients.

We also record bounded smooth predictive observables that are directly covered by our main theorem. In addition, as a separate Gaussian calculation beyond the bounded-test-function regime, we show that in the quadratic setting the fluctuation constant can be computed in closed form for linear observables, turning the abstract variance-reduction principle into an explicit matrix formula.

The contributions of our paper are summarized as follows:
\begin{enumerate}[label=(C\arabic*)]
    \item Under Assumptions~\ref{ass:U}, \ref{ass:diss}, and the stochastic gradient oracle condition in Assumption~\ref{ass:sg}, we establish a small-stepsize CLT for SG-GNLMC. We show that the empirical average over $[\eta^{-2}]$ stochastic gradient iterates satisfies the following CLT as $\eta\rightarrow 0$ (Theorem~\ref{thm:clt}):
$\eta^{-1/2}\left(\widehat{\mu}_\eta^{\rm SG}(h)-\mu(h)\right)
    \Rightarrow
    \mathcal N(0,V(h))$,
with $V(h)=\mu\left(\left\|\sigma^\top\nabla\phi\right\|^2\right)$,
    where $\phi$ solves the Poisson equation for the limiting full-gradient diffusion. The stochastic gradient perturbation is conditionally unbiased and satisfies an $\eta$-uniform conditional $p$-th moment bound. Under this fixed-variance scaling, it enters the update at order $\eta$, and hence does not alter the leading-order fluctuation constant.
    The proof of Theorem~\ref{thm:clt} builds upon verifying the structural conditions used in the Poisson-equation analysis of \cite{lu2022central} (Proposition~\ref{prop:ltx}), and a series of novel technical lemmas that establish a moment bound (Lemma~\ref{lem:sg-moment}), Poisson-equation law of large numbers along the stochastic gradient chain (Lemma~\ref{lem:sg-poisson-lln}), the estimates that show the negligibility of the stochastic gradient Taylor terms (Lemma~\ref{lem:sg-negligible})
    and exact-gradient Taylor remainder terms (Lemma~\ref{lem:exact-remainder-sg-path}).
    
    \item We connect $V(h)$ to the continuous-time asymptotic variance representation and provide a sufficient condition under which the non-reversible perturbation strictly reduces the leading-order fluctuation constant relative to the reversible baseline, i.e. $V(h)<V_0(h)$, where $V_0(h)$ denotes the corresponding fluctuation constant for the reversible baseline obtained by setting $\mathcal Q=0$ (Theorem~\ref{thm:strict-vr}).
    \item  We identify a class of bounded smooth predictive observables, including logistic predictive probabilities, Brier-type scores, and entropy-type uncertainty scores, that are directly covered by the main $C_b^2$ theory (Proposition~\ref{prop:predictive-observable}). This clarifies the relevance of the fluctuation result to common machine learning prediction tasks. 
    In addition, we illustrate the general framework on three examples: the classical NLD (Proposition~\ref{prop:nld-example}), HFHR (Proposition~\ref{prop:hfhr-example}), and a positive-definite subclass of GAUL (Proposition~\ref{prop:gaul-example}) that also allow stochastic gradients.

    \item As a separate Gaussian calculation beyond the bounded-test-function regime of Theorem~\ref{thm:clt}, we solve the Poisson equation explicitly for quadratic Hamiltonians and linear observables (Proposition~\ref{prop:quadratic-linear}) and derive closed-form expressions for the small-stepsize fluctuation constant (Proposition~\ref{prop:quad-sg-linear-clt}). In particular, this yields fully explicit variance-reduction formulas for NLD (Corollary~\ref{cor:nld-isotropic}), HFHR (Corollary~\ref{cor:HFHR}), and GAUL (Corollary~\ref{cor:GAUL}) with stochastic gradients in the quadratic setting.
    
\item 
We provide numerical experiments for SGNLMC, SGHFHRMC, and SGGAULMC in Sections~\ref{sec:numerical}.
We confirm the closed form fluctuation constants in the quadratic examples,
and show consistent RMSE reductions and Gaussian fluctuation behavior in nonquadratic examples (Section~\ref{sec:basic:numerical}). 
The Bayesian linear regression experiments using synthetic data (Section~\ref{sec:Bayesian:linear})  and the Bayesian logistic regression experiments using real data (Section~\ref{sec:real:data:numerical}) further demonstrate the non-reversible improvement over the corresponding reversible baselines.
\end{enumerate}

\paragraph{Notations.}

We summarize here the notations that will be used throughout the paper.

\begin{itemize}
\item 
For a probability measure $\mu$, we write $L^2(\mu)$ for the Hilbert space of square-integrable real-valued functions on $\R^n$, and $H^0:=\{f\in L^2(\mu):\mu(f)=0\}$ for its mean-zero subspace. We use $C^m(\R^n)$ for the space of $m$ times continuously differentiable functions, $C_b^m(\R^n)$ for the subspace whose derivatives up to order $m$ are bounded, and $C_c^\infty(\R^n)$ for the space of smooth functions with compact support.
\item 
For any probability measure $\mu$ on $\mathbb{R}^{n}$ and $\mu$-integrable $h$ where $h:\mathbb{R}^{n}\rightarrow\mathbb{R}$, we denote $\mu(h):=\int_{\R^n} h(\mathbf{z})\mu(d\mathbf{z})$.
\item 
$\|\cdot\|$ denotes the Euclidean norm on vectors, while $\|\cdot\|_{\mathrm{op}}$ denotes the operator norm on matrices and multilinear maps.
\item
For a scalar-valued function $f:\R^n\to\R$, we write $\nabla f$ for its gradient, $\nabla^2 f$ for its Hessian matrix, and, more generally, $\nabla^k f(\mathbf z)$ for its $k$-th derivative at $\mathbf z$, viewed as a $k$-linear map. In particular, for $H:\R^n\to\R$,
$\nabla^3H(\mathbf z)[u,v,w]
=
\sum_{i,j,\ell=1}^n
\frac{\partial^3 H(\mathbf z)}{\partial \mathbf z_i\partial \mathbf z_j\partial \mathbf z_\ell}
u_i v_j w_\ell$,
for any $u,v,w\in\R^n$.
Equivalently, after fixing $u,v\in\R^n$, we write $\nabla^3H(\mathbf z)[u,v]\in\R^n$ for the unique vector characterized by
$\ip{\nabla^3H(\mathbf z)[u,v]}{w}=\nabla^3H(\mathbf z)[u,v,w]$,
for any $w\in\R^n$.
If $g:\R^n\to\R^n$ is vector-valued, then $\nabla^2g(\mathbf z)$ is the bilinear map taking values in $\R^n$, defined componentwise by the second derivatives of $g$. For a multilinear map $T$, $\|T\|_{\mathrm{op}}$ denotes the induced operator norm; for example,
$\|\nabla^3H(\mathbf z)\|_{\mathrm{op}}
:=
\sup_{\|u\|=\|v\|=\|w\|=1}
\left|\nabla^3H(\mathbf z)[u,v,w]\right|$
and
$\|\nabla^2g(\mathbf z)\|_{\mathrm{op}}
:=
\sup_{\|u\|=\|v\|=1}
\left\|\nabla^2g(\mathbf z)[u,v]\right\|$.
The identity $\nabla^2g(\mathbf z)=-(\mathcal D+\mathcal Q)\nabla^3H(\mathbf z)$ is understood as
$\nabla^2g(\mathbf z)[u,v]
=
-(\mathcal D+\mathcal Q)\left(\nabla^3H(\mathbf z)[u,v]\right)$,
for any $u,v\in\R^n$.
\end{itemize}

%%%%%%%%%%%%%%%%%%%%%%%%%%%%%%%%%%%%%%%%%%%%%%%%%%%%%%%%

\section{Model and Assumptions}\label{sec:model_and_assumptions}

The following stochastic differential equation (SDE) has been studied in the literature as a generalized Langevin dynamics \cite{ma2015complete}:
\begin{equation}\label{eq:gnld}
d\mathbf{z}_{t}=\mathbf{f}(\mathbf{z}_{t})dt+\sqrt{2\mathcal{D}(\mathbf{z}_{t})}d\mathbf{B}_t,
\end{equation}
where $\mathbf{B}_t$ is an $n$-dimensional Brownian motion with $\mathbf{z}_{0}\in\mathbb{R}^{n}$ and $\mathcal{D}(\mathbf{z})$ is a positive semidefinite diffusion matrix, 
and $\mathbf{f}(\mathbf{z})$ is given by
\begin{equation}\label{eq:drift-decomp}
\mathbf{f}(\mathbf{z})=-\left[\mathcal{D}(\mathbf{z})+\mathcal{Q}(\mathbf{z})\right]\nabla H(\mathbf{z})+\Gamma(\mathbf{z}),
\end{equation}
where $\Gamma(\mathbf{z})=\left(\Gamma_1(\mathbf{z}),\ldots,\Gamma_n(\mathbf{z})\right)^{\top}$
with $\Gamma_i(\mathbf{z}):=\sum_{j=1}^{n}\frac{\partial}{\partial\mathbf{z}_j}\left(\mathcal{D}_{ij}(\mathbf{z})+\mathcal{Q}_{ij}(\mathbf{z})\right)$ for every $i=1,\ldots,n$ and $\mathcal{Q}(\mathbf{z})$ is an anti-symmetric matrix ($\mathcal{Q}^\top=-\mathcal{Q}$) representing the deterministic traversing effects seen in Hamiltonian Monte Carlo, where $H(\mathbf{z})$
is the Hamiltonian defined as:
\begin{equation}\label{eqn:Hamiltonian}
H(\mathbf{z}) = H(\theta,r) = U(\theta)+G(\theta,r),
\end{equation}
where $\mathbf z$ represents either $\theta$ itself, or an augmented state space $\mathbf z=(\theta,r)$ with $\theta\in\mathbb R^d$ the model parameter and $r\in\mathbb R^m$ a set of auxiliary variables with $d+m=n$. The function $G$ denotes the auxiliary part of the Hamiltonian.
It is known (see e.g.\ \cite{ma2015complete}) that $\mu\propto e^{-H(\mathbf{z})}$ is a stationary distribution of \eqref{eq:gnld} provided
\begin{equation}\label{eq:QW}
\sum_{i=1}^{n}\sum_{j=1}^{n}\frac{\partial^2}{\partial\mathbf{z}_i\partial\mathbf{z}_j}
\left(\mathcal{Q}_{ij}(\mathbf{z})e^{-H(\mathbf{z})}\right)=0.
\end{equation}

In this paper, we study the \textit{generalized non-reversible Langevin dynamics} (GNLD) \eqref{eq:gnld} with \emph{constant} matrices $\mathcal{D}$ and $\mathcal{Q}$, so that $\Gamma=0$ and the dynamics reduces to
\begin{equation}\label{eq:sde}
d\mathbf{z}_t = -(\mathcal{D}+\mathcal{Q})\nabla H(\mathbf{z}_t)dt + \sqrt{2\mathcal{D}}d\mathbf{B}_t,
\end{equation}
where $\mathcal{D}$ is a constant symmetric positive-definite matrix and $\mathcal{Q}$ is a constant anti-symmetric matrix (i.e. $\mathcal{Q}^\top=-\mathcal{Q}$); specific examples of \eqref{eq:sde} are discussed in Section~\ref{sec:examples}.
The infinitesimal generator of \eqref{eq:sde} is given by
\begin{equation}\label{eq:generator}
\mathcal{A} f(\mathbf{z}) = -\langle(\mathcal{D}+\mathcal{Q})\nabla H(\mathbf{z}),\nabla f(\mathbf{z})\rangle + \operatorname{Tr}\left(\mathcal{D}\nabla^2 f(\mathbf{z})\right).
\end{equation}

Define the target measure
\begin{equation}\label{eq:pi}
\mu(d\mathbf{z})=Z^{-1}e^{-H(\mathbf{z})}d\mathbf{z},
\end{equation}
where $Z>0$ is the normalizing constant, and, for $\mu$-integrable $h$, $\mu(h)=\int_{\R^n} h(\mathbf{z})\mu(d\mathbf{z})$.
Here and below, $\|\cdot\|$ denotes the Euclidean norm on vectors, while $\|\cdot\|_{\mathrm{op}}$ denotes the operator norm on matrices and multilinear maps.

Throughout the paper, the continuous-time dynamics \eqref{eq:sde} should be understood as the limiting full-gradient diffusion. The stochastic gradient approximation is introduced only at the discretization level. Thus, the invariant measure, the generator, and the Poisson equation below are still associated with the full-gradient GNLD.

Next, we introduce the \textit{stochastic gradient generalized non-reversible Langevin Monte Carlo} (SG-GNLMC) algorithm. Let $(\mathsf Z,\mathcal Z)$ be an auxiliary measurable space, and let
$\widehat{\nabla H}:\mathbb R^n\times\mathsf Z\to\mathbb R^n$
be a stochastic gradient oracle for $\nabla H$. The random variables
$(\zeta_k)_{k\ge1}$ take values in $\mathsf Z$ and represent the randomness used by the oracle.
The stochastic gradient Euler--Maruyama scheme is
\begin{equation}\label{eq:em}
\mathbf{z}_{k+1}^{\eta}
=
\mathbf{z}_k^{\eta}
-\eta(\mathcal{D}+\mathcal{Q})
\widehat{\nabla H}(\mathbf z_k^{\eta},\zeta_{k+1})
+\sqrt{2\eta\mathcal{D}}\xi_{k+1},
\qquad k\ge0,
\end{equation}
where $(\xi_k)_{k\ge1}$ are i.i.d. standard Gaussian random vectors
$\mathcal N(0,I_n)$. The oracle randomness $(\zeta_k)_{k\ge1}$ is specified through the filtration and conditional moment assumptions in Assumption~\ref{ass:sg}.
Equivalently, with $\mathcal D=\frac12\sigma\sigma^\top$, we may write
\[
\mathbf z_{k+1}^{\eta}
=
\mathbf z_k^{\eta}
+\eta g(\mathbf z_k^{\eta})
+\sqrt{\eta}\sigma\xi_{k+1}
+\eta\varepsilon_{k+1},
\]
where
$g(\mathbf z):=-(\mathcal D+\mathcal Q)\nabla H(\mathbf z)$,
and
\[
\varepsilon_{k+1}
:=
-(\mathcal D+\mathcal Q)
\left[
\widehat{\nabla H}(\mathbf z_k^{\eta},\zeta_{k+1})
-\nabla H(\mathbf z_k^{\eta})
\right].
\]

We define the stochastic gradient empirical measure by
$\widehat{\mu}_{\eta}^{\rm SG}(\cdot)=
\frac{1}{[\eta^{-2}]}\sum_{k=0}^{[\eta^{-2}]-1}\delta_{\mathbf{z}_k^{\eta}}(\cdot)$,
where $[a]$ denotes the integer part of $a\in\R$ such that for a test function $h$,
\[
\widehat{\mu}_{\eta}^{\rm SG}(h)
=
\frac{1}{[\eta^{-2}]}\sum_{k=0}^{[\eta^{-2}]-1}
h(\mathbf z_k^\eta).
\]

\begin{assumption}\label{ass:U}
There exist constants $L,M>0$, $m>0$, and $b\ge 0$ such that:
\begin{enumerate}[label=(\roman*)]
    \item $H\in C^4(\R^n)$, $\sup_{\mathbf{z}\in\R^n}\|\nabla^2 H(\mathbf{z})\|_{\mathrm{op}}\le L$, and $\sup_{\mathbf{z}\in\R^n}\|\nabla^3 H(\mathbf{z})\|_{\mathrm{op}}\le M$;
    \item $\ip{\nabla H(\mathbf{z}_1)-\nabla H(\mathbf{z}_2)}{\mathbf{z}_1-\mathbf{z}_2}\ge m\|\mathbf{z}_1-\mathbf{z}_2\|^2-b$ for all $\mathbf{z}_1,\mathbf{z}_2\in\R^n$.
\end{enumerate}
\end{assumption}

\begin{remark}\label{rem:assU-scope}
In Assumption~\ref{ass:U}(i), 
the condition $\sup_{\mathbf{z}\in\R^n}\|\nabla^2 H(\mathbf{z})\|_{\mathrm{op}}\le L$
is essentially an $L$-Lipschitz condition
on the gradient of $H$, also known as the $L$-smoothness condition,
which is commonly assumed in the machine learning literature \cite{Raginsky,GGZ}.
The additional condition 
$\sup_{\mathbf{z}\in\R^n}\|\nabla^3 H(\mathbf{z})\|_{\mathrm{op}}\le M$
essentially assumes $H$ is $M$-Hessian Lipschitz, 
which is another common assumption 
in the machine learning literature \cite{DM2016,Ma2021}.
Assumption~\ref{ass:U}(ii) is a dissipativity condition that is a common assumption in the machine learning literature in general that allows $H$ to be non-convex, and it is satisfied 
in many non-convex learning problems, such as one-hidden-layer neural networks \cite{akiyama2022excess}, non-convex formulations of classification problems \cite{GGZ}, 
sampling and Bayesian learning and global convergence in non-convex optimization \cite{Raginsky,GGZ}
and many other examples \cite{erdogdu2022}. 
It is included as a convenient growth hypothesis compatible with the framework of \cite{lu2022central}; the one-sided dissipativity hypothesis that is actually needed for the drift $g$ is stated separately in Assumption~\ref{ass:diss}. Moreover, under Assumption~\ref{ass:U}(i)--(ii), $L$-smoothness together with $(m,b)$-dissipativity implies that $H$ admits quadratic upper and lower bounds; see, for instance, Lemma~2 in \cite{Raginsky}. In particular, the normalizing constant $Z=\int_{\R^n}e^{-H(\mathbf z)}d\mathbf z$ is finite, and thus the Gibbs measure \eqref{eq:pi} is well defined.
\end{remark}

\begin{assumption}[One-sided dissipativity]\label{ass:diss}
Let $g(\mathbf{z}) := -(\mathcal{D}+\mathcal{Q})\nabla H(\mathbf{z})$. There exist constants $K_1>0$ and $K_2\ge 0$ such that
\[
\ip{g(\mathbf{z}_1)-g(\mathbf{z}_2)}{\mathbf{z}_1-\mathbf{z}_2}
\le -K_1\|\mathbf{z}_1-\mathbf{z}_2\|^2 + K_2,
\qquad \mathbf{z}_1,\mathbf{z}_2\in\R^n.
\]
\end{assumption}

\begin{assumption}[Stochastic gradient oracle]\label{ass:sg}
Let $\mathcal F_k$ be the filtration generated by $\mathbf z_0^\eta$ and by $\{\xi_j,\zeta_j:1\le j\le k\}$. Assume that $\mathbf z_0^\eta$ is independent of $\{(\xi_j,\zeta_j):j\ge1\}$ and that
$\sup_{\eta\in(0,1)}\mathbb E\|\mathbf z_0^\eta\|^p<\infty$
for some $p>4$. Define
\[
\delta_{k+1}
:=
\widehat{\nabla H}(\mathbf z_k^\eta,\zeta_{k+1})
-\nabla H(\mathbf z_k^\eta),
\qquad
\varepsilon_{k+1}:=-(\mathcal D+\mathcal Q)\delta_{k+1}.
\]
We assume that the stochastic gradient oracle is Markovian in the following sense: conditional on $\mathcal F_k$, the law of $\widehat{\nabla H}(\mathbf z_k^\eta,\zeta_{k+1})$ depends on the past only through $\mathbf z_k^\eta$. Moreover, for every $k\ge0$,
\begin{enumerate}[label=(\roman*)]
\item $\mathbb E[\varepsilon_{k+1}\mid\mathcal F_k]=0$;
\item $\varepsilon_{k+1}$ is conditionally independent of $\xi_{k+1}$ given $\mathcal F_k$;
\item there exists $C_p>0$ such that
\begin{align}\label{conditional:moment:bound}
\mathbb E\left[
\|\varepsilon_{k+1}\|^p
\mid\mathcal F_k
\right]
\le
C_p\left(1+\|\mathbf z_k^\eta\|^p\right).
\end{align}
\end{enumerate}
\end{assumption}

\begin{remark}[Fixed-variance stochastic gradient scaling]\label{rem:sg-scope}
Assumption~\ref{ass:sg} is a fixed-variance stochastic gradient condition. More precisely, the conditional moment bound \eqref{conditional:moment:bound}
%\[
%\mathbb E\left[
%\|\varepsilon_{k+1}\|^p
%\mid\mathcal F_k
%\right]
%\le
%C_p\left(1+\|\mathbf z_k^\eta\|^p\right)
%\]
holds with a constant $C_p$ independent of $\eta$. This covers the usual fixed-mini-batch stochastic gradient oracle when the corresponding gradient error has a finite conditional $p$-th moment \cite{Raginsky}.

The independence of $C_p$ from $\eta$ is important. If the stochastic gradient variance were allowed to grow as $\eta\to0$, for example like $\eta^{-1}$, then the term $\eta\varepsilon_{k+1}$ could become comparable to the Brownian increment $\sqrt\eta\sigma\xi_{k+1}$. In that high-variance regime, the stochastic gradient noise may contribute to the leading fluctuation constant. Such $\eta$-dependent high-variance stochastic gradient regimes are not covered in this paper.
\end{remark}

\begin{remark}\label{rem:diss-suff}
Assumption~\ref{ass:diss} is the dissipativity hypothesis needed to invoke \cite{lu2022central}. In the special case of NLD, i.e. $\mathcal{D}=I_{d}$, Assumption~\ref{ass:U}(i)--(ii) yields
\[
\ip{g(\mathbf{z}_1)-g(\mathbf{z}_2)}{\mathbf{z}_1-\mathbf{z}_2}
\le -(m-\|\mathcal{Q}\|_{\mathrm{op}}L)\|\mathbf{z}_1-\mathbf{z}_2\|^2+b,
\]
so that the convenient sufficient condition $m>\|\mathcal{Q}\|_{\mathrm{op}}L$ implies Assumption~\ref{ass:diss} with $K_1=m-\|\mathcal{Q}\|_{\mathrm{op}}L$ and $K_2=b$. We do not assume this sufficient condition in general.
\end{remark}

\begin{remark}\label{rem:diss-relax}
For the special case of NLD discussed in Remark~\ref{rem:diss-suff}, the condition
$m>\|\mathcal{Q}\|_{\mathrm{op}}L$
is only a convenient sufficient criterion for Assumption~\ref{ass:diss}, and need not be sharp. For the quadratic Hamiltonian $H(\mathbf{z})=\frac12\mathbf{z}^{\top}P\mathbf{z}$ with symmetric positive definite $P$, the NLD-type drift $g(\mathbf{z})=-(I_n+\mathcal{Q})P\mathbf{z}$ satisfies
\begin{align}\label{second:term}
\ip{g(\mathbf{z}_1)-g(\mathbf{z}_2)}{\mathbf{z}_1-\mathbf{z}_2}
=
-\Delta\mathbf{z}^{\top}P\Delta\mathbf{z}
-\Delta\mathbf{z}^{\top}\mathcal{Q}P\Delta\mathbf{z},
\qquad
\Delta\mathbf{z}:=\mathbf{z}_1-\mathbf{z}_2.
\end{align}
If $P\mathcal{Q}=\mathcal{Q}P$, then $\mathcal{Q}P$ is anti-symmetric and the term $\Delta\mathbf{z}^{\top}\mathcal{Q}P\Delta\mathbf{z}$ in \eqref{second:term} vanishes identically. In that case, Assumption~\ref{ass:diss} follows from the coercivity of $P$ alone, without requiring $m>\|\mathcal{Q}\|_{\mathrm{op}}L$. This shows that the bound $m>\|\mathcal{Q}\|_{\mathrm{op}}L$ is mainly a rough sufficient condition. More refined sufficient conditions for broader classes of Hamiltonians may also be possible, but we do not pursue them here.
\end{remark}

\begin{remark}[Dissipativity for augmented-state examples]\label{rem:aug-diss}
Assumption~\ref{ass:diss} is imposed on the full augmented drift, and it does not follow automatically from the dissipativity of $U$ alone. For the HFHR drift in Section~\ref{sec:examples}, however, it can be verified under a simple parameter condition. Let
$q:=\nabla U(\theta_1)-\nabla U(\theta_2)$, $x:=\theta_1-\theta_2$ and
$y:=r_1-r_2$.
Assume
$\langle q,x\rangle\ge m\|x\|^2-b$,
$\|q\|\le L\|x\|$.
For
$g_{\rm HFHR}(\theta,r)
=
\begin{pmatrix}
r-\alpha\nabla U(\theta)\\
-\gamma r-\nabla U(\theta)
\end{pmatrix}$,
we have
\[
\begin{aligned}
\langle g_{\rm HFHR}(\theta_1,r_1)-g_{\rm HFHR}(\theta_2,r_2),(x,y)\rangle
&=
-\alpha\langle q,x\rangle-\gamma\|y\|^2+\langle y,x\rangle-\langle q,y\rangle\\
&\le
-\alpha m\|x\|^2-\gamma\|y\|^2+(1+L)\|x\|\|y\|+\alpha b.
\end{aligned}
\]
Thus, if
$\alpha\gamma m>\frac{(1+L)^2}{4}$,
then the matrix
$\begin{pmatrix}
\alpha m & -(1+L)/2\\
-(1+L)/2 & \gamma
\end{pmatrix}$
is positive definite, and Assumption~\ref{ass:diss} holds for HFHR with suitable constants $K_1>0$ and $K_2\ge0$.

For GAUL in Section~\ref{sec:examples} with a general preconditioner $C$, one needs a corresponding preconditioned dissipativity condition. A convenient sufficient condition is that there exist $m_C>0$ and $b_C\ge0$ such that
\[
\left\langle
C\left(\nabla U(\theta_1)-\nabla U(\theta_2)\right),
\theta_1-\theta_2
\right\rangle
\ge
m_C\|\theta_1-\theta_2\|^2-b_C.
\]
Then, for
$g_{\rm GAUL}(\theta,r)
=
\begin{pmatrix}
Cr-aC\nabla U(\theta)\\
-\gamma r-\nabla U(\theta)
\end{pmatrix}$,
the bound
\[
\begin{aligned}
\langle g_{\rm GAUL}(\theta_1,r_1)-g_{\rm GAUL}(\theta_2,r_2),(x,y)\rangle
&\le
-a m_C\|x\|^2-\gamma\|y\|^2
+(\|C\|_{\rm op}+L)\|x\|\|y\|+a b_C
\end{aligned}
\]
holds. Consequently, Assumption~\ref{ass:diss} is satisfied whenever
$a\gamma m_C>\frac{(\|C\|_{\rm op}+L)^2}{4}$.
For $C=I_d$, this reduces to the HFHR-type condition with $a$ in place of $\alpha$.
\end{remark}

%%%%%%%%%%%%%%%%%%%%%%%%%%%%%%%%%%%%%%%%%%%%%%%%%%%%%%%%

\section{Main Results}\label{sec:main}

We begin by verifying that the Gibbs measure $\mu$ introduced in \eqref{eq:pi} is indeed the invariant measure of the constant-coefficient GNLD \eqref{eq:sde}. This provides the probabilistic foundation for the fluctuation analysis carried out in the rest of this section.

\begin{lemma}[Invariance of the target measure]\label{lem:invariance}
Under Assumption~\ref{ass:U} and the standing assumptions that
$\mathcal D$ is constant symmetric positive definite and
$\mathcal Q$ is constant anti-symmetric, the measure $\mu$ in
\eqref{eq:pi} is invariant for \eqref{eq:sde}.
\end{lemma}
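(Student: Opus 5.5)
The plan is to verify the stationary Fokker--Planck equation $\mathcal A^*\rho=0$ for $\rho:=Z^{-1}e^{-H}$ in the weak form
\[
\int_{\R^n}\mathcal A f\,d\mu=0\qquad\text{for all } f\in C_c^\infty(\R^n),
\]
and then upgrade this to genuine invariance of $\mu$ under the semigroup of \eqref{eq:sde}.

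\textbf{Step 1: the generator annihilates $\mu$.} Fix $f\in C_c^\infty(\R^n)$. Since $\mathcal D$ is constant, integrating by parts once (no boundary terms, because $f$ has compact support) gives
\[
\int \tr(\mathcal D\nabla^2 f)\,\rho\,d\mathbf z=-\int\ip{\mathcal D\nabla f}{\nabla\rho}\,d\mathbf z=\int \ip{\mathcal D\nabla H}{\nabla f}\,\rho\,d\mathbf z,
\]
using $\nabla\rho=-\rho\nabla H$ and the symmetry of $\mathcal D$. This cancels exactly the $\mathcal D$-part of the drift term in \eqref{eq:generator}.

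The remaining term is $-\int\ip{\mathcal Q\nabla H}{\nabla f}\rho\,d\mathbf z$. Write $\rho\,\mathcal Q\nabla H=-\mathcal Q\nabla\rho$ and integrate by parts:
\[
\int \ip{\mathcal Q\nabla\rho}{\nabla f}\,d\mathbf z=-\int \rho\sum_{i,j}\mathcal Q_{ij}\,\partial_i\partial_j f\,d\mathbf z .
\]
The double sum vanishes identically, since it contracts the anti-symmetric matrix $\mathcal Q$ against the symmetric Hessian $\nabla^2 f$. This is precisely condition \eqref{eq:QW} in the constant-$\mathcal Q$ case. Hence $\int\mathcal A f\,d\mu=0$.

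\textbf{Step 2: from infinitesimal to semigroup invariance.} By Assumption~\ref{ass:U}(i), $\nabla H$ is globally Lipschitz, so \eqref{eq:sde} has a unique strong, non-explosive solution. By Remark~\ref{rem:assU-scope}, $H$ has quadratic upper and lower bounds, so $\mu$ is a well-defined probability measure with all moments finite.

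For the upgrade I would use either of two standard routes.
\begin{enumerate}[label=(\alph*)]
\item Echeverr\'ia's theorem, or well-posedness of the martingale problem for a nondegenerate diffusion with Lipschitz coefficients. Together with $\int\mathcal A f\,d\mu=0$ on the core $C_c^\infty$, this yields that $\mu$ is invariant.
\item A direct argument. Take $f\in C_c^\infty$ and set $u(t,\cdot)=P_tf$. By the Lipschitz and $C^4$ regularity of the drift, $u$ is $C^2$ in space with derivatives having polynomial growth. Then
\[
\frac{d}{dt}\int P_t f\,d\mu=\int \mathcal A P_t f\,d\mu=0 .
\]
To justify this, apply Step 1 to $\chi_R P_tf$ with a smooth cutoff $\chi_R$. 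The error terms involve $\nabla\chi_R$ and $\nabla^2\chi_R$ times polynomially growing quantities. They tend to zero as $R\to\infty$ because $\mu$ has Gaussian-type tails.
\end{enumerate}

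\textbf{Main obstacle.} The algebra in Step 1 is trivial. The only genuinely technical point is Step 2: passing from compactly supported test functions to the semigroup, or equivalently checking uniqueness of the invariant measure or of the Fokker--Planck solution. I would handle it by the cutoff argument in (b), which needs only polynomial-growth bounds on $\nabla P_tf$ and $\nabla^2P_tf$. These bounds follow from differentiating the flow, since $\nabla g$ is bounded by $\|\mathcal D+\mathcal Q\|_{\mathrm{op}}L$.
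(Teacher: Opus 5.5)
Your proof is correct, and it takes a different and more self-contained route than the paper.

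\textbf{What the paper does.} The paper works on the density side. It notes that $\Gamma=0$ for constant $\mathcal D,\mathcal Q$. It then checks condition \eqref{eq:QW} by expanding $\partial_i\partial_j e^{-H}=(\partial_iH\,\partial_jH-\partial_{ij}H)e^{-H}$ and observing that $\nabla H^\top\mathcal Q\nabla H$ and $\operatorname{Tr}(\mathcal Q\nabla^2H)$ both vanish by anti-symmetry. The conclusion is then delegated entirely to \cite[Theorem~1]{ma2015complete}.

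\textbf{How your Step~1 relates.} Your Step~1 is the weak (dual) form of the same algebra. You contract the anti-symmetric $\mathcal Q$ against $\nabla^2 f$ instead of $\nabla^2 e^{-H}$, and the $\mathcal D$-part cancels after one integration by parts.

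\textbf{The real difference: Step~2.} The cited result is a Fokker--Planck-level statement: $e^{-H}$ is a stationary solution. The paper never addresses the passage from this infinitesimal invariance to invariance under the semigroup. You supply that step explicitly, in one of two ways:
\begin{itemize}
\item via well-posedness of the martingale problem together with Echeverr\'ia's theorem; or
\item via the cutoff argument, which genuinely uses Assumption~\ref{ass:U}. Bounded $\nabla g$ and $\nabla^2 g$ control $\nabla P_tf$ and $\nabla^2P_tf$ through the first and second variation of the flow. The quadratic lower bound on $H$ makes the annulus terms vanish.
\end{itemize}

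\textbf{What each route buys.} The paper's route is shorter and fits the state-dependent complete-recipe framework \eqref{eq:gnld}. Yours is rigorous for the constant-coefficient case actually used, and makes visible where Assumption~\ref{ass:U} enters.

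\textbf{Two minor corrections.} Neither affects the argument.
\begin{itemize}
\item With additive noise, the derivative flow solves a linear ODE with bounded coefficient. So $\nabla P_tf$ and $\nabla^2P_tf$ are actually bounded, uniformly for $t$ in compacts, not merely of polynomial growth.
\item Since $H\in C^4$, the drift is $C^3$, not $C^4$.
\end{itemize}
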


\begin{proof}
Since $\mathcal D$ and $\mathcal Q$ are constant, we have
$\Gamma_i(\mathbf z)
=
\sum_{j=1}^n
\frac{\partial}{\partial \mathbf z_j}
\left(\mathcal D_{ij}+\mathcal Q_{ij}\right)
=0$,
$i=1,\ldots,n$.
Thus \eqref{eq:sde} is a constant-coefficient special case of the generalized
Langevin dynamics \eqref{eq:gnld}--\eqref{eq:drift-decomp}.
It remains to verify condition \eqref{eq:QW}. Since $\mathcal Q$ is constant,
$\sum_{i,j=1}^n
\frac{\partial^2}{\partial \mathbf z_i\partial \mathbf z_j}
\left(\mathcal Q_{ij}e^{-H}\right)
=
\sum_{i,j=1}^n
\mathcal Q_{ij}\frac{\partial^2}{\partial \mathbf z_i\partial \mathbf z_j}(e^{-H})$.
Using
$\frac{\partial^2}{\partial \mathbf z_i\partial \mathbf z_j}\left(e^{-H}\right)
=
\left(\frac{\partial}{\partial \mathbf z_i}H\,\frac{\partial}{\partial\mathbf z_j}H-\frac{\partial^2}{\partial \mathbf z_i\partial \mathbf z_j}H\right)e^{-H}$,
we obtain
\begin{equation}\label{eq:QW-constant-expansion}
\sum_{i,j=1}^n
\mathcal Q_{ij}\frac{\partial^2}{\partial \mathbf z_i\partial \mathbf z_j}\left(e^{-H}\right)
=
\left(
\nabla H^\top \mathcal Q\nabla H
-
\operatorname{Tr}\left(\mathcal Q\nabla^2H\right)
\right)e^{-H}.
\end{equation}
The first term on the right-hand side of \eqref{eq:QW-constant-expansion},
namely $\nabla H^\top\mathcal Q\nabla H$,
is zero because $\mathcal Q$ is anti-symmetric. The second term on the right-hand side of \eqref{eq:QW-constant-expansion}, namely $\operatorname{Tr}(\mathcal Q\nabla^2H)$, is zero because $\mathcal Q$ is anti-symmetric while $\nabla^2H$ is
symmetric. Hence \eqref{eq:QW} holds. The claimed invariance follows from
\cite[Theorem~1]{ma2015complete}.
\end{proof}

%%%%%%%%%%%%%%%%%%%%%%%%%%%%%%%%%%%%%%%%%%%%
\subsection{Small-stepsize CLT and variance reduction}

Now, we are ready to state our first main result, that provides the small-stepsize central limit theorem for SG-GNLMC algorithm.

\begin{theorem}[Small-stepsize central limit theorem for SG-GNLMC]\label{thm:clt}
Suppose Assumptions~\ref{ass:U}, \ref{ass:diss}, and~\ref{ass:sg} hold. Let $(\mathbf z_k^\eta)_{k\ge0}$ be generated by the stochastic gradient scheme \eqref{eq:em}, and let $h\in C_b^2(\mathbb R^n)$. Then
\begin{equation}\label{eq:clt}
\eta^{-1/2}
\left(
\widehat{\mu}_{\eta}^{\rm SG}(h)-\mu(h)
\right)
\Rightarrow
\mathcal N\left(0,V(h)\right),
\qquad\text{as $\eta\to0$},
\end{equation}
where
\[
V(h)=\mu\left(\|\sigma^\top\nabla\phi\|^2\right),
\]
with $\mathcal D=\frac12\sigma\sigma^\top$ and $\phi$ solves the limiting full-gradient Poisson equation
\[
h-\mu(h)=\mathcal A\phi.
\]
%In particular, under the $\eta$-uniform conditional moment bound in Assumption~\ref{ass:sg}, the fixed-variance stochastic gradient perturbation does not change the leading-order fluctuation constant.
\end{theorem}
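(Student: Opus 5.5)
The approach is the Poisson-equation martingale decomposition of \cite{lu2022central}, with the stochastic gradient error carried as an extra perturbation. First I will verify that the structural conditions used there hold for $g(\mathbf z)=-(\mathcal D+\mathcal Q)\nabla H(\mathbf z)$ and $\sigma=\sqrt{2\mathcal D}$, which are constant and invertible. Assumption~\ref{ass:U}(i) makes $g$ globally Lipschitz with bounded first and second derivatives, since $\nabla^2 g=-(\mathcal D+\mathcal Q)\nabla^3H$. Assumption~\ref{ass:diss} supplies the required one-sided dissipativity. From these, the Poisson equation $\mathcal A\phi=h-\mu(h)$ with $h\in C_b^2$ has a solution $\phi$ whose derivatives up to order three (or four) grow at most polynomially. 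This is the content of Proposition~\ref{prop:ltx}, and Lemma~\ref{lem:invariance} identifies $\mu$ as the invariant law.

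Next I will establish uniform moments along the stochastic gradient chain: $\sup_{\eta}\sup_{k\le \eta^{-2}}\mathbb E\|\mathbf z_k^\eta\|^p<\infty$. The proof expands $\|\mathbf z_{k+1}^\eta\|^2$ (or a power of it), uses dissipativity for the drift, and uses that the cross terms with $\xi_{k+1}$ and $\varepsilon_{k+1}$ have conditional mean zero by Assumption~\ref{ass:sg}(i). The term $\eta^2\|\varepsilon_{k+1}\|^2$ is controlled by the conditional moment bound \eqref{conditional:moment:bound}. This gives Lemma~\ref{lem:sg-moment}.

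With $\Delta_k:=\mathbf z_{k+1}^\eta-\mathbf z_k^\eta=\eta g+\sqrt\eta\sigma\xi_{k+1}+\eta\varepsilon_{k+1}$, I Taylor-expand $\phi(\mathbf z_{k+1}^\eta)-\phi(\mathbf z_k^\eta)$ to third order, sum over $k<N:=[\eta^{-2}]$, and divide by $N\eta$. Using $\eta\mathcal A\phi=\eta\langle g,\nabla\phi\rangle+\frac{\eta}{2}\tr(\sigma\sigma^\top\nabla^2\phi)$, this yields
\[
\eta^{-1/2}\big(\widehat\mu_\eta^{\rm SG}(h)-\mu(h)\big)=\frac{\phi(\mathbf z_N)-\phi(\mathbf z_0)}{N\eta^{3/2}}-\frac{1}{N\eta^{1/2}}\sum_{k<N}\Big(\sqrt\eta\langle\nabla\phi,\sigma\xi_{k+1}\rangle+\eta\langle\nabla\phi,\varepsilon_{k+1}\rangle+R_k\Big).
\]
Here $R_k$ collects the quadratic fluctuation $\frac{\eta}{2}(\xi^\top\sigma^\top\nabla^2\phi\,\sigma\xi-\tr(\cdots))$, the mixed and $\varepsilon$ quadratic terms, and the third-order remainder. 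The term by term accounting is as follows.
\begin{itemize}
\item The boundary term is $O(\eta^{1/2})$ in $L^1$ by the moment bounds.
\item The Brownian martingale $\eta^{3/2}\sum_k\langle\sigma^\top\nabla\phi(\mathbf z_k),\xi_{k+1}\rangle$ is handled by the martingale CLT. Its conditional variance $\eta^{3}\sum_k\|\sigma^\top\nabla\phi(\mathbf z_k)\|^2$ is about $\eta\sum_k(\cdot)$ over $\eta^{-2}$ steps, and converges in probability to $V(h)$. This needs a law of large numbers $\eta^2\sum_k f(\mathbf z_k^\eta)\to\mu(f)$ for polynomially growing $f$, which I prove by again applying the Poisson trick to $f-\mu(f)$ along the SG chain (Lemma~\ref{lem:sg-poisson-lln}). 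The Lindeberg condition follows from $p>4$ moments.
\item The SG martingale $\eta^{2}\sum_k\langle\nabla\phi,\varepsilon_{k+1}\rangle$ has second moment $O(\eta^4\cdot\eta^{-2})=O(\eta^2)$, so it is negligible. This is the key gain from the order-$\eta$ scaling.
\end{itemize}

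The main obstacle is the remainder $R_k$. After the normalization the prefactor is $\eta^{3/2}$, while $\sum_k R_k$ contains terms of size $\eta$ per step. These are the quadratic martingale pieces, which are fine by orthogonality: variance $\eta^3\cdot\eta^2\cdot\eta^{-2}\to0$. They also include non-centered third-order terms, where the Gaussian odd moments vanish but $\eta^{3/2}$-order cross terms survive in mean. For the exact-gradient part I expand to fourth order ($\phi\in C^4$ with polynomial growth, from $H\in C^4$), so that the conditional means of the third-order terms are $O(\eta^2)$ per step. Summed and scaled, they give $\eta^{3/2}\cdot\eta^{-2}\cdot\eta^2\to0$ (Lemma~\ref{lem:exact-remainder-sg-path}). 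For terms involving $\varepsilon$, conditional unbiasedness and conditional independence from $\xi$ (Assumption~\ref{ass:sg}(ii)) kill the first-order cross terms in mean. The leftover terms such as $\eta^2\nabla^2\phi[\varepsilon,\varepsilon]$ are $O(\eta^2)$ per step with controlled moments, hence negligible (Lemma~\ref{lem:sg-negligible}). Slutsky's theorem then gives \eqref{eq:clt}.
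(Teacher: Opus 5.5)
Your strategy is the paper's: verify the hypotheses of \cite{lu2022central} (Proposition~\ref{prop:ltx}), prove uniform moments along the stochastic gradient chain, and telescope the Poisson solution with a Taylor expansion. You then apply the martingale-array CLT to the Brownian term, identify its bracket through a Poisson-equation LLN, and show that the stochastic gradient martingale and the Taylor remainders vanish. However, your normalization is off, and as written the central step fails.

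Since $\eta^{-1/2}\cdot\frac1N\sum_{k<N}\mathcal A\phi(\mathbf z_k^\eta)=\frac{1}{N\eta^{3/2}}\sum_{k<N}\eta\mathcal A\phi(\mathbf z_k^\eta)$, the sum in your display must carry the same prefactor $\frac{1}{N\eta^{3/2}}\approx\eta^{1/2}$ as the boundary term, not $\frac{1}{N\eta^{1/2}}$. The Brownian martingale is therefore $-\eta\sum_{k<N}\langle\nabla\phi(\mathbf z_k^\eta),\sigma\xi_{k+1}\rangle$. Its conditional variance $\eta^{2}\sum_{k<N}\|\sigma^\top\nabla\phi(\mathbf z_k^\eta)\|^2$ is what converges to $V(h)$. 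The quantity you wrote, $\eta^{3}\sum_{k<N}\|\sigma^\top\nabla\phi(\mathbf z_k^\eta)\|^2$, is $O(\eta)$ and would give a degenerate limit. Likewise, the stochastic gradient martingale is $\eta^{3/2}\sum_{k<N}\langle\nabla\phi(\mathbf z_k^\eta),\varepsilon_{k+1}\rangle$ with second moment $O(\eta)$, not $O(\eta^2)$. It is still negligible, but only by a factor $\sqrt\eta$ relative to the Brownian term. The remainder prefactor is $\eta^{1/2}$, not $\eta^{3/2}$.

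With the correct prefactor every estimate you list still closes, but the margins are thin, which is why the extra structure you invoke is necessary rather than optional. After normalization, the cubic Brownian term $\eta^{2}\nabla^3\phi[\sigma\xi_{k+1},\sigma\xi_{k+1},\sigma\xi_{k+1}]$ and the cross term $\eta^{2}\nabla^2\phi(\mathbf z_k^\eta)[\sigma\xi_{k+1},\varepsilon_{k+1}]$ are each $O(1)$ after summation if bounded crudely in $L^1$. You must therefore freeze $\nabla^3\phi$ at $\mathbf z_k^\eta$, which gives a centered martingale plus a $\nabla^4\phi$-correction of order $\eta^{1/2}$. You must also use Assumption~\ref{ass:sg}(i)--(ii) to make the $(\xi,\varepsilon)$ cross term a martingale with variance $O(\eta^2)$. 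The uncentered terms of size $O(\eta^2)$ per step then contribute $O(\eta^{1/2})$ in total.

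Finally, rely on the boundedness of $\nabla^k\phi$, $k=1,\dots,4$, from Proposition~\ref{prop:ltx} rather than mere polynomial growth. Then $F=\|\sigma^\top\nabla\phi\|^2\in C_b^2$, the same Poisson regularity applies to $F$ and yields the LLN for the bracket, and the conditional Lindeberg condition follows from Gaussian tails alone. An LLN for polynomially growing $f$ via the Poisson trick would need Poisson regularity for unbounded right-hand sides, which Proposition~\ref{prop:ltx} (stated for $h\in C_b^2$) does not supply. It would also need moment orders tied to the growth degree, which $p>4$ need not cover.
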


\begin{proof}
The proof will be provided in Section~\ref{sec:main:proof}.
\end{proof}

\begin{remark}[Operator representation of $V(h)$]\label{rem:V-operator}
Let $\hat h:=h-\mu(h)$ and $\phi$ be the solution of $\mathcal A\phi=\hat h$. Write $\mathcal A=\mathcal A^{\mathrm{sym}}+\mathcal A^{\mathrm{as}}$, where
\[
\mathcal A^{\mathrm{sym}}f
:=
-\langle\mathcal D\nabla H,\nabla f\rangle+\operatorname{Tr}\left(\mathcal D\nabla^2 f\right),
\qquad
\mathcal A^{\mathrm{as}}f
:=
-\langle\mathcal Q\nabla H,\nabla f\rangle.
\]
Since $\mathcal A^{\mathrm{as}}$ is anti-self-adjoint in $L^2(\mu)$, we have
\[
-\langle \phi,\mathcal A\phi\rangle_{L^2(\mu)}
=
\mu\left(\langle\nabla\phi,\mathcal D\nabla\phi\rangle\right)
=
\frac12\mu\left(\left\|\sigma^\top\nabla\phi\right\|^2\right)
=
\frac12 V(h).
\]
Since $\hat h=\mathcal A\phi$, it follows that
$V(h)=-2\langle \mathcal A\phi,\phi\rangle_{L^2(\mu)}=2\left\langle \hat h,-\phi\right\rangle_{L^2(\mu)}$.
In particular, whenever $-\mathcal A$ is invertible on $H^0$, this identity can be written as
$V(h)=2\left\langle \hat h,(-\mathcal A)^{-1}\hat h\right\rangle_{L^2(\mu)}$.
The same representation applies to the reversible baseline generator $\mathcal A_0$.
\end{remark}

\begin{remark}
By Theorem~\ref{thm:clt}, under the fixed-variance stochastic gradient scaling of Assumption~\ref{ass:sg}, the stochastic gradient perturbation does not change the leading-order fluctuation constant. Therefore, the comparison of fluctuation constants in the stochastic gradient setting is the same operator comparison as for the limiting full-gradient diffusion.
\end{remark}

Theorem~\ref{thm:clt} identifies the fluctuation constant $V(h)$, but it does not yet explain when non-reversibility is genuinely beneficial. The next result gives a sufficient condition under which the anti-symmetric perturbation strictly improves on the reversible baseline at the level of the small-stepsize fluctuation constant.

\begin{theorem}[Sufficient condition for strict reduction of the fluctuation constant]\label{thm:strict-vr}
Let $H^0 := \{f\in L^2(\mu):\mu(f)=0\}$, $\hat h := h-\mu(h)$, and let $\phi_0\in C^2(\R^n)$ solve the Poisson equation for the reversible baseline ($\mathcal{Q}=0$):
\[
\mathcal{A}_0\phi_0 = \hat h,
\]
where $\mathcal{A}_0 f(\mathbf{z}) := -\langle\mathcal{D}\nabla H(\mathbf{z}),\nabla f(\mathbf{z})\rangle + \operatorname{Tr}\left(\mathcal{D}\nabla^2 f(\mathbf{z})\right)$.
Write $\mathcal{A} = \mathcal{A}_0 + \mathcal{A}^{\mathrm{as}}$, where $\mathcal{A}^{\mathrm{as}} := -\langle\mathcal{Q}\nabla H,\nabla\cdot\rangle$ is the anti-symmetric part. Assume:
\begin{enumerate}[label=(\roman*)]
  \item $-\mathcal{A}_0$ is positive, self-adjoint, and invertible on $H^0$;
  \item the operator $\mathcal{B} := (-\mathcal{A}_0)^{-1/2}(-\mathcal{A}^{\mathrm{as}})(-\mathcal{A}_0)^{-1/2}$ is bounded on $H^0$.
\end{enumerate}
Let $V(h)$ and $V_0(h)$ denote the leading small-stepsize fluctuation constants of the SG-GNLMC schemes associated with $\mathcal Q$ and with $\mathcal Q=0$, respectively. By Theorem~\ref{thm:clt}, these constants coincide with the operator quantities associated with the corresponding limiting full-gradient diffusions. Then $V(h)\leq V_0(h)$. Moreover, if
\begin{equation}\label{eq:strict-cond}
\int_{\R^n} \left|\left\langle\mathcal{Q}\nabla H(\mathbf{z}),\nabla\phi_0(\mathbf{z})\right\rangle\right|^2\mu(d\mathbf{z}) > 0,
\end{equation}
then $V(h)<V_0(h)$.
\end{theorem}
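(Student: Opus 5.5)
The plan is to use the operator representation of Remark~\ref{rem:V-operator} to reduce the comparison to an operator inequality on $H^0$. By Theorem~\ref{thm:clt} and that remark, $V_0(h)=2\langle \hat h,(-\mathcal A_0)^{-1}\hat h\rangle_{L^2(\mu)}$ and $V(h)=2\langle \hat h,(-\mathcal A)^{-1}\hat h\rangle_{L^2(\mu)}$, where the second quantity is read as $2\langle\hat h,-\phi\rangle$. Write $S:=(-\mathcal A_0)^{1/2}$, which is self-adjoint and positive by (i). Set $\mathcal B=S^{-1}(-\mathcal A^{\mathrm{as}})S^{-1}$. Because $\mathcal A^{\mathrm{as}}$ is anti-self-adjoint in $L^2(\mu)$ (as used in Remark~\ref{rem:V-operator}), $\mathcal B$ is a bounded anti-self-adjoint operator by (ii). We then factor
\[
-\mathcal A=S(I+\mathcal B)S .
\]
Since $\mathcal B^*=-\mathcal B$, we have $\mathrm{Re}\langle (I+\mathcal B)x,x\rangle=\|x\|^2$. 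Hence $I+\mathcal B$ is invertible with bounded inverse by Lax--Milgram. This gives $(-\mathcal A)^{-1}=S^{-1}(I+\mathcal B)^{-1}S^{-1}$ and identifies $-\phi$ with this expression applied to $\hat h$. With $u:=S^{-1}\hat h$,
\[
V(h)=2\langle u,(I+\mathcal B)^{-1}u\rangle,\qquad V_0(h)=2\|u\|^2 .
\]

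Next, let $w:=(I+\mathcal B)^{-1}u$, so that $u=w+\mathcal Bw$. Anti-self-adjointness gives $\langle \mathcal Bw,w\rangle=0$, hence $\langle u,w\rangle=\|w\|^2$. The same fact gives $\|u\|^2=\|w\|^2+\|\mathcal Bw\|^2$. Therefore
\[
V_0(h)-V(h)=2\|\mathcal B w\|^2\ge 0,
\]
which proves the weak inequality. Equivalently, $V_0-V=2\|\mathcal B(I+\mathcal B)^{-1}u\|^2$.

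For strictness it suffices to show $\mathcal Bw\neq0$. Suppose instead that $\mathcal Bw=0$. Then $w=u$, so $\mathcal Bu=0$, which means $S^{-1}\mathcal A^{\mathrm{as}}S^{-1}S^{-1}\hat h=0$. Since $S^{-1}$ is injective, $\mathcal A^{\mathrm{as}}(-\mathcal A_0)^{-1}\hat h=0$, that is, $\mathcal A^{\mathrm{as}}\phi_0=0$. But $\mathcal A^{\mathrm{as}}\phi_0=-\langle\mathcal Q\nabla H,\nabla\phi_0\rangle$, and condition \eqref{eq:strict-cond} states exactly that this function is nonzero in $L^2(\mu)$. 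This contradiction gives $V(h)<V_0(h)$.

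The main obstacle is domain bookkeeping. We must justify that $\phi_0=-(-\mathcal A_0)^{-1}\hat h$ in $H^0$ (up to normalizing $\mu(\phi_0)=0$, which leaves $\nabla\phi_0$ unchanged), and that $\phi_0\in\mathrm{Dom}(\mathcal A^{\mathrm{as}})$ so that $\mathcal Bu$ is genuinely given by the pointwise expression. I would handle this by noting that $\phi_0$ is $C^2$ with $\nabla\phi_0\in L^2(\mu)$, since $\mu(\langle\nabla\phi_0,\mathcal D\nabla\phi_0\rangle)=\langle\hat h,-\phi_0\rangle<\infty$ and $\mathcal D$ is positive definite. I would also interpret $\mathcal B$ as the bounded extension from a core such as $C_c^\infty$, using (ii). The identification of the SG fluctuation constants with these operator quantities is supplied by Theorem~\ref{thm:clt}.
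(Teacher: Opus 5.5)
Your proof is correct and has the same skeleton as the paper's. It uses the same operator $\mathcal B$ and the same vector $u=\tilde g=(-\mathcal A_0)^{-1/2}\hat h$, and it reduces the equality case to $\mathcal B\tilde g=0$, i.e.\ $\mathcal A^{\mathrm{as}}\phi_0=0$, through injectivity of $(-\mathcal A_0)^{-1/2}$, exactly as the paper does. The one difference is that the paper cites \cite[Lemma~2]{rey2016improving} for $\frac12V(h)=\langle\tilde g,(I+\mathcal B^*\mathcal B)^{-1}\tilde g\rangle$, whereas you derive the equivalent identity $\frac12V(h)=\langle u,(I+\mathcal B)^{-1}u\rangle$ directly (the symmetric part of $(I+\mathcal B)^{-1}$ is $(I+\mathcal B^*\mathcal B)^{-1}$). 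This gives you the explicit gap $V_0(h)-V(h)=2\|\mathcal B(I+\mathcal B)^{-1}u\|^2$, which makes the equality case immediate and spells out the identification of $-\phi$ with $(-\mathcal A)^{-1}\hat h$ that the paper leaves implicit.
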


\begin{proof}
Since $-\mathcal{A}_0$ is invertible on $H^0$, the element $\tilde{g}:=(-\mathcal{A}_0)^{-1/2}\hat h\in H^0$ is well defined. By Remark~\ref{rem:V-operator}, the small-stepsize fluctuation constant admits the same Poisson-equation operator representation as the continuous-time asymptotic variance functional. Therefore, \cite[Lemma~2]{rey2016improving} yields
\[
\frac12 V(h) = \left\langle \tilde{g},(I+\mathcal{B}^*\mathcal{B})^{-1}\tilde{g}\right\rangle_{L^2(\mu)}, \qquad
\frac12 V_0(h) = \langle \tilde{g},\tilde{g}\rangle_{L^2(\mu)}.
\]
Since $\mathcal{A}^{\mathrm{as}}$ is anti-self-adjoint in $L^2(\mu)$ and $(-\mathcal{A}_0)^{-1/2}$ is self-adjoint, $\mathcal{B}$ is anti-self-adjoint. Hence $\mathcal{B}^*\mathcal{B}\geq 0$, so that $(I+\mathcal{B}^*\mathcal{B})^{-1}\leq I$ and $V(h)\leq V_0(h)$, with the equality holds if and only if $\mathcal{B}\tilde{g}=0$.

Since $\mathcal{A}_0\phi_0=\hat h$, we have $(-\mathcal{A}_0)^{-1}\hat h=-\phi_0$, so that
$\mathcal{B}\tilde{g}
= (-\mathcal{A}_0)^{-1/2}(\mathcal{A}^{\mathrm{as}}\phi_0)$.
Since $(-\mathcal{A}_0)^{-1/2}$ is injective on $H^0$ and $\mathcal{A}^{\mathrm{as}}\phi_0\in H^0$,
\[
\mathcal{B}\tilde{g}=0 \Longleftrightarrow \mathcal{A}^{\mathrm{as}}\phi_0=0 \Longleftrightarrow
\langle\mathcal{Q}\nabla H,\nabla\phi_0\rangle=0\quad\mu\text{-a.e.}
\]
Condition~\eqref{eq:strict-cond} asserts $\|\mathcal{A}^{\mathrm{as}}\phi_0\|_{L^2(\mu)}^2>0$, which excludes equality. Therefore $V(h)<V_0(h)$.
This completes the proof.
\end{proof}

\begin{remark}\label{rem:strict-vr-scope}
The additional assumptions in Theorem~\ref{thm:strict-vr} are standard operator-theoretic conditions from the continuous-time variance reduction literature. We do not attempt to verify them from Assumptions~\ref{ass:U} and~\ref{ass:diss} in full generality. Theorem~\ref{thm:clt} identifies the small-stepsize fluctuation constant, whereas Theorem~\ref{thm:strict-vr} compares its reversible and non-reversible values under these extra assumptions.
\end{remark}

\subsection{Bounded smooth predictive observables}\label{sec:predictive-observables}

Before turning to the proof of Theorem~\ref{thm:clt}, we record a class of
observables that are directly covered by Theorem~\ref{thm:clt} and are natural
in machine learning applications.

Let $\rho\in C_b^2(\mathbb R)$ and let $a\in\mathbb R^d$ be a fixed feature
vector associated with a test input. For an augmented state
$\mathbf z=(\theta,r)$, define
\[
h_{\rho,a}(\theta,r):=\rho(a^\top\theta).
\]
For a parameter-only state, the same definition is understood without the
$r$ variable.

\begin{proposition}[Bounded smooth predictive observables]\label{prop:predictive-observable}
The observable $h_{\rho,a}$ belongs to $C_b^2(\mathbb R^{d+m})$ in the augmented-state case, and to $C_b^2(\mathbb R^d)$ in the parameter-only case. Consequently, under the
assumptions of Theorem~\ref{thm:clt},
\[
\eta^{-1/2}
\left(
\widehat\mu_\eta^{\rm SG}(h_{\rho,a})-\mu(h_{\rho,a})
\right)
\Rightarrow
\mathcal N(0,V(h_{\rho,a})),
\]
as $\eta\rightarrow 0$, where
$V(h_{\rho,a})
=
\mu\left(\left\|\sigma^\top\nabla\phi_{\rho,a}\right\|^2\right)$,
and $\phi_{\rho,a}$ solves
$h_{\rho,a}-\mu(h_{\rho,a})=\mathcal A\phi_{\rho,a}$.

If, in addition, the assumptions of Theorem~\ref{thm:strict-vr} hold and the
strict condition there is satisfied with $h=h_{\rho,a}$, then the
non-reversible fluctuation constant is strictly smaller than the reversible
baseline.
\end{proposition}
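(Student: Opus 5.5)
The plan is to verify membership in $C_b^2$ directly and then invoke Theorem~\ref{thm:clt} and Theorem~\ref{thm:strict-vr} verbatim; no new probabilistic argument is needed.

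\textbf{Step 1: regularity of $h_{\rho,a}$.} Write $h_{\rho,a}=\rho\circ\ell$, where $\ell(\theta,r):=a^\top\theta$ is a linear map $\R^{d+m}\to\R$; in the parameter-only case $\ell(\theta)=a^\top\theta$. Since $\rho\in C^2(\R)$ and $\ell$ is smooth, the chain rule gives $h_{\rho,a}\in C^2$ with
\[
\nabla h_{\rho,a}(\theta,r)=\rho'(a^\top\theta)\begin{pmatrix}a\\0\end{pmatrix},
\qquad
\nabla^2 h_{\rho,a}(\theta,r)=\rho''(a^\top\theta)\begin{pmatrix}aa^\top&0\\0&0\end{pmatrix}.
\]
This yields the uniform bounds
\[
\|h_{\rho,a}\|_\infty\le\|\rho\|_\infty,\qquad
\|\nabla h_{\rho,a}\|\le\|\rho'\|_\infty\|a\|,\qquad
\|\nabla^2h_{\rho,a}\|_{\mathrm{op}}\le\|\rho''\|_\infty\|a\|^2.
\]
All three are finite because $\rho\in C_b^2(\R)$ and $a$ is fixed. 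Hence $h_{\rho,a}\in C_b^2(\R^{d+m})$, and likewise $h_{\rho,a}\in C_b^2(\R^d)$ in the parameter-only case.

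\textbf{Step 2: the CLT.} Under Assumptions~\ref{ass:U}, \ref{ass:diss} and~\ref{ass:sg}, Theorem~\ref{thm:clt} applies to any test function in $C_b^2(\R^n)$ with $n=d+m$ or $n=d$. Applying it to $h=h_{\rho,a}$ gives the stated convergence, with $V(h_{\rho,a})=\mu(\|\sigma^\top\nabla\phi_{\rho,a}\|^2)$. Here $\phi_{\rho,a}$ is the Poisson solution supplied by that theorem for $\hat h=h_{\rho,a}-\mu(h_{\rho,a})$.

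\textbf{Step 3: strict reduction.} Theorem~\ref{thm:strict-vr} is stated for a general $h$, subject to its operator hypotheses (i)--(ii), the existence of $\phi_0\in C^2$, and condition~\eqref{eq:strict-cond}. Once these are assumed for $h=h_{\rho,a}$, that theorem gives $V(h_{\rho,a})<V_0(h_{\rho,a})$. One should also check that $\hat h_{\rho,a}\in H^0$. This holds because $h_{\rho,a}$ is bounded, so $h_{\rho,a}\in L^2(\mu)$, and subtracting $\mu(h_{\rho,a})$ makes its mean zero.

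There is no genuine obstacle here. The only point needing care is the augmented-state bookkeeping: $h_{\rho,a}$ is constant in $r$, so the derivatives computed in Step 1 are block-structured with zero $r$-blocks. Boundedness over the whole augmented space still holds, even though $h_{\rho,a}$ is not compactly supported.
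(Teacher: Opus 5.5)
Your proposal is correct and follows the paper's own proof: both compute the gradient and Hessian of $h_{\rho,a}$ blockwise by the chain rule (with zero $r$-blocks), bound them using $\rho\in C_b^2(\mathbb R)$ and the fixed vector $a$, and then apply Theorem~\ref{thm:clt} and Theorem~\ref{thm:strict-vr} directly. Your extra check that $h_{\rho,a}-\mu(h_{\rho,a})\in H^0$ is a harmless addition that the paper leaves implicit.
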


\begin{proof}
Since $\rho\in C_b^2(\mathbb R)$, the function $h_{\rho,a}$ is bounded.
Moreover,
\[
\nabla_\theta h_{\rho,a}(\theta,r)
=
\rho'(a^\top\theta)a,
\qquad
\nabla_r h_{\rho,a}(\theta,r)=0,
\]
and
\[
\nabla_{\theta\theta}^2 h_{\rho,a}(\theta,r)
=
\rho''(a^\top\theta)aa^\top,
\qquad
\nabla_{\theta r}^2 h_{\rho,a}(\theta,r)=0,
\qquad
\nabla_{rr}^2 h_{\rho,a}(\theta,r)=0.
\]
Because $\rho'$, $\rho''$ are bounded and $a$ is fixed, all first and second
derivatives of $h_{\rho,a}$ are bounded. Hence
$h_{\rho,a}$ belongs to the asserted $C_b^2$ space. The CLT follows from Theorem~\ref{thm:clt}, and the
variance-reduction statement follows from Theorem~\ref{thm:strict-vr}.
\end{proof}

\begin{remark}[Logistic classification example]
Taking $h_{\rho,a}(\theta,r):=\rho(a^\top\theta)$ with 
$\rho(t)=\frac{1}{1+e^{-t}}$
gives the posterior predictive probability of the positive class in binary
logistic prediction at the test feature vector $a$; see, for example,
\cite{gelman1995bayesian,bishop2006pattern,murphy2012machine}.
\end{remark}

\begin{remark}[Other smooth predictive observables in machine learning]
The $C_b^2$ condition also covers other bounded smooth quantities used in
probabilistic machine learning, provided the underlying predictive probabilities
are sufficiently smooth functions of the parameter.

For a fixed labeled test point $(x,y)$ in a $K$-class problem, let
$p_c(\theta;x)$ denote the predictive probability of class $c$. The Brier-type
observable
$h_{\rm Br}(\theta)
=
\sum_{c=1}^K
\left(p_c(\theta;x)-\mathds 1_{\{y=c\}}\right)^2$
is bounded and belongs to $C_b^2$ whenever the functions
$p_c(\theta;x)$ belong to $C_b^2$ with bounded derivatives. Brier-type and
related proper scoring rules are standard tools for evaluating probabilistic
predictions and calibration; see, for example,
\cite{gneiting2007strictly,lakshminarayanan2017simple}.

Uncertainty scores used in Bayesian active learning can also be treated under
suitable smoothness conditions. For example, the conditional predictive entropy
$h_{\rm ent}(\theta)
=
-\sum_{c=1}^K p_c(\theta;x)\log p_c(\theta;x)$
is bounded by $\log K$. If the predictive probabilities are $C_b^2$ and are
bounded away from zero, then $h_{\rm ent}\in C_b^2$. Posterior averages of
entropy-type quantities appear in information-theoretic active learning criteria
such as BALD; see, for example, \cite{houlsby2011bayesian}.
\end{remark}

%%%%%%%%%%%%%%%%%%%%%%%%%%%%%%%%%%%%%%%%%%%%%%%%%%%%%%%%%%%%%%%%%%%%%%%%%%
\subsection{Proof of Theorem~\ref{thm:clt}}\label{sec:main:proof}
In this section, we present the proof of Theorem~\ref{thm:clt}.
The first step is to check that the limiting full-gradient drift and diffusion coefficients satisfy the structural hypotheses used in the Poisson-equation analysis of \cite{lu2022central}. The following proposition records these verifications.

\begin{proposition}[Verification of the structural hypotheses of \cite{lu2022central}]\label{prop:ltx}
Suppose Assumptions~\ref{ass:U} and~\ref{ass:diss} hold. Let
$g(\mathbf z):=-(\mathcal D+\mathcal Q)\nabla H(\mathbf z)$
and
$\mathcal D=\frac12\sigma\sigma^\top$,
where $\mathcal D$ is symmetric positive definite and $\mathcal Q$ is constant anti-symmetric. Then the limiting full-gradient drift $g$ and the constant diffusion coefficient $\sigma$ satisfy Assumption~2.1 of \cite{lu2022central}. More precisely:
\begin{enumerate}[label=(\roman*)]
\item $\sigma$ is constant and invertible;
\item $g$ is globally Lipschitz, with
$\|g(\mathbf z_1)-g(\mathbf z_2)\|
\le
(\|\mathcal D\|_{\mathrm{op}}+\|\mathcal Q\|_{\mathrm{op}})L
\|\mathbf z_1-\mathbf z_2\|$;
\item $g$ satisfies the one-sided dissipativity condition
$\langle g(\mathbf z_1)-g(\mathbf z_2),\mathbf z_1-\mathbf z_2\rangle
\le
-K_1\|\mathbf z_1-\mathbf z_2\|^2+K_2$;
\item $g$ is twice continuously differentiable and
$\sup_{\mathbf z\in\mathbb R^n}\left\|\nabla^2 g(\mathbf z)\right\|_{\mathrm{op}}
\le
(\|\mathcal D\|_{\mathrm{op}}+\|\mathcal Q\|_{\mathrm{op}})M$.
\end{enumerate}
Consequently, for every $h\in C_b^2(\mathbb R^n)$, the limiting full-gradient Poisson equation
$h-\mu(h)=\mathcal A\phi$
admits a solution $\phi$ satisfying
$\|\nabla^k\phi\|_\infty<\infty$,
$k=0,1,2,3,4$.
In particular, $\phi\in C_b^4(\mathbb R^n)$.
\end{proposition}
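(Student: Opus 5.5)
The plan is to verify items (i)--(iv) directly from the structure $g=-(\mathcal D+\mathcal Q)\nabla H$ and then read off the Poisson-equation regularity from the corresponding result of \cite{lu2022central}.

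For (i), $\mathcal D$ is symmetric positive definite, so we take $\sigma:=\sqrt{2}\,\mathcal D^{1/2}$, the symmetric square root. Then $\frac12\sigma\sigma^\top=\mathcal D$, and $\sigma$ is invertible because $\mathcal D^{1/2}$ has strictly positive eigenvalues. Any other factorization with $\sigma$ invertible would also do, since the generator depends only on $\sigma\sigma^\top$.

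For (ii), we write
\[
g(\mathbf z_1)-g(\mathbf z_2)=-(\mathcal D+\mathcal Q)\int_0^1\nabla^2H(\mathbf z_2+s(\mathbf z_1-\mathbf z_2))(\mathbf z_1-\mathbf z_2)\,ds,
\]
by the fundamental theorem of calculus. Combining this with $\|\nabla^2H\|_{\mathrm{op}}\le L$ from Assumption~\ref{ass:U}(i) and $\|\mathcal D+\mathcal Q\|_{\mathrm{op}}\le\|\mathcal D\|_{\mathrm{op}}+\|\mathcal Q\|_{\mathrm{op}}$ gives the stated Lipschitz constant.

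Item (iii) is exactly Assumption~\ref{ass:diss}, so nothing needs to be proved there.

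For (iv), $H\in C^4$ gives $g\in C^3$. Differentiating twice, with constant matrices passing through the derivative, yields $\nabla^2g(\mathbf z)[u,v]=-(\mathcal D+\mathcal Q)\nabla^3H(\mathbf z)[u,v]$, the convention fixed in the Notations. Since $\|\nabla^3H(\mathbf z)[u,v]\|\le M\|u\|\|v\|$, we obtain the bound $(\|\mathcal D\|_{\mathrm{op}}+\|\mathcal Q\|_{\mathrm{op}})M$.

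For the consequence, we invoke the Poisson-equation regularity result of \cite{lu2022central} established under their Assumption~2.1. It states that for $h\in C_b^2$ the equation $h-\mu(h)=\mathcal A\phi$ has a solution with bounded derivatives up to order four, obtained via $\phi=-\int_0^\infty(P_th-\mu(h))\,dt$ together with exponential ergodicity and derivative estimates for the semigroup $P_t$. Here the invariant measure of the diffusion in \cite{lu2022central} is our $\mu$, by Lemma~\ref{lem:invariance} and uniqueness of the invariant measure, which follows from (i)--(iii).

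The main obstacle is matching hypotheses exactly: confirming that Assumption~2.1 of \cite{lu2022central} asks for no more than (i)--(iv) (for instance, a bounded third derivative of $g$), and that their regularity conclusion really gives order four rather than only order two. If a higher-derivative bound on $g$ were needed, we would strengthen the assumption to bounded $\nabla^4H$. Failing that, we would argue $C^4$ regularity through the semigroup representation, bootstrapping the derivative estimates for $P_th$ with interior Schauder estimates for the uniformly elliptic operator $\mathcal A$, which has constant diffusion and drift with bounded derivatives.
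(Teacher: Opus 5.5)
Your proposal is correct and matches the paper's proof. Items (i)--(iv) are verified exactly as you do: $\sigma=(2\mathcal D)^{1/2}$, the triangle inequality $\|\mathcal D+\mathcal Q\|_{\mathrm{op}}\le\|\mathcal D\|_{\mathrm{op}}+\|\mathcal Q\|_{\mathrm{op}}$ combined with the bounds on $\nabla^2H$ and $\nabla^3H$, (iii) read off from Assumption~\ref{ass:diss}, and the $C_b^4$ regularity of $\phi$ imported from \cite[Lemma~3.1]{lu2022central}. Your remark that the Gibbs measure $\mu$ coincides with the unique invariant measure used there is a detail the paper leaves implicit, and the fallback strategies in your last paragraph are not needed, since the paper rests on the same citation without further argument.
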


\begin{proof}
Since $\mathcal D$ is symmetric positive definite, the constant matrix $\sigma=(2\mathcal D)^{1/2}$ is well defined and invertible. This verifies the constant non-degenerate diffusion condition in Assumption~2.1 of \cite{lu2022central}.
Next, for any $\mathbf z_1,\mathbf z_2\in\mathbb R^n$, Assumption~\ref{ass:U}(i) gives
\[
\begin{aligned}
\|g(\mathbf z_1)-g(\mathbf z_2)\|
&=
\|-(\mathcal D+\mathcal Q)(\nabla H(\mathbf z_1)-\nabla H(\mathbf z_2))\|\\
&\le
\|\mathcal D+\mathcal Q\|_{\mathrm{op}}
\|\nabla H(\mathbf z_1)-\nabla H(\mathbf z_2)\|
\le
(\|\mathcal D\|_{\mathrm{op}}+\|\mathcal Q\|_{\mathrm{op}})L
\|\mathbf z_1-\mathbf z_2\|.
\end{aligned}
\]
Thus, $g$ is globally Lipschitz. The one-sided dissipativity condition required in Assumption~2.1 of \cite{lu2022central} is exactly Assumption~\ref{ass:diss}. Finally, since
$\nabla g(\mathbf z)=-(\mathcal D+\mathcal Q)\nabla^2H(\mathbf z)$
and
$\nabla^2g(\mathbf z)[u,v]=-(\mathcal D+\mathcal Q)\nabla^3H(\mathbf z)[u,v]$,
Assumption~\ref{ass:U}(i) implies
$\sup_{\mathbf z\in\mathbb R^n}\left\|\nabla^2g(\mathbf z)\right\|_{\mathrm{op}}
\le
(\|\mathcal D\|_{\mathrm{op}}+\|\mathcal Q\|_{\mathrm{op}})M$.
Therefore, the limiting full-gradient coefficients satisfy Assumption~2.1 of \cite{lu2022central}. The asserted Poisson regularity follows from \cite[Lemma~3.1]{lu2022central}.
\end{proof}

\begin{remark}[Use of the \cite{lu2022central} framework]\label{rem:ltx-use}
Proposition~\ref{prop:ltx} is used only to import the Poisson-equation regularity from \cite[Lemma~3.1]{lu2022central}. We do not apply the central limit theorem of \cite{lu2022central} directly, because the stochastic gradient chain
$\mathbf z_{k+1}^{\eta}
=
\mathbf z_k^{\eta}
+\eta g(\mathbf z_k^{\eta})
+\sqrt\eta\sigma\xi_{k+1}
+\eta\varepsilon_{k+1}$
is not the exact Euler--Maruyama scheme considered there. The additional term $\eta\varepsilon_{k+1}$ is handled separately through Assumption~\ref{ass:sg} and the martingale estimates below.
\end{remark}

First, we obtain the following $p$-th moment bound for SG-GNLMC that are uniform in time.

\begin{lemma}[Moment bound for SG-GNLMC]\label{lem:sg-moment}
Suppose Assumptions~\ref{ass:U}, \ref{ass:diss}, and~\ref{ass:sg} hold. Then there exist $\eta_0>0$ and $C>0$ such that, for every $\eta\in(0,\eta_0)$,
\[
\sup_{0\le k\le [\eta^{-2}]}
\mathbb E\left[1+\|\mathbf z_k^\eta\|^p\right]
\le C.
\]
In particular, the same bound holds for every lower moment $q\in[1,p]$.
\end{lemma}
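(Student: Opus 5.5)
We prove a one-step Lyapunov drift inequality for $V(\mathbf z)=\|\mathbf z\|^p$ of the form
\[
\mathbb E\left[\|\mathbf z_{k+1}^\eta\|^p\mid\mathcal F_k\right]\le (1-c\eta)\|\mathbf z_k^\eta\|^p+C\eta,
\]
for all $\eta\in(0,\eta_0)$, and then iterate it. Iterating gives $\mathbb E\|\mathbf z_k^\eta\|^p\le (1-c\eta)^k\mathbb E\|\mathbf z_0^\eta\|^p+C/c$, and Assumption~\ref{ass:sg} bounds the initial moment uniformly in $\eta$. This yields the claim uniformly in $k$, not only for $k\le[\eta^{-2}]$. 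Lower moments $q\in[1,p]$ then follow from Jensen's inequality or $\|x\|^q\le 1+\|x\|^p$.

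\emph{Step 1 (second-moment drift).} Write $\mathbf z_{k+1}=\mathbf z_k+\eta g(\mathbf z_k)+\eta\varepsilon_{k+1}+\sqrt\eta\sigma\xi_{k+1}$. Apply Assumption~\ref{ass:diss} with $\mathbf z_2=0$. This gives $\langle g(\mathbf z),\mathbf z\rangle\le -K_1\|\mathbf z\|^2+K_2+\|g(0)\|\|\mathbf z\|\le -\tfrac{K_1}{2}\|\mathbf z\|^2+C$. The linear growth $\|g(\mathbf z)\|\le \|g(0)\|+(\|\mathcal D\|_{\mathrm{op}}+\|\mathcal Q\|_{\mathrm{op}})L\|\mathbf z\|$ comes from Proposition~\ref{prop:ltx}(ii). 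Expand
\[
\|\mathbf z_{k+1}\|^2=\|\mathbf z_k\|^2+2\langle \mathbf z_k,\Delta_k\rangle+\|\Delta_k\|^2,
\qquad \Delta_k:=\eta g(\mathbf z_k)+\eta\varepsilon_{k+1}+\sqrt\eta\sigma\xi_{k+1}.
\]
Conditionally on $\mathcal F_k$, the terms $\varepsilon_{k+1}$ and $\xi_{k+1}$ have mean zero by Assumption~\ref{ass:sg}(i), and they are conditionally independent by (ii). The conditional mean of the cross term is therefore $2\eta\langle\mathbf z_k,g(\mathbf z_k)\rangle$. Moreover, $\mathbb E[\|\Delta_k\|^2\mid\mathcal F_k]\le C\eta(1+\eta\|\mathbf z_k\|^2)$, using (iii) with $p\ge2$ for the $\eta^2\|\varepsilon\|^2$ part. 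Hence $\mathbb E[\|\mathbf z_{k+1}\|^2\mid\mathcal F_k]\le(1-K_1\eta+C\eta^2)\|\mathbf z_k\|^2+C\eta$.

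\emph{Step 2 ($p$-th moment).} Set $a=\|\mathbf z_k\|^2$ and $X=2\langle\mathbf z_k,\Delta_k\rangle+\|\Delta_k\|^2$. Expand $(a+X)^{p/2}$ by Taylor's formula with $p/2>2$:
\[
(a+X)^{p/2}\le a^{p/2}+\tfrac p2 a^{p/2-1}X+C_p\left(a^{p/2-2}X^2+|X|^{p/2}\right).
\]
The first-order term reproduces the Step 1 drift multiplied by $a^{p/2-1}$. This gives $-\tfrac{pK_1}{2}\eta\, a^{p/2}+C\eta(a^{p/2-1}+\eta a^{p/2})$.

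For the second-order term, note that $\mathbb E[X^2\mid\mathcal F_k]\le C\eta\, a(1+\eta a)+C\eta^2$. The dominant piece is $4\eta\langle\mathbf z_k,\sigma\xi\rangle^2$. The $\varepsilon$ contribution is $\eta^2\mathbb E\langle\mathbf z_k,\varepsilon\rangle^2\le C\eta^2 a(1+a)$, which is of higher order. Multiplied by $a^{p/2-2}$, this gives $C\eta a^{p/2-1}+C\eta^2a^{p/2}$.

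For the top term, $\mathbb E[|X|^{p/2}\mid\mathcal F_k]$ is controlled using Assumption~\ref{ass:sg}(iii) at exactly order $p$: $\mathbb E[\|\eta\varepsilon\|^{p}\mid\mathcal F_k]\le C\eta^p(1+a^{p/2})$. The cross term $|\langle\mathbf z_k,\Delta\rangle|^{p/2}\le a^{p/4}\|\Delta\|^{p/2}$ needs only the $p/2$-th moment of $\Delta$. Altogether this gives $C\eta^{p/4}(1+a^{p/2})\cdot$(small) plus lower-order terms, and since $p>4$, we have $\eta^{p/4}=o(\eta)$.

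Finally, absorb the lower-order terms using Young's inequality, $a^{p/2-1}\le \epsilon a^{p/2}+C_\epsilon$. Choosing $\eta_0$ small then gives the drift inequality with $c=pK_1/4$.

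\emph{Main obstacle.} The delicate step is the bookkeeping in Step 2. We must ensure every stochastic gradient contribution carries at least an extra factor $\eta^{1+\delta}$ relative to $a^{p/2}$, so that it can be absorbed into the dissipation $-c\eta a^{p/2}$. The moment bound (iii) is available only up to order $p$, so the expansion must never require a moment of $\varepsilon$ higher than $p$. I would handle this by expanding in $a$ and $X$ as above, rather than expanding $\|\mathbf z_k+\Delta_k\|^p$ binomially in $\varepsilon$.
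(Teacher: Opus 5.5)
Your proposal is correct and follows essentially the paper's route: a one-step drift inequality for $\|\mathbf z\|^p$ from a second-order Taylor bound, absorption of the lower-order terms into $-c\eta\|\mathbf z_k^\eta\|^p$ for small $\eta$, and iteration using the $\eta$-uniform initial moment. The only differences are minor: you expand $t\mapsto t^{p/2}$ around $\|\mathbf z_k^\eta\|^2$, whereas the paper expands $\|x+y\|^p$ directly; and your bookkeeping is more careful, since you keep the $\eta\|\mathbf z_k^\eta\|^{p-2}$ and $\eta^2\|\mathbf z_k^\eta\|^p$ contributions separate and absorb them by Young's inequality, rather than lumping them into a single $C\eta(1+\|\mathbf z_k^\eta\|^p)$ term, which could not be absorbed as written.
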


\begin{proof}
We first record two elementary consequences of Assumption~\ref{ass:diss} and the global Lipschitz property of $g$ verified in Proposition~\ref{prop:ltx}. Taking $\mathbf z_2=0$ in Assumption~\ref{ass:diss} and using Young's inequality gives constants $c_0,C_0>0$ such that
\[
\langle \mathbf z,g(\mathbf z)\rangle
\le -c_0\|\mathbf z\|^2+C_0,
\qquad \mathbf z\in\mathbb R^n.
\]
Moreover, since $g$ is globally Lipschitz,
$\|g(\mathbf z)\|\le C_0(1+\|\mathbf z\|)$.
Using
\[
\mathbf z_{k+1}^\eta
=
\mathbf z_k^\eta
+\eta g(\mathbf z_k^\eta)
+\sqrt{\eta}\sigma\xi_{k+1}
+\eta\varepsilon_{k+1},
\]
the conditional unbiasedness of $\varepsilon_{k+1}$, the conditional independence of $\varepsilon_{k+1}$ and $\xi_{k+1}$, and the centering of $\xi_{k+1}$ imply
\[
\mathbb E\left[
\|\mathbf z_{k+1}^\eta\|^2
\mid\mathcal F_k
\right]
\le
\|\mathbf z_k^\eta\|^2
+2\eta\langle \mathbf z_k^\eta,g(\mathbf z_k^\eta)\rangle
+C\eta(1+\|\mathbf z_k^\eta\|^2)
+C\eta^2\mathbb E\left[\|\varepsilon_{k+1}\|^2\mid\mathcal F_k\right].
\]
Using Assumption~\ref{ass:sg} and choosing $\eta_0$ sufficiently small, the last display yields
\[
\mathbb E\left[
1+\|\mathbf z_{k+1}^\eta\|^2
\mid\mathcal F_k
\right]
\le
(1-c\eta)\left(1+\|\mathbf z_k^\eta\|^2\right)+C\eta.
\]
We use the following Taylor-type inequality for the function
$V(x)=\|x\|^p$, valid for every $p>2$:
\[
\|x+y\|^p
\le
\|x\|^p
+p\|x\|^{p-2}\langle x,y\rangle
+C_p\left(\|x\|^{p-2}\|y\|^2+\|y\|^p\right).
\]
This follows from Taylor's formula and the bound
$\|\nabla^2 V(z)\|_{\mathrm{op}}\le C_p\|z\|^{p-2}$.
Apply it conditionally with $x=\mathbf z_k^\eta$ and
\[
y=\eta g(\mathbf z_k^\eta)+\sqrt\eta\sigma\xi_{k+1}+\eta\varepsilon_{k+1}.
\]
The first-order term gives the negative drift
\[
p\eta\|\mathbf z_k^\eta\|^{p-2}
\langle \mathbf z_k^\eta,g(\mathbf z_k^\eta)\rangle
\le
-c\eta\|\mathbf z_k^\eta\|^p+C\eta.
\]
All centered linear terms involving $\xi_{k+1}$ or $\varepsilon_{k+1}$ vanish conditionally. The remaining terms are bounded by
\[
C\eta\left(1+\|\mathbf z_k^\eta\|^p\right)
+C\eta^{p/2}\mathbb E\|\xi_{k+1}\|^p
+C\eta^p\mathbb E\left[\|\varepsilon_{k+1}\|^p\mid\mathcal F_k\right],
\]
which is at most $C\eta(1+\|\mathbf z_k^\eta\|^p)$ for $\eta\in(0,1)$. Absorbing the lower-order contribution into the negative drift by choosing $\eta_0$ small enough gives
\[
\mathbb E\left[
1+\|\mathbf z_{k+1}^\eta\|^p
\mid\mathcal F_k
\right]
\le
(1-c\eta)\left(1+\|\mathbf z_k^\eta\|^p\right)+C\eta.
\]
Iterating the last inequality and using Assumption~\ref{ass:sg} for the initial condition yields the result. The lower-moment bound follows from Jensen's inequality.
\end{proof}

The martingale central limit theorem to be used below requires convergence of the conditional quadratic variation. We therefore first establish a Poisson-equation law of large numbers (LLN) along the stochastic gradient chain.
Before proceeding, we first introduce notations for later use.
Recall that the stochastic gradient chain can be written as
$$
\mathbf{z}_{k+1}^\eta=\mathbf{z}_k^\eta+\eta g\left(\mathbf{z}_k^\eta\right)+\sqrt{\eta} \sigma \xi_{k+1}+\eta \varepsilon_{k+1} .
$$
We define
$$
a_k:=\eta g\left(\mathbf{z}_k^\eta\right), \quad b_{k+1}:=\sqrt{\eta} \sigma \xi_{k+1}, \quad e_{k+1}:=\eta \varepsilon_{k+1} .
$$
Then the exact-gradient Euler increment and the stochastic gradient Euler increment are 
$$
\Delta_k^0:=a_k+b_{k+1}, \quad \Delta_k^\eta:=a_k+b_{k+1}+e_{k+1}=\Delta_k^0+e_{k+1} .
$$
For a test function $\psi\in C_b^4(\mathbb R^n)$, define the third-order
Taylor remainders
\[
T_{k,3}^0
:=
\frac12\int_0^1(1-t)^2
\nabla^3\psi(\mathbf z_k^\eta+t\Delta_k^0)
[\Delta_k^0,\Delta_k^0,\Delta_k^0]\,dt,
\]
and
\[
T_{k,3}^\eta
:=
\frac12\int_0^1(1-t)^2
\nabla^3\psi(\mathbf z_k^\eta+t\Delta_k^\eta)
[\Delta_k^\eta,\Delta_k^\eta,\Delta_k^\eta]\,dt.
\]
We decompose the Taylor remainder into an exact-gradient part and a stochastic gradient part. The exact-gradient Taylor remainder is defined by
\begin{align}\label{exact:gradient:remainder}
\rho_{k,\eta}^0
:=
\frac{\eta}{2}
\operatorname{Tr}\left(\sigma\sigma^\top\nabla^2\psi(\mathbf z_k^\eta)\right)
-\frac12
(\Delta_k^0)^\top\nabla^2\psi(\mathbf z_k^\eta)\Delta_k^0
-T_{k,3}^0 .
\end{align}
This is the same remainder as in the exact-gradient Euler scheme in~\cite{lu2022central}, but evaluated along the stochastic gradient path $\mathbf{z}_k^\eta$. Moreover, the stochastic gradient Taylor remainder is defined by
\begin{align}\label{stochastic:gradient:remainder}
\rho_{k,\eta}^{\varepsilon}
:=
-\frac12
\left[
(\Delta_k^\eta)^\top\nabla^2\psi(\mathbf z_k^\eta)\Delta_k^\eta
-
(\Delta_k^0)^\top\nabla^2\psi(\mathbf z_k^\eta)\Delta_k^0
\right]
-\left(T_{k,3}^\eta-T_{k,3}^0\right).
\end{align}

\begin{lemma}[Poisson LLN for the SG chain]\label{lem:sg-poisson-lln}
Suppose Assumptions~\ref{ass:U}, \ref{ass:diss}, and~\ref{ass:sg} hold. Let $f\in C_b^2(\mathbb R^n)$, and suppose that the Poisson equation
$f-\mu(f)=\mathcal A\psi$
admits a solution $\psi\in C_b^4(\mathbb R^n)$. Then, as $\eta\rightarrow 0$,
\[
\eta^2\sum_{k=0}^{[\eta^{-2}]-1}f(\mathbf z_k^\eta)
\to
\mu(f)
\qquad\text{in probability}.
\]
\end{lemma}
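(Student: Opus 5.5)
The plan is a Poisson-equation telescoping argument along the stochastic gradient chain. Since $\psi\in C_b^4$ solves $\mathcal A\psi=f-\mu(f)$, I would Taylor-expand $\psi(\mathbf z_{k+1}^\eta)=\psi(\mathbf z_k^\eta+\Delta_k^\eta)$ to third order around $\mathbf z_k^\eta$. The first-order term $\langle\nabla\psi(\mathbf z_k^\eta),\Delta_k^\eta\rangle$ equals $\eta\langle\nabla\psi,g\rangle$ plus martingale increments $\langle\nabla\psi,b_{k+1}+e_{k+1}\rangle$. The conditional mean of the second-order term gives $\frac{\eta}{2}\operatorname{Tr}(\sigma\sigma^\top\nabla^2\psi)=\eta\operatorname{Tr}(\mathcal D\nabla^2\psi)$. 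Using the definitions \eqref{exact:gradient:remainder} and \eqref{stochastic:gradient:remainder}, this yields the exact identity
\[
\psi(\mathbf z_{k+1}^\eta)-\psi(\mathbf z_k^\eta)
=\eta\,\mathcal A\psi(\mathbf z_k^\eta)+\langle\nabla\psi(\mathbf z_k^\eta),b_{k+1}+e_{k+1}\rangle-\rho_{k,\eta}^0-\rho_{k,\eta}^\varepsilon .
\]
Summing over $k=0,\dots,N-1$ with $N=[\eta^{-2}]$ and multiplying by $\eta$ gives
\[
\eta^2\sum_{k<N}\bigl(f(\mathbf z_k^\eta)-\mu(f)\bigr)
=\eta\bigl(\psi(\mathbf z_N^\eta)-\psi(\mathbf z_0^\eta)\bigr)-\eta\sum_{k<N}\langle\nabla\psi,b_{k+1}+e_{k+1}\rangle+\eta\sum_{k<N}(\rho_{k,\eta}^0+\rho_{k,\eta}^\varepsilon).
\]
It then suffices to show that each term on the right tends to zero in $L^1$. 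Since $\eta^2N\to1$, this gives the claim.

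\textbf{Boundary term.} It is bounded by $2\eta\|\psi\|_\infty\to0$.

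\textbf{Martingale term.} Its increments are conditionally centered by Assumption~\ref{ass:sg}(i) and the centering of $\xi_{k+1}$. Orthogonality gives the second moment
\[
\eta^2\sum_{k<N}\mathbb E\bigl[\|\nabla\psi\|_\infty^2\bigl(\eta\|\sigma\xi_{k+1}\|^2+\eta^2\|\varepsilon_{k+1}\|^2\bigr)\bigr]\le C\eta^2 N\eta\,(1+\eta)\to0.
\]
Here I use $\mathbb E\|\varepsilon_{k+1}\|^2\le C(1+\mathbb E\|\mathbf z_k^\eta\|^2)$ together with Lemma~\ref{lem:sg-moment}, and Hölder's inequality to pass from the $p$-th to the second moment.

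\textbf{Remainder terms.} This is the main obstacle. A crude bound $|\rho|\le C\eta^{3/2}(1+\|\mathbf z_k^\eta\|^3+\|\xi_{k+1}\|^3+\eta^{3/2}\|\varepsilon\|^3)$ for each term only gives $\eta\cdot N\cdot\eta^{3/2}=\eta^{1/2}\to0$. That still suffices for the LLN, so no cancellation is needed here, unlike in the CLT. I would check it piece by piece.

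For $\rho_{k,\eta}^0$, expand the quadratic term. The $b_{k+1}b_{k+1}^\top$ part minus $\eta\,\sigma\sigma^\top$ is $O(\eta(1+\|\xi\|^2))$ and mean zero. I would therefore put this centered part into a second martingale, whose variance is of order $\eta^2N\eta^2\to0$. The remaining cross and $a_ka_k^\top$ parts are $O(\eta^{3/2}(1+\|\mathbf z\|^2)(1+\|\xi\|))$. The term $T^0_{k,3}$ is bounded by $\|\nabla^3\psi\|_\infty\|\Delta_k^0\|^3=O(\eta^{3/2}(1+\|\mathbf z\|^3+\|\xi\|^3))$.

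For $\rho^\varepsilon_{k,\eta}$, the difference of quadratic forms is $2(\Delta_k^0)^\top\nabla^2\psi\, e_{k+1}+e_{k+1}^\top\nabla^2\psi\, e_{k+1}$. The first piece is $O(\eta^{3/2}\|\xi\|\|\varepsilon\|+\eta^2(1+\|\mathbf z\|)\|\varepsilon\|)$ and the second is $O(\eta^2\|\varepsilon\|^2)$. The difference of cubic remainders is bounded by $C(\|\Delta^0\|^3+\|e\|^3)=O(\eta^{3/2})$ times polynomial moments.

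All these moments are finite uniformly in $k$ by Lemma~\ref{lem:sg-moment}, because $p>4>3$ and the conditional moment bound \eqref{conditional:moment:bound} applies. Hence $\eta\sum_k\mathbb E|\rho_{k,\eta}^0+\rho_{k,\eta}^\varepsilon - \text{centered part}|\le C\eta^{1/2}\to0$. Collecting the estimates yields convergence in $L^1$, and hence in probability.
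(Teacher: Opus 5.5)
Your proposal is correct and follows essentially the paper's route: the Poisson telescoping identity with outer factor $\eta$, $L^2$ bounds of order $\eta$ and $\eta^2$ for the Brownian and stochastic-gradient martingales, and Taylor remainders that vanish because the LLN scaling gains a factor $\eta^{1/2}$ over the CLT scaling. The only differences are these. The paper cites Lemmas~\ref{lem:sg-negligible} and~\ref{lem:exact-remainder-sg-path} for the remainders, whereas you redo the monomial bounds directly. Also, your remark that ``no cancellation is needed'' overstates things: the piece $\frac12 b_{k+1}^\top\nabla^2\psi\, b_{k+1}-\frac{\eta}{2}\operatorname{Tr}(\sigma\sigma^\top\nabla^2\psi)$ of $\rho^0_{k,\eta}$ is only $O(\eta)$ per step, so a crude $L^1$ bound gives only $O(1)$ after summation. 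You nonetheless handle it correctly as a conditionally centered martingale with variance $O(\eta^2)$.
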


\begin{proof}
It suffices to prove that as $\eta\rightarrow 0$,
$\eta^2\sum_{k=0}^{[\eta^{-2}]-1}(f(\mathbf z_k^\eta)-\mu(f))
\to0$
in probability.

Using $f-\mu(f)=\mathcal A\psi$ and the telescoping identity,
\[
\eta^2\sum_{k=0}^{[\eta^{-2}]-1}\mathcal A\psi(\mathbf z_k^\eta)
=
\eta\left(\psi(\mathbf z_{[\eta^{-2}]}^\eta)-\psi(\mathbf z_0^\eta)\right)
+\eta\sum_{k=0}^{[\eta^{-2}]-1}
\left[
\eta\mathcal A\psi(\mathbf z_k^\eta)
-\left(\psi(\mathbf z_{k+1}^\eta)-\psi(\mathbf z_k^\eta)\right)
\right].
\]
The boundary term converges to zero in $L^1$ because $\psi$ is bounded. A Taylor expansion gives
\[
\eta\mathcal A\psi(\mathbf z_k^\eta)
-\left(\psi(\mathbf z_{k+1}^\eta)-\psi(\mathbf z_k^\eta)\right)
=
-\sqrt\eta\left\langle\nabla\psi(\mathbf z_k^\eta),\sigma\xi_{k+1}\right\rangle
-\eta\left\langle\nabla\psi(\mathbf z_k^\eta),\varepsilon_{k+1}\right\rangle
+\widetilde\rho_{k,\eta},
\]
where the Taylor remainder $\widetilde\rho_{k,\eta} = \rho_{k,\eta}^0 + \rho_{k,\eta}^{\varepsilon}$, with $\rho_{k,\eta}^{0}$ given in \eqref{exact:gradient:remainder} and $\rho_{k,\eta}^{\varepsilon}$ given in \eqref{stochastic:gradient:remainder}, is decomposed into the stochastic gradient terms controlled by Lemma~\ref{lem:sg-negligible} and the exact-gradient terms controlled by Lemma~\ref{lem:exact-remainder-sg-path}. 
Multiplying the first term on the right-hand side of the last display by the
outer factor $\eta$ in the telescoping identity and summing over $k$, we obtain
the Brownian martingale contribution
\[
M_\eta^B
:=
-\eta^{3/2}
\sum_{k=0}^{[\eta^{-2}]-1}
\left\langle
\nabla\psi(\mathbf z_k^\eta),\sigma\xi_{k+1}
\right\rangle .
\]
Since $\nabla\psi$ is bounded and $\xi_{k+1}$ is centered and independent of
$\mathcal F_k$, this is a martingale-difference sum. Hence
\[
\mathbb E|M_\eta^B|^2
\le
C\eta^3
\sum_{k=0}^{[\eta^{-2}]-1}1
\le
C\eta\to0.
\]
Similarly, multiplying the second term on the right-hand side of the last
display by the outer factor $\eta$ and summing over $k$, we obtain the
stochastic gradient martingale contribution
\[
M_\eta^{\rm SG}
:=
-\eta^2
\sum_{k=0}^{[\eta^{-2}]-1}
\left\langle
\nabla\psi(\mathbf z_k^\eta),\varepsilon_{k+1}
\right\rangle .
\]
Since $\nabla\psi$ is bounded and
$\mathbb E[\varepsilon_{k+1}\mid\mathcal F_k]=0$, this is also a
martingale-difference sum. Therefore, by Assumption~\ref{ass:sg} and
Lemma~\ref{lem:sg-moment},
\[
\mathbb E|M_\eta^{\rm SG}|^2
\le
C\eta^4
\sum_{k=0}^{[\eta^{-2}]-1}
\mathbb E\left[1+\|\mathbf z_k^\eta\|^p\right]
\le
C\eta^2\to0.
\]
The Taylor terms containing at least one factor $\eta\varepsilon_{k+1}$ are controlled by Lemma~\ref{lem:sg-negligible}; in the present LLN estimate the outer factor is $\eta$ rather than $\sqrt\eta$, so the bounds there are even smaller. The remaining exact-gradient Taylor terms are controlled by the same estimates as in Lemma~\ref{lem:exact-remainder-sg-path}. Therefore the whole centered average converges to zero in probability.
\end{proof}

The next lemma isolates the Taylor terms that are new relative to the exact Euler--Maruyama decomposition, namely those containing at least one stochastic gradient increment $\eta\varepsilon_{k+1}$, and shows that they are negligible under the CLT normalization.

\begin{lemma}[Negligibility of the stochastic gradient Taylor terms]\label{lem:sg-negligible}
Suppose Assumptions~\ref{ass:U}, \ref{ass:diss}, and~\ref{ass:sg} hold, and let $\psi\in C_b^4(\mathbb R^n)$. Write
\[
a_k:=\eta g(\mathbf z_k^\eta),
\qquad
b_{k+1}:=\sqrt\eta\sigma\xi_{k+1},
\qquad
e_{k+1}:=\eta\varepsilon_{k+1},
\qquad
\Delta_k^\eta:=a_k+b_{k+1}+e_{k+1}.
\]
Then, under the CLT normalization, all Taylor terms in
$\psi(\mathbf z_{k+1}^\eta)-\psi(\mathbf z_k^\eta)$
that contain at least one factor $e_{k+1}$ converge to zero in probability. More explicitly,
\[
\eta^{3/2}
\sum_{k=0}^{[\eta^{-2}]-1}
\left\langle
\nabla\psi(\mathbf z_k^\eta),\varepsilon_{k+1}
\right\rangle
\to0
\qquad\text{in $L^2$ as $\eta\rightarrow 0$},
\]
and the quadratic and cubic stochastic gradient Taylor contributions converge to zero in probability.
\end{lemma}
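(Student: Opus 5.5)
We treat the linear stochastic gradient martingale term and the higher-order Taylor terms containing $e_{k+1}$ separately, all under the CLT normalization. In that normalization the full centered sum is multiplied by $\eta^{-1/2}\cdot\eta^{2}\cdot\eta^{-1}=\eta^{1/2}$ in front of the per-step Taylor terms. Equivalently, the telescoping identity of Lemma~\ref{lem:sg-poisson-lln} carries an outer factor $\sqrt\eta$ instead of $\eta$. The only tools needed are the uniform moment bound of Lemma~\ref{lem:sg-moment}, the boundedness of $\nabla^j\psi$ for $j\le 4$, and the conditional properties (i)--(iii) of Assumption~\ref{ass:sg}.

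\textbf{Step 1 (linear term).} Set $X_{k+1}:=\langle\nabla\psi(\mathbf z_k^\eta),\varepsilon_{k+1}\rangle$. Since $\nabla\psi(\mathbf z_k^\eta)$ is $\mathcal F_k$-measurable and $\mathbb E[\varepsilon_{k+1}\mid\mathcal F_k]=0$, the $X_{k+1}$ are martingale differences, so cross terms vanish. By Jensen with $p>4\ge2$, \eqref{conditional:moment:bound} and Lemma~\ref{lem:sg-moment},
\[
\mathbb E\Big|\eta^{3/2}\sum_{k=0}^{[\eta^{-2}]-1}X_{k+1}\Big|^2\le \eta^3\|\nabla\psi\|_\infty^2\sum_{k}\mathbb E\|\varepsilon_{k+1}\|^2\le C\eta^3\eta^{-2}=C\eta\to0 .
\]

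\textbf{Step 2 (quadratic terms).} Expanding $(\Delta^\eta_k)^\top\nabla^2\psi\,\Delta^\eta_k-(\Delta^0_k)^\top\nabla^2\psi\,\Delta^0_k$ leaves three pieces: $2\,a_k^\top\nabla^2\psi\, e_{k+1}$, $2\,b_{k+1}^\top\nabla^2\psi\, e_{k+1}$ and $e_{k+1}^\top\nabla^2\psi\, e_{k+1}$. Their sizes are of order $\eta^2(1+\|\mathbf z_k\|)\|\varepsilon_{k+1}\|$, $\eta^{3/2}\|\xi_{k+1}\|\|\varepsilon_{k+1}\|$ and $\eta^2\|\varepsilon_{k+1}\|^2$. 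The crude $L^1$ bound with prefactor $\eta^{1/2}$ and $\eta^{-2}$ summands gives $O(\eta^{1/2})$ for the first and third pieces, which is enough.

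The middle piece is the delicate one, because the crude bound only gives $O(1)$. For it I use that it is a martingale difference. By conditional independence (Assumption~\ref{ass:sg}(ii)),
\[
\mathbb E[\xi_{k+1}^\top \sigma^\top\nabla^2\psi\,\varepsilon_{k+1}\mid\mathcal F_k]=\mathbb E[\xi_{k+1}]^\top\sigma^\top\nabla^2\psi\,\mathbb E[\varepsilon_{k+1}\mid\mathcal F_k]=0 .
\]
The $L^2$ estimate, using Cauchy--Schwarz with fourth moments and $p>4$, then gives $\eta\cdot\eta^{3}\cdot\eta^{-2}=\eta^2\to0$.

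\textbf{Step 3 (cubic terms).} This step is the main obstacle. For $T^\eta_{k,3}-T^0_{k,3}$ I write the difference of the two integrals as a sum of two parts. The first part keeps the base point fixed and expands $\nabla^3\psi[\Delta^\eta,\Delta^\eta,\Delta^\eta]-\nabla^3\psi[\Delta^0,\Delta^0,\Delta^0]$ multilinearly; it is bounded by $C(\|\Delta^0\|^2\|e\|+\|e\|^3)$. The second part changes the base point from $\mathbf z+t\Delta^0$ to $\mathbf z+t\Delta^\eta$. Since $\nabla^4\psi$ is bounded, $\nabla^3\psi$ is Lipschitz, so this part is bounded by $C\|e\|\,\|\Delta^0\|^3$.

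Using $\mathbb E\|\Delta_k^0\|^q\le C\eta^{q/2}$ (from Lemma~\ref{lem:sg-moment}) and Hölder with $p>4$, the per-step expectation is at most $C\eta\cdot\eta=C\eta^2$. After the prefactor $\eta^{1/2}$ and the $\eta^{-2}$ summands, the total is $O(\eta^{1/2})\to0$ in $L^1$.

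The moment exponents require care: the worst Hölder pairing is $\|\xi\|^2\|\varepsilon\|$, which needs $\mathbb E\|\varepsilon\|^{q}$ for some $q\le p$. If the crude bound falls short anywhere, I would fall back to the martingale centering of Step 2, subtracting conditional expectations, which vanish for every term that is odd in $\xi$ or linear in $\varepsilon$. Combining Steps 1--3 with Markov's inequality gives convergence in probability of all stochastic gradient Taylor contributions.
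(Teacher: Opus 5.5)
Your proposal is correct and follows essentially the same route as the paper. Like the paper, you use $L^2$ martingale bounds for the linear term $\langle\nabla\psi(\mathbf z_k^\eta),\varepsilon_{k+1}\rangle$ and for the mixed $(b,e)$ quadratic term, the latter via conditional independence. You also use crude $L^1$ monomial bounds of order $\eta^{1/2}$ for the $(a,e)$, $(e,e)$ and cubic terms.

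The only minor difference is in the cubic step. You treat it literally as $T_{k,3}^\eta-T_{k,3}^0$, which produces an extra base-point term $C\|e_{k+1}\|\|\Delta_k^0\|^3$ controlled by $\|\nabla^4\psi\|_\infty$. The paper instead evaluates all cubic terms at $\mathbf z_k^\eta+t\Delta_k^\eta$, consistently with Lemma~\ref{lem:exact-remainder-sg-path}, and bounds only the six multilinear monomials containing $e_{k+1}$. Both versions give $O(\eta^{1/2})$.
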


\begin{proof}
Throughout the proof, $C$ denotes a finite constant independent of $\eta$ and $k$. Since $\psi\in C_b^4$, all derivatives of $\psi$ up to order four are bounded. By the conditional $L^p$-monotonicity inequality, namely, for any random vector $X$,
$\left(\mathbb E[\|X\|^q\mid\mathcal F]\right)^{1/q}
\le
\left(\mathbb E[\|X\|^p\mid\mathcal F]\right)^{1/p}$,
$1\le q\le p$,
and by Assumption~\ref{ass:sg}, for every $q\in[1,p]$,
\[
\mathbb E\left[\|\varepsilon_{k+1}\|^q\mid\mathcal F_k\right]
\le
\left(
\mathbb E\left[\|\varepsilon_{k+1}\|^p\mid\mathcal F_k\right]
\right)^{q/p}
\le
C\left(1+\|\mathbf z_k^\eta\|^q\right).
\]
Therefore, using $e_{k+1}=\eta\varepsilon_{k+1}$ and the Gaussian moment bounds,
\begin{equation}\label{eq:basic-increment-moments}
\mathbb E\left[\|e_{k+1}\|^q\mid\mathcal F_k\right]
\le C\eta^q\left(1+\|\mathbf z_k^\eta\|^q\right),
\qquad
\mathbb E\left[\|b_{k+1}\|^q\mid\mathcal F_k\right]
\le C\eta^{q/2}.
\end{equation}
Moreover,
\begin{equation}\label{eq:a-increment-bound}
\|a_k\|
\le C\eta\left(1+\|\mathbf z_k^\eta\|\right).
\end{equation}
Consequently, for all nonnegative integers $r,s,t$ with $r+s+t\le4$ and $t\ge1$,
\begin{equation}\label{eq:generic-monomial-bound}
\mathbb E\left[
\|a_k\|^r\|b_{k+1}\|^s\|e_{k+1}\|^t
\right]
\le
C\eta^{r+s/2+t}.
\end{equation}
Indeed, after using \eqref{eq:basic-increment-moments} and \eqref{eq:a-increment-bound}, the polynomial factor in $\mathbf z_k^\eta$ has degree at most $r+t\le4<p$, and is therefore uniformly controlled by Lemma~\ref{lem:sg-moment}.

\smallskip
\noindent\textbf{Step 1: the linear stochastic gradient term.}
Since $\nabla\psi$ is bounded and
$
\mathbb E[\varepsilon_{k+1}\mid\mathcal F_k]=0
$, the variables
$
\langle\nabla\psi(\mathbf z_k^\eta),\varepsilon_{k+1}\rangle
$
form martingale differences. Hence, using Assumption~\ref{ass:sg} and Lemma~\ref{lem:sg-moment},
\[
\begin{aligned}
\mathbb E\left|
\eta^{3/2}\sum_{k=0}^{[\eta^{-2}]-1}
\left\langle\nabla\psi(\mathbf z_k^\eta),\varepsilon_{k+1}\right\rangle
\right|^2
\le
C\eta^3\sum_{k=0}^{[\eta^{-2}]-1}
\mathbb E\left[1+\|\mathbf z_k^\eta\|^p\right]
\le C\eta^3[\eta^{-2}]
\le C\eta
\to0,
\end{aligned}
\]
as $\eta\rightarrow 0$.

\smallskip
\noindent\textbf{Step 2: the quadratic stochastic gradient terms.}
The quadratic Taylor contribution containing at least one $e_{k+1}$ is, up to a bounded deterministic factor,
\begin{equation}\label{eq:Q-SG}
Q_\eta^{\rm SG}
=
\sqrt\eta\sum_{k=0}^{[\eta^{-2}]-1}
\left\{
\nabla^2\psi(\mathbf z_k^\eta)[a_k,e_{k+1}]
+
\nabla^2\psi(\mathbf z_k^\eta)[b_{k+1},e_{k+1}]
+
\nabla^2\psi(\mathbf z_k^\eta)[e_{k+1},e_{k+1}]
\right\}.
\end{equation}
For the first term in \eqref{eq:Q-SG},
corresponding to the $(a,e)$
contribution, boundedness of $\nabla^2\psi$ gives
\[
\left|
\nabla^2\psi(\mathbf z_k^\eta)[a_k,e_{k+1}]
\right|
\le
C\|a_k\|\|e_{k+1}\|.
\]
Applying \eqref{eq:generic-monomial-bound} with
$(r,s,t)=(1,0,1)$ yields
$\mathbb E\left[\|a_k\|\|e_{k+1}\|\right]\le C\eta^2$.
Therefore,
\[
\begin{aligned}
\mathbb E\left|
\sqrt\eta\sum_{k=0}^{[\eta^{-2}]-1}
\nabla^2\psi(\mathbf z_k^\eta)[a_k,e_{k+1}]
\right|
\le
C\sqrt\eta
\sum_{k=0}^{[\eta^{-2}]-1}
\mathbb E\left[\|a_k\|\|e_{k+1}\|\right] \le
C\sqrt\eta[\eta^{-2}]\eta^2
=
C\eta^{1/2}\to0,
\end{aligned}
\]
as $\eta\rightarrow 0$.
Similarly, for the third term in \eqref{eq:Q-SG}, corresponding to the
$(e,e)$ contribution, boundedness of $\nabla^2\psi$ and
\eqref{eq:generic-monomial-bound} with $(r,s,t)=(0,0,2)$ give
$\mathbb E\left[\|e_{k+1}\|^2\right]\le C\eta^2$,
and hence
\[
\begin{aligned}
\mathbb E\left|
\sqrt\eta\sum_{k=0}^{[\eta^{-2}]-1}
\nabla^2\psi(\mathbf z_k^\eta)[e_{k+1},e_{k+1}]
\right|
\le
C\sqrt\eta
\sum_{k=0}^{[\eta^{-2}]-1}
\mathbb E\left[\|e_{k+1}\|^2\right]
\le
C\sqrt\eta[\eta^{-2}]\eta^2
\le
C\eta^{1/2}\to0,
\end{aligned}
\]
as $\eta\rightarrow 0$.
The mixed $(b,e)$ term in \eqref{eq:Q-SG} is treated as a martingale-difference sum. Indeed, conditional independence of $\varepsilon_{k+1}$ and $\xi_{k+1}$, together with their conditional centering, gives
\[
\mathbb E\left[
\nabla^2\psi(\mathbf z_k^\eta)[b_{k+1},e_{k+1}]
\mid\mathcal F_k
\right]=0.
\]
Moreover, by \eqref{eq:basic-increment-moments},
\[
\begin{aligned}
\mathbb E\left|
\sqrt\eta\sum_{k=0}^{[\eta^{-2}]-1}
\nabla^2\psi(\mathbf z_k^\eta)[b_{k+1},e_{k+1}]
\right|^2
&\le
C\eta\sum_{k=0}^{[\eta^{-2}]-1}
\mathbb E\left[\|b_{k+1}\|^2\|e_{k+1}\|^2\right]\\
&\le
C\eta\sum_{k=0}^{[\eta^{-2}]-1}\eta^3
=
C\eta^4[\eta^{-2}]
\le C\eta^2\to0,
\end{aligned}
\]
as $\eta\rightarrow 0$.
Therefore $Q_\eta^{\rm SG}\to0$ in probability.

\smallskip
\noindent\textbf{Step 3: the cubic stochastic gradient terms.}
Using Taylor's formula with third-order integral remainder, the cubic
stochastic gradient contribution containing at least one $e_{k+1}$ is bounded
in absolute value by
\begin{equation}\label{eq:C-SG-bound}
\begin{aligned}
&C\sqrt\eta\sum_{k=0}^{[\eta^{-2}]-1}
\Big(
\|b_{k+1}\|^2\|e_{k+1}\|
+\|b_{k+1}\|\|e_{k+1}\|^2
+\|e_{k+1}\|^3\\
&\qquad\qquad\qquad\qquad\qquad
+\|a_k\|\|b_{k+1}\|\|e_{k+1}\|
+\|a_k\|\|e_{k+1}\|^2
+\|a_k\|^2\|e_{k+1}\|
\Big).
\end{aligned}
\end{equation}
The six monomials in \eqref{eq:C-SG-bound} are controlled by
\eqref{eq:generic-monomial-bound}.
The following table records the resulting $L^1$ bounds after summing over $[\eta^{-2}]$ steps and multiplying by the outer factor $\sqrt\eta$:
\[
\begin{array}{c|c|c}
\text{monomial} &
\text{one-step size} &
\text{normalized summed bound} \\
\hline
\|b\|^2\|e\| & \eta^2 & C\sqrt\eta[\eta^{-2}]\eta^2=C\eta^{1/2} \\
\|b\|\|e\|^2 & \eta^{5/2} & C\sqrt\eta[\eta^{-2}]\eta^{5/2}=C\eta \\
\|e\|^3 & \eta^3 & C\sqrt\eta[\eta^{-2}]\eta^3=C\eta^{3/2} \\
\|a\|\|b\|\|e\| & \eta^{5/2} & C\sqrt\eta[\eta^{-2}]\eta^{5/2}=C\eta \\
\|a\|\|e\|^2 & \eta^3 & C\sqrt\eta[\eta^{-2}]\eta^3=C\eta^{3/2} \\
\|a\|^2\|e\| & \eta^3 & C\sqrt\eta[\eta^{-2}]\eta^3=C\eta^{3/2}
\end{array}
\]
All terms in the last column converge to zero. Hence, the full cubic stochastic gradient contribution converges to zero in $L^1$, and therefore in probability.

Combining Steps 1--3 proves the lemma.
\end{proof}

It remains to control the exact-gradient Taylor remainders evaluated along the stochastic gradient path. These are the same types of terms as in the exact Euler--Maruyama decomposition, but the estimates must be carried out using the moment bounds for the SG chain.

\begin{lemma}[Exact-gradient Taylor remainder along the SG path]\label{lem:exact-remainder-sg-path}
Suppose Assumptions~\ref{ass:U}, \ref{ass:diss}, and~\ref{ass:sg} hold, and let $\psi\in C_b^4(\mathbb R^n)$. After removing the Brownian martingale term
\[
-\eta\sum_{k=0}^{[\eta^{-2}]-1}
\langle\nabla\psi(\mathbf z_k^\eta),\sigma\xi_{k+1}\rangle,
\]
all remaining Taylor terms in the expansion of $\psi(\mathbf z_{k+1}^\eta)-\psi(\mathbf z_k^\eta)$ that do not contain the stochastic gradient increment $\eta\varepsilon_{k+1}$ converge to zero in probability under the CLT normalization.
\end{lemma}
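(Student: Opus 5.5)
The plan is to work at the CLT normalization. Multiplying the telescoping identity from the proof of Lemma~\ref{lem:sg-poisson-lln} by $\eta^{-1/2}$ instead of $1$ puts an outer factor $\sqrt\eta$ in front of $\sum_{k=0}^{[\eta^{-2}]-1}\rho^0_{k,\eta}$, with $\rho^0_{k,\eta}$ defined in \eqref{exact:gradient:remainder} and $\psi\in C_b^4$. I will show that $\sqrt\eta\sum_k\rho^0_{k,\eta}\to0$ in $L^1$ or $L^2$.

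Throughout I use three bounds.
\begin{itemize}
\item $\|a_k\|\le C\eta(1+\|\mathbf z_k^\eta\|)$, from \eqref{eq:a-increment-bound}.
\item $\mathbb E[\|b_{k+1}\|^q\mid\mathcal F_k]\le C\eta^{q/2}$, from \eqref{eq:basic-increment-moments}.
\item The uniform moment bound of Lemma~\ref{lem:sg-moment}, which applies because every polynomial in $\mathbf z_k^\eta$ that appears has degree at most $4<p$.
\end{itemize}
The key structural fact is that $b_{k+1}=\sqrt\eta\sigma\xi_{k+1}$ is a centered Gaussian independent of $\mathcal F_k$. Hence any term that is an odd polynomial in $b_{k+1}$ with an $\mathcal F_k$-measurable coefficient, or any term of the form $\eta\operatorname{Tr}(\sigma\sigma^\top\nabla^2\psi(\mathbf z_k^\eta))-b_{k+1}^\top\nabla^2\psi(\mathbf z_k^\eta)b_{k+1}$, is a martingale difference. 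Such terms can be controlled by orthogonality in $L^2$ rather than by crude $L^1$ summation.

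I expand $\rho^0_{k,\eta}$ term by term.

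\emph{Quadratic terms.} The piece $\tfrac{\eta}{2}\operatorname{Tr}(\sigma\sigma^\top\nabla^2\psi)-\tfrac12 b_{k+1}^\top\nabla^2\psi\, b_{k+1}$ is a martingale difference with conditional second moment $O(\eta^2)$. Its normalized sum therefore has squared $L^2$ norm at most $C\eta\cdot\eta^{-2}\cdot\eta^2=C\eta$.

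The cross term $a_k^\top\nabla^2\psi\, b_{k+1}$ is also a martingale difference, with second moment $O(\eta^3)$, which gives $C\eta^2$.

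The term $a_k^\top\nabla^2\psi\, a_k$ has $L^1$ size $O(\eta^2)$, giving $C\sqrt\eta\,\eta^{-2}\eta^2=C\eta^{1/2}$.

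\emph{Cubic remainder.} I write $T^0_{k,3}=\tfrac16\nabla^3\psi(\mathbf z_k^\eta)[\Delta_k^0,\Delta_k^0,\Delta_k^0]+R_k$. Because $\psi\in C_b^4$, one has $|R_k|\le C\|\nabla^4\psi\|_\infty\|\Delta_k^0\|^4$, and $\mathbb E\|\Delta^0_k\|^4=O(\eta^2)$. So $\sqrt\eta\sum_k|R_k|$ is $O(\eta^{1/2})$ in $L^1$.

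Expanding the frozen cubic term $\nabla^3\psi(\mathbf z_k^\eta)[a+b,a+b,a+b]$ gives four kinds of pieces:
\begin{itemize}
\item the pure term $[b,b,b]$;
\item terms with one factor $a$, of type $[a,b,b]$, which are $O(\eta^2)$ in $L^1$ and give $C\eta^{1/2}$ after normalization;
\item terms with two factors $a$, of type $[a,a,b]$, which are $O(\eta^{5/2})$;
\item the term $[a,a,a]$, which is $O(\eta^3)$.
\end{itemize}
All but the pure term are therefore negligible by direct $L^1$ summation.

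\emph{Main obstacle: the pure Gaussian cubic term} $\nabla^3\psi(\mathbf z_k^\eta)[b_{k+1},b_{k+1},b_{k+1}]$. Its one-step $L^1$ size is $\eta^{3/2}$, so a crude bound gives $\sqrt\eta\cdot\eta^{-2}\cdot\eta^{3/2}=O(1)$, which does not vanish. This is exactly why the Taylor remainder must be frozen at $\mathbf z_k^\eta$ using the fourth derivative, rather than bounded directly along the segment $\mathbf z_k^\eta+t\Delta_k^0$.

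Once frozen, the term is odd in $\xi_{k+1}$ with an $\mathcal F_k$-measurable coefficient, so it is a martingale difference with second moment $O(\eta^3)$. Its normalized sum then has squared $L^2$ norm at most $C\eta\cdot\eta^{-2}\cdot\eta^3=C\eta^2\to0$.

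\emph{Conclusion.} Collecting these bounds, every exact-gradient remainder contribution tends to zero in $L^1$, hence in probability, as $\eta\to0$. This is the same bookkeeping as in the exact Euler--Maruyama analysis of \cite{lu2022central}; the only new input is that the uniform moments along the stochastic gradient path come from Lemma~\ref{lem:sg-moment}.
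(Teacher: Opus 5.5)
Your proposal is correct and follows essentially the same route as the paper: the centered quadratic Brownian term, the $[a_k,b_{k+1}]$ cross term and the frozen pure Gaussian cubic term are martingale differences bounded in $L^2$, and the drift-containing terms are bounded in $L^1$. As in the paper, the $O(1)$ crude bound on $\nabla^3\psi[b_{k+1},b_{k+1},b_{k+1}]$ is avoided by freezing $\nabla^3\psi$ at $\mathbf z_k^\eta$ and paying a fourth-derivative remainder. The only cosmetic difference is that you freeze the whole cubic term along the exact increment $\Delta_k^0$, bounding the remainder by $C\|\Delta_k^0\|^4$ (which matches the definition of $\rho^0_{k,\eta}$). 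The paper instead freezes only the pure Brownian cubic term, evaluates along $\mathbf z_k^\eta+t\Delta_k^\eta$, and bounds the difference through $\mathbb E[\|\Delta_k^\eta\|\|\xi_{k+1}\|^3]\le C\sqrt\eta$.
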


\begin{proof}
From the SG-GNLMC update,
$\mathbf z_{k+1}^{\eta}
=
\mathbf z_k^{\eta}
+\eta g(\mathbf z_k^{\eta})
+\sqrt{\eta}\sigma\xi_{k+1}
+\eta\varepsilon_{k+1}$,
we have
\[
\Delta_k^\eta
:=
\mathbf z_{k+1}^\eta-\mathbf z_k^\eta
=
\eta g(\mathbf z_k^\eta)
+
\sqrt\eta\sigma\xi_{k+1}
+
\eta\varepsilon_{k+1}.
\]
For notational convenience, set
\[
a_k:=\eta g(\mathbf z_k^\eta),
\qquad
b_{k+1}:=\sqrt\eta\sigma\xi_{k+1},
\qquad
e_{k+1}:=\eta\varepsilon_{k+1}.
\]
Thus
\[
\Delta_k^\eta=a_k+b_{k+1}+e_{k+1}.
\]

We use Taylor's formula with second-order expansion and third-order integral
remainder:
\begin{equation}\label{eq:taylor-exact-remainder}
\begin{aligned}
\psi(\mathbf z_{k+1}^\eta)-\psi(\mathbf z_k^\eta)
&=
\nabla\psi(\mathbf z_k^\eta)[\Delta_k^\eta]
+
\frac12\nabla^2\psi(\mathbf z_k^\eta)[\Delta_k^\eta,\Delta_k^\eta]  \\
&\quad
+
\frac12\int_0^1(1-t)^2
\nabla^3\psi(\mathbf z_k^\eta+t\Delta_k^\eta)
[\Delta_k^\eta,\Delta_k^\eta,\Delta_k^\eta]dt .
\end{aligned}
\end{equation}
Since $\Delta_k^\eta=a_k+b_{k+1}+e_{k+1}$, the first-order Taylor term in
\eqref{eq:taylor-exact-remainder} decomposes as
\begin{equation}\label{eq:first-order-decomp}
\nabla\psi(\mathbf z_k^\eta)[\Delta_k^\eta]
=
\nabla\psi(\mathbf z_k^\eta)[a_k]
+
\nabla\psi(\mathbf z_k^\eta)[b_{k+1}]
+
\nabla\psi(\mathbf z_k^\eta)[e_{k+1}].
\end{equation}
Similarly, the second-order Taylor term in \eqref{eq:taylor-exact-remainder}
decomposes as
\begin{equation}\label{eq:second-order-decomp}
\begin{aligned}
\frac12\nabla^2\psi(\mathbf z_k^\eta)[\Delta_k^\eta,\Delta_k^\eta]
&=
\frac12\nabla^2\psi(\mathbf z_k^\eta)[a_k+b_{k+1},a_k+b_{k+1}]\\
&\quad
+\nabla^2\psi(\mathbf z_k^\eta)[a_k+b_{k+1},e_{k+1}]
+\frac12\nabla^2\psi(\mathbf z_k^\eta)[e_{k+1},e_{k+1}].
\end{aligned}
\end{equation}
We control only the Taylor terms whose multilinear factors do not contain
\(e_{k+1}\). The terms containing at least one factor \(e_{k+1}\) are handled
separately in Lemma~\ref{lem:sg-negligible}. The integral remainders below may
still be evaluated at \(\mathbf z_k^\eta+t\Delta_k^\eta\), and this dependence
on the full increment is controlled by the moment estimates below.

Consider the CLT-normalized local error
$\sqrt\eta
\left[
\eta\mathcal A\psi(\mathbf z_k^\eta)
-
\left(\psi(\mathbf z_{k+1}^\eta)-\psi(\mathbf z_k^\eta)\right)
\right]$.
Since
$\mathcal A\psi(\mathbf z)
=
\left\langle g(\mathbf z),\nabla\psi(\mathbf z)\right\rangle
+
\operatorname{Tr}\left(\mathcal D\nabla^2\psi(\mathbf z)\right)$,
we have
\begin{equation}\label{eq:generator-local}
\eta\mathcal A\psi(\mathbf z_k^\eta)
=
\eta
\left\langle g(\mathbf z_k^\eta),\nabla\psi(\mathbf z_k^\eta)\right\rangle
+
\eta
\operatorname{Tr}\left(\mathcal D\nabla^2\psi(\mathbf z_k^\eta)\right).
\end{equation}
The drift term in \eqref{eq:generator-local}, namely
$\eta\left\langle g(\mathbf z_k^\eta),
\nabla\psi(\mathbf z_k^\eta)\right\rangle$,
cancels the first term on the right-hand side of
\eqref{eq:first-order-decomp},
because
$\nabla\psi(\mathbf z_k^\eta)[a_k]
=
\eta
\left\langle g(\mathbf z_k^\eta),
\nabla\psi(\mathbf z_k^\eta)\right\rangle$.
The second term on the right-hand side of \eqref{eq:first-order-decomp}
is the
first-order Brownian Taylor term,
$\nabla\psi(\mathbf z_k^\eta)[b_{k+1}]
=
\sqrt\eta
\left\langle\nabla\psi(\mathbf z_k^\eta),\sigma\xi_{k+1}\right\rangle$.
After multiplying by the outer factor $\sqrt\eta$ and taking the minus sign
from the local error, it gives the Brownian martingale term
$-\eta
\left\langle\nabla\psi(\mathbf z_k^\eta),\sigma\xi_{k+1}\right\rangle$.
This term is removed in the statement of the lemma.

We now list the remaining exact-gradient Taylor terms, namely the terms in
\eqref{eq:taylor-exact-remainder} 
that do not contain the stochastic gradient
increment $e_{k+1}$. By \eqref{eq:second-order-decomp},
the second-order
Taylor contribution without $e_{k+1}$ is generated by
\[
\frac12\nabla^2\psi(\mathbf z_k^\eta)
[a_k+b_{k+1},a_k+b_{k+1}]
=
\frac12\nabla^2\psi(\mathbf z_k^\eta)[b_{k+1},b_{k+1}]
+
\nabla^2\psi(\mathbf z_k^\eta)[a_k,b_{k+1}]
+
\frac12\nabla^2\psi(\mathbf z_k^\eta)[a_k,a_k].
\]

The diffusion part of \(\eta\mathcal A\psi\) is
$\eta\operatorname{Tr}\left(\mathcal D\nabla^2\psi(\mathbf z_k^\eta)\right)
=
\frac{\eta}{2}\operatorname{Tr}\left(\sigma\sigma^\top\nabla^2\psi(\mathbf z_k^\eta)\right)$,
because \(\mathcal D=\frac12\sigma\sigma^\top\). On the other hand,
\[
\frac12\nabla^2\psi(\mathbf z_k^\eta)[b_{k+1},b_{k+1}]
=
\frac{\eta}{2}\xi_{k+1}^{\top}\sigma^\top\nabla^2\psi(\mathbf z_k^\eta)\sigma\xi_{k+1}.
\]
Therefore, in
$\eta\mathcal A\psi(\mathbf z_k^\eta)
-
\left(\psi(\mathbf z_{k+1}^\eta)-\psi(\mathbf z_k^\eta)\right)$,
the diffusion term and the Brownian--Brownian quadratic Taylor term combine as
\[
-\frac{\eta}{2}
\left[
\xi_{k+1}^{\top}
\sigma^\top\nabla^2\psi(\mathbf z_k^\eta)\sigma
\xi_{k+1}
-
\operatorname{Tr}\left(\sigma\sigma^\top\nabla^2\psi(\mathbf z_k^\eta)\right)
\right].
\]
After multiplying by the outer factor \(\sqrt\eta\), this gives the centered
quadratic Brownian term, up to the harmless deterministic factor \(-1/2\).

Thus, after removing the Brownian martingale term, the exact-gradient Taylor
terms are, up to harmless deterministic constants,
\[
R_\eta^{\rm ex}
=
B_{\eta,2}^{\rm cen}
+
B_{\eta}^{gB}
+
B_{\eta}^{gg}
+
C_{\eta}^{B}
+
C_{\eta}^{gBB}
+
C_{\eta}^{ggB}
+
C_{\eta}^{ggg},
\]
where
\begin{align}
B_{\eta,2}^{\rm cen}
&:=
\eta^{3/2}
\sum_{k=0}^{[\eta^{-2}]-1}
\left[
\xi_{k+1}^{\top}
\sigma^\top\nabla^2\psi(\mathbf z_k^\eta)\sigma
\xi_{k+1}
-
\operatorname{Tr}\left(\sigma\sigma^\top\nabla^2\psi(\mathbf z_k^\eta)\right)
\right]. \label{eq:B2-cen}\\
B_{\eta}^{gB}
&:=
\eta^2
\sum_{k=0}^{[\eta^{-2}]-1}
\nabla^2\psi(\mathbf z_k^\eta)
\left[g(\mathbf z_k^\eta),\sigma\xi_{k+1}\right], \label{eq:B-gB}\\
B_{\eta}^{gg}
&:=
\eta^{5/2}
\sum_{k=0}^{[\eta^{-2}]-1}
\nabla^2\psi(\mathbf z_k^\eta)
\left[g(\mathbf z_k^\eta),g(\mathbf z_k^\eta)\right], \label{eq:B-gg}\\
C_{\eta}^{B}
&:=
\eta^2
\sum_{k=0}^{[\eta^{-2}]-1}
\int_0^1
\nabla^3\psi(\mathbf z_k^\eta+t\Delta_k^\eta)
[\sigma\xi_{k+1},\sigma\xi_{k+1},\sigma\xi_{k+1}]dt, \label{eq:C-B-full}\\
C_{\eta}^{gBB}
&:=
\eta^{5/2}
\sum_{k=0}^{[\eta^{-2}]-1}
\int_0^1
\nabla^3\psi(\mathbf z_k^\eta+t\Delta_k^\eta)
[g(\mathbf z_k^\eta),\sigma\xi_{k+1},\sigma\xi_{k+1}]dt, \label{eq:C-gBB}\\
C_{\eta}^{ggB}
&:=
\eta^3
\sum_{k=0}^{[\eta^{-2}]-1}
\int_0^1
\nabla^3\psi(\mathbf z_k^\eta+t\Delta_k^\eta)
[g(\mathbf z_k^\eta),g(\mathbf z_k^\eta),\sigma\xi_{k+1}]dt, \label{eq:C-ggB}\\
C_{\eta}^{ggg}
&:=
\eta^{7/2}
\sum_{k=0}^{[\eta^{-2}]-1}
\int_0^1
\nabla^3\psi(\mathbf z_k^\eta+t\Delta_k^\eta)
[g(\mathbf z_k^\eta),g(\mathbf z_k^\eta),g(\mathbf z_k^\eta)]dt. \label{eq:C-ggg}
\end{align}
The constants, signs, and the bounded weight \(\frac12(1-t)^2\) from Taylor's
formula are irrelevant for convergence to zero and are absorbed into the
generic constant \(C\).

We now prove that each term above converges to zero in probability. Since
\(\nabla^2\psi\) is bounded and the fourth Gaussian moment is finite,
\(B_{\eta,2}^{\rm cen}\) is a martingale-difference sum and
\[
\mathbb E\left|B_{\eta,2}^{\rm cen}\right|^2
\le
C\eta^3
\sum_{k=0}^{[\eta^{-2}]-1}1
\le
C\eta^3[\eta^{-2}]
\le
C\eta
\to0.
\]
Hence \(B_{\eta,2}^{\rm cen}\to0\) in probability as $\eta\rightarrow 0$.

Next, since \(g\) has at most linear growth and \(\nabla^2\psi\) is bounded,
\(B_{\eta}^{gB}\) is a martingale-difference sum and
\[
\mathbb E\left|B_{\eta}^{gB}\right|^2
\le
C\eta^4
\sum_{k=0}^{[\eta^{-2}]-1}
\mathbb E\left[1+\|\mathbf z_k^\eta\|^2\right]
\le
C\eta^4[\eta^{-2}]
\le
C\eta^2
\to0,
\]
as $\eta\rightarrow 0$,
where we used Lemma~\ref{lem:sg-moment}. Hence \(B_{\eta}^{gB}\to0\) in
probability.

For \(B_{\eta}^{gg}\), using the linear growth of \(g\), boundedness of
\(\nabla^2\psi\), and Lemma~\ref{lem:sg-moment},
\[
\mathbb E\left|B_{\eta}^{gg}\right|
\le
C\eta^{5/2}
\sum_{k=0}^{[\eta^{-2}]-1}
\mathbb E\left[1+\|\mathbf z_k^\eta\|^2\right]
\le
C\eta^{5/2}[\eta^{-2}]
\le
C\eta^{1/2}
\to0,
\]
as $\eta\rightarrow 0$.
Thus, \(B_{\eta}^{gg}\to0\) in probability $\eta\rightarrow 0$.

We now estimate the cubic terms
\eqref{eq:C-gBB}--\eqref{eq:C-ggg}.
The terms containing at least one drift
factor are directly controlled in \(L^1\). Since \(\nabla^3\psi\) is bounded,
\(g\) has at most linear growth, Gaussian moments are finite, and
Lemma~\ref{lem:sg-moment} holds,
\begin{align*}
&\mathbb E|C_{\eta}^{gBB}|
\le
C\eta^{5/2}
\sum_{k=0}^{[\eta^{-2}]-1}
\mathbb E\left[(1+\|\mathbf z_k^\eta\|)\|\xi_{k+1}\|^2\right]
\le
C\eta^{5/2}[\eta^{-2}]
=
\mathcal O(\eta^{1/2}),
\\
&\mathbb E|C_{\eta}^{ggB}|
\le
C\eta^3
\sum_{k=0}^{[\eta^{-2}]-1}
\mathbb E\left[(1+\|\mathbf z_k^\eta\|^2)\|\xi_{k+1}\|\right]
\le
C\eta^3[\eta^{-2}]
=
\mathcal{O}(\eta),
\\
&\mathbb E|C_{\eta}^{ggg}|
\le
C\eta^{7/2}
\sum_{k=0}^{[\eta^{-2}]-1}
\mathbb E\left[1+\|\mathbf z_k^\eta\|^3\right]
\le
C\eta^{7/2}[\eta^{-2}]
=
\mathcal O(\eta^{3/2}).
\end{align*}
Therefore,
$C_{\eta}^{gBB}\to0$,
$C_{\eta}^{ggB}\to0$,
$C_{\eta}^{ggg}\to0$,
in \(L^1\), and hence in probability.

It remains to estimate the Brownian cubic term \(C_\eta^B\). Decompose
\[
C_{\eta}^{B}=C_{\eta,1}^{B}+C_{\eta,2}^{B},
\]
where
\begin{align*}
&C_{\eta,1}^{B}
:=
\eta^2
\sum_{k=0}^{[\eta^{-2}]-1}
\nabla^3\psi(\mathbf z_k^\eta)
[\sigma\xi_{k+1},\sigma\xi_{k+1},\sigma\xi_{k+1}],
\\
&C_{\eta,2}^{B}
:=
\eta^2
\sum_{k=0}^{[\eta^{-2}]-1}
\int_0^1
\left(
\nabla^3\psi(\mathbf z_k^\eta+t\Delta_k^\eta)
-
\nabla^3\psi(\mathbf z_k^\eta)
\right)
[\sigma\xi_{k+1},\sigma\xi_{k+1},\sigma\xi_{k+1}]dt.
\end{align*}
For \(C_{\eta,1}^{B}\), since \(\mathbf z_k^\eta\) is
\(\mathcal F_k\)-measurable and
\[
\mathbb E\left[
\nabla^3\psi(\mathbf z_k^\eta)
[\sigma\xi_{k+1},\sigma\xi_{k+1},\sigma\xi_{k+1}]
\mid \mathcal F_k
\right]=0
\]
by the vanishing of the third moment of a centered Gaussian vector,
\(C_{\eta,1}^{B}\) is a martingale-difference sum. Boundedness of
\(\nabla^3\psi\) and Gaussian moments give
\[
\mathbb E|C_{\eta,1}^{B}|^2
\le
C\eta^4
\sum_{k=0}^{[\eta^{-2}]-1}1
\le
C\eta^4[\eta^{-2}]
\le
C\eta^2
\to0.
\]
Hence \(C_{\eta,1}^{B}\to0\) in probability.

For \(C_{\eta,2}^{B}\), boundedness of \(\nabla^4\psi\) gives
\[
\left\|
\nabla^3\psi(\mathbf z_k^\eta+t\Delta_k^\eta)
-
\nabla^3\psi(\mathbf z_k^\eta)
\right\|_{\mathrm{op}}
\le
C\|\Delta_k^\eta\|.
\]
Therefore,
\[
\mathbb E|C_{\eta,2}^{B}|
\le
C\eta^2
\sum_{k=0}^{[\eta^{-2}]-1}
\mathbb E\left[
\|\Delta_k^\eta\|\|\xi_{k+1}\|^3
\right].
\]
Using
$\Delta_k^\eta
=
\eta g(\mathbf z_k^\eta)
+\sqrt\eta\sigma\xi_{k+1}
+\eta\varepsilon_{k+1}$,
the growth bound on \(g\), Gaussian moments, Assumption~\ref{ass:sg},
conditional independence of \(\varepsilon_{k+1}\) and \(\xi_{k+1}\), and
Lemma~\ref{lem:sg-moment} yield
\[
\begin{aligned}
\mathbb E\left[
\|\Delta_k^\eta\|\|\xi_{k+1}\|^3
\right]
&\le
C\eta
\mathbb E\left[(1+\|\mathbf z_k^\eta\|)\|\xi_{k+1}\|^3\right]
+
C\sqrt\eta\mathbb E\|\xi_{k+1}\|^4
+
C\eta\mathbb E\left[\|\varepsilon_{k+1}\|\|\xi_{k+1}\|^3\right]  \\
&\le
C\eta+C\sqrt\eta+C\eta
\le
C\sqrt\eta.
\end{aligned}
\]
Thus,
\[
\mathbb E|C_{\eta,2}^{B}|
\le
C\eta^2[\eta^{-2}]\sqrt\eta
\le
C\eta^{1/2}
\to0.
\]
Therefore \(C_{\eta,2}^{B}\to0\) in probability, and hence
\(C_{\eta}^{B}\to0\) in probability.

Combining the estimates for all terms in \(R_\eta^{\rm ex}\) proves the lemma.
\end{proof}

\begin{remark}[Remainder estimates]
Lemma~\ref{lem:sg-negligible} isolates the new Taylor terms generated by the stochastic gradient increment $\eta\varepsilon_{k+1}$ and proves their negligibility by explicit monomial estimates. Lemma~\ref{lem:exact-remainder-sg-path} controls the remaining exact-gradient Taylor terms along the stochastic gradient path. We only need convergence in probability for the central limit theorem, rather than the exponential remainder bounds used for the moderate-deviation result in \cite{lu2022central}.
\end{remark}

Now, we are finally ready to complete the proof of Theorem~\ref{thm:clt}.

\subsubsection{Completing the Proof of Theorem~\ref{thm:clt}}

\begin{proof}
Set $\hat h=h-\mu(h)$. By Proposition~\ref{prop:ltx} and \cite[Lemma~3.1]{lu2022central}, the Poisson equation $\hat h=\mathcal A\phi$ admits a solution $\phi\in C_b^4(\mathbb R^n)$. Write
\[
\Delta_k^\eta:=\mathbf z_{k+1}^\eta-\mathbf z_k^\eta
=
\eta g(\mathbf z_k^\eta)
+\sqrt\eta\sigma\xi_{k+1}
+\eta\varepsilon_{k+1}.
\]
Using $[\eta^{-2}]^{-1}=\eta^2+o(\eta^2)$ and the boundedness of $h$, the difference between $[\eta^{-2}]^{-1}\sum_{k=0}^{[\eta^{-2}]-1}\hat h(\mathbf z_k^\eta)$ and $\eta^2\sum_{k=0}^{[\eta^{-2}]-1}\hat h(\mathbf z_k^\eta)$ is $o(\sqrt\eta)$. Hence, as $\eta\rightarrow 0$,
\[
\eta^{-1/2}
\left(\widehat{\mu}_{\eta}^{\rm SG}(h)-\mu(h)\right)
=
\eta^{3/2}\sum_{k=0}^{[\eta^{-2}]-1}\mathcal A\phi(\mathbf z_k^\eta)+o(1).
\]
By telescoping,
\[
\eta^{3/2}\sum_{k=0}^{[\eta^{-2}]-1}\mathcal A\phi(\mathbf z_k^\eta)
=
\sqrt\eta\left(\phi(\mathbf z_{[\eta^{-2}]}^\eta)-\phi(\mathbf z_0^\eta)\right)
+\sqrt\eta\sum_{k=0}^{[\eta^{-2}]-1}
\left[
\eta\mathcal A\phi(\mathbf z_k^\eta)
-\left(\phi(\mathbf z_{k+1}^\eta)-\phi(\mathbf z_k^\eta)\right)
\right].
\]
Using the Taylor expansion of $\phi(\mathbf z_{k+1}^\eta)-\phi(\mathbf z_k^\eta)$ with third-order integral remainder, as in the Poisson-equation decomposition of \cite{lu2022central}, yields
\[
\eta^{-1/2}
\left(\widehat{\mu}_{\eta}^{\rm SG}(h)-\mu(h)\right)
=
H_\eta^B+H_\eta^{\rm SG}+R_\eta^{\rm ex}+R_\eta^{\rm SG}+o_P(1),
\]
with
\[
H_\eta^B
:=
-\eta\sum_{k=0}^{[\eta^{-2}]-1}
\left\langle\nabla\phi(\mathbf z_k^\eta),\sigma\xi_{k+1}\right\rangle,
\qquad
H_\eta^{\rm SG}
:=
-\eta^{3/2}\sum_{k=0}^{[\eta^{-2}]-1}
\left\langle\nabla\phi(\mathbf z_k^\eta),\varepsilon_{k+1}\right\rangle.
\]
Here $R_\eta^{\rm SG}$ denotes Taylor terms containing at least one factor $\eta\varepsilon_{k+1}$, while $R_\eta^{\rm ex}$ denotes the remaining exact-gradient Taylor terms. Lemma~\ref{lem:sg-negligible} gives $H_\eta^{\rm SG}\to0$ in probability and $R_\eta^{\rm SG}\to0$ in probability. Lemma~\ref{lem:exact-remainder-sg-path} gives $R_\eta^{\rm ex}\to0$ in probability.

It remains to prove the weak convergence of $H_\eta^B$. Define the martingale-difference triangular array
\[
X_{k,\eta}
:=
-\eta\left\langle\nabla\phi(\mathbf z_k^\eta),\sigma\xi_{k+1}\right\rangle,
\qquad 0\le k\le [\eta^{-2}]-1.
\]
Then $H_\eta^B=\sum_{k=0}^{[\eta^{-2}]-1}X_{k,\eta}$ and
\[
\sum_{k=0}^{[\eta^{-2}]-1}
\mathbb E\left[X_{k,\eta}^2\mid\mathcal F_k\right]
=
\eta^2\sum_{k=0}^{[\eta^{-2}]-1}
\left\|\sigma^\top\nabla\phi(\mathbf z_k^\eta)\right\|^2.
\]
Since $\phi\in C_b^4(\mathbb R^n)$, the function
$F(\mathbf z):=\left\|\sigma^\top\nabla\phi(\mathbf z)\right\|^2$
belongs to $C_b^2(\mathbb R^n)$. Applying Proposition~\ref{prop:ltx} and \cite[Lemma~3.1]{lu2022central} again, the Poisson equation
$F-\mu(F)=\mathcal A\psi_F$
admits a solution $\psi_F\in C_b^4(\mathbb R^n)$. Therefore Lemma~\ref{lem:sg-poisson-lln} applies to $F$, and gives
\[
\eta^2\sum_{k=0}^{[\eta^{-2}]-1}
\left\|\sigma^\top\nabla\phi(\mathbf z_k^\eta)\right\|^2
\to
\mu\left(\left\|\sigma^\top\nabla\phi\right\|^2\right)
=
V(h)
\]
in probability as $\eta\rightarrow 0$.

For the conditional Lindeberg condition, for every $\delta>0$, boundedness of $\nabla\phi$ gives
\[
\sum_{k=0}^{[\eta^{-2}]-1}
\mathbb E\left[\left.
X_{k,\eta}^2\mathbf 1_{\{|X_{k,\eta}|>\delta\}}
\right|\mathcal F_k
\right]
\le
C\eta^2[\eta^{-2}]
\mathbb E\left[
\|\xi_1\|^2\mathbf 1_{\{C\eta\|\xi_1\|>\delta\}}
\right]
\to0.
\]
Since the conditional quadratic variation converges to $V(h)$ in probability and the conditional Lindeberg condition holds, the martingale-array central limit theorem (see e.g. \cite[Corollary~3.1]{hall1980martingale}) implies
$H_\eta^B\Rightarrow\mathcal N(0,V(h))$, as $\eta\rightarrow 0$.
Combining this convergence with the negligibility of $H_\eta^{\rm SG}$, $R_\eta^{\rm SG}$, and $R_\eta^{\rm ex}$ proves the theorem.
\end{proof}

%%%%%%%%%%%%%%%%%%%%%%%%%%%%%%%%%%%%%%%%%%%%%%%%%%%%%%%%

\section{Examples}\label{sec:examples}

%%%%%%%%%%%%%%%%%%%%%%%%%%%%%%%%%%%%%%%%%%%%%%%%%%%%%%%%
\subsection{Non-reversible Langevin dynamics}

We first show that the standard non-reversible Langevin dynamics (NLD) fits into the framework of Section~\ref{sec:model_and_assumptions}.
Let $n=d$ and $H(\mathbf z)=U(\mathbf z)$. Consider the choice
$\mathcal D = I_d$ and
$\mathcal Q = J$,
where $J\in\R^{d\times d}$ is a constant anti-symmetric matrix, i.e.\ $J^\top=-J$.
Then the generalized dynamics \eqref{eq:sde} reduces to
\begin{equation}\label{eq:nld-example-sde}
d\theta_t = -(I_d+J)\nabla U(\theta_t)dt + \sqrt{2}dB_t,
\end{equation}
where $B_{t}$ is a standard $d$-dimensional Brownian motion, 
which is exactly the non-reversible Langevin dynamics introduced in \eqref{eqn:anti}.

Next, we introduce the stochastic gradient non-reversible Langevin Monte Carlo algorithm as the stochastic gradient Euler--Maruyama discretization associated with \eqref{eq:nld-example-sde}:
\begin{equation}\label{eq:nld-example-em}
\theta_{k+1}
=
\theta_k
-\eta(I_d+J)\widehat{\nabla U}(\theta_k,\zeta_{k+1})
+\sqrt{2\eta}\xi_{k+1}.
\end{equation}
Here $\widehat{\nabla U}$ is an unbiased stochastic gradient oracle for $\nabla U$ satisfying Assumption~\ref{ass:sg} with $H=U$, $\mathcal D=I_d$, and $\mathcal Q=J$.

The following proposition verifies that this example is covered by our general theory.

\begin{proposition}\label{prop:nld-example}
Assume that $U\in C^4(\R^d)$ satisfies Assumption~\ref{ass:U} with $H=U$, that the drift term
$g_J(x):=-(I_d+J)\nabla U(x)$
satisfies Assumption~\ref{ass:diss}, and that the stochastic gradient oracle in \eqref{eq:nld-example-em} satisfies Assumption~\ref{ass:sg}. Then the conclusions of Theorem~\ref{thm:clt} apply to the stochastic gradient scheme \eqref{eq:nld-example-em}. In particular, for every $h\in C_b^2(\R^d)$,
\[
\eta^{-1/2}\left(\widehat{\mu}_\eta^{J,{\rm SG}}(h)-\mu(h)\right)
\Rightarrow
\mathcal N\left(0,V_J(h)\right),
\qquad\text{as $\eta\to0$},
\]
where 
$\widehat{\mu}_\eta^{J,{\rm SG}}(h):=\frac{1}{[\eta^{-2}]}\sum_{k=0}^{[\eta^{-2}]-1}h(\theta_k)$,
where $\theta_{k}$ is given in \eqref{eq:nld-example-em}. Under the $\eta$-uniform conditional moment bound in Assumption~\ref{ass:sg}, the variance formula is unchanged by the fixed-variance stochastic gradient perturbation:
\[
V_J(h):=2\int_{\R^d}\|\nabla\phi_J(x)\|^2\mu(dx),
\]
since $\sigma=\sqrt{2}I_d$ in this case. Here, $\phi_J$ solves the limiting full-gradient Poisson equation
$h-\mu(h)=\mathcal A_J\phi_J$,
where $\mathcal A_J$ is the infinitesimal generator
\[
\mathcal A_J f(x)= -\ip{(I_d+J)\nabla U(x)}{\nabla f(x)}+\Delta f(x).
\]
Moreover, if the strict inequality condition in Theorem~\ref{thm:strict-vr} holds, then
\[
V_J(h)<V_0(h),
\]
where $V_0(h)$ is the corresponding small-stepsize fluctuation constant for the reversible overdamped Langevin dynamics ($J=0$).
\end{proposition}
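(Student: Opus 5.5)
The plan is to present Proposition~\ref{prop:nld-example} as a direct specialization of Theorem~\ref{thm:clt} and Theorem~\ref{thm:strict-vr}, with $n=d$, $H=U$, $\mathcal D=I_d$ and $\mathcal Q=J$. I would first check the standing structural requirements. $\mathcal D=I_d$ is constant, symmetric and positive definite, and $J$ is constant with $J^\top=-J$. Hence \eqref{eq:sde} coincides with \eqref{eq:nld-example-sde}, and by Lemma~\ref{lem:invariance} the Gibbs measure $\mu\propto e^{-U}$ is invariant. In this setting the scheme \eqref{eq:em} is exactly \eqref{eq:nld-example-em}: one has $-\eta(\mathcal D+\mathcal Q)\widehat{\nabla H}=-\eta(I_d+J)\widehat{\nabla U}$ and $\sqrt{2\eta\mathcal D}\,\xi_{k+1}=\sqrt{2\eta}\,\xi_{k+1}$. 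Consequently $\widehat\mu_\eta^{J,{\rm SG}}(h)=\widehat\mu_\eta^{\rm SG}(h)$.

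Next I would observe that the hypotheses of the proposition are precisely Assumptions~\ref{ass:U}, \ref{ass:diss} and~\ref{ass:sg} for this choice. The drift $g_J=-(I_d+J)\nabla U$ is the $g$ of Assumption~\ref{ass:diss}, and the oracle condition is assumed verbatim. Theorem~\ref{thm:clt} then gives the CLT for every $h\in C_b^2(\R^d)$, with $V(h)=\mu(\|\sigma^\top\nabla\phi\|^2)$. To compute this, I would use $\mathcal D=\frac12\sigma\sigma^\top=I_d$ and choose $\sigma=\sqrt2 I_d$, which is the choice in Proposition~\ref{prop:ltx}, namely $(2\mathcal D)^{1/2}$. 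Any other square root would give the same value, since $\|\sigma^\top v\|^2=2\ip{v}{\mathcal D v}$. This yields $\|\sigma^\top\nabla\phi\|^2=2\|\nabla\phi\|^2$, hence $V_J(h)=2\int\|\nabla\phi_J\|^2\,d\mu$. For the generator, substituting into \eqref{eq:generator} gives $\operatorname{Tr}(I_d\nabla^2 f)=\Delta f$, so $\mathcal A=\mathcal A_J$ and $\phi_J$ is the Poisson solution supplied by Proposition~\ref{prop:ltx}.

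For the strict reduction claim, I would apply Theorem~\ref{thm:strict-vr} with $\mathcal A_0 f=-\ip{\nabla U}{\nabla f}+\Delta f$ and $\mathcal A^{\rm as}f=-\ip{J\nabla U}{\nabla f}$. Under its operator assumptions (i)--(ii) and condition \eqref{eq:strict-cond}, which reads $\int|\ip{J\nabla U}{\nabla\phi_0}|^2\,d\mu>0$, it gives $V_J(h)<V_0(h)$. Here $V_0(h)$ is, again by Theorem~\ref{thm:clt} with $J=0$, the fluctuation constant of SGLD. To make that step legitimate, I need the $J=0$ case to satisfy the assumptions as well. Assumption~\ref{ass:diss} for $-\nabla U$ follows from Assumption~\ref{ass:U}(ii) with $K_1=m$ and $K_2=b$, and Assumption~\ref{ass:sg} for $\varepsilon=-\delta$ follows because the bound on $(I_d+J)\delta$ transfers to $\delta$: the matrix $I_d+J$ is invertible, with $\|(I_d+J)^{-1}\|_{\rm op}\le1$ since $\ip{(I_d+J)v}{v}=\|v\|^2$.

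I expect this last point to be the only mildly delicate step, namely that the reversible baseline inherits the assumptions from the $J$-dynamics. The one-line coercivity bound $\ip{(I_d+J)v}{v}=\|v\|^2$ settles it. Everything else is bookkeeping.
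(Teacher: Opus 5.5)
Your proposal is correct and follows the paper's proof. The paper likewise identifies $\mathcal D=I_d$, $\mathcal Q=J$ (so $\sigma=\sqrt2 I_d$), invokes Lemma~\ref{lem:invariance} and Theorem~\ref{thm:clt} for the CLT, and cites Theorem~\ref{thm:strict-vr} for the strict comparison. Your extra check that the $J=0$ baseline inherits Assumptions~\ref{ass:diss} and~\ref{ass:sg}, via $\langle (I_d+J)v,v\rangle=\|v\|^2$ and hence $\|(I_d+J)^{-1}\|_{\mathrm{op}}\le 1$, is a sound addition that the paper leaves implicit. Note, though, that transferring Assumption~\ref{ass:sg} to the baseline chain treats it as a state-wise property of the oracle, which its Markovian clause supports, rather than as a bound holding only along the $J$-chain.
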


\begin{proof}
The identification of \eqref{eq:nld-example-sde} and \eqref{eq:nld-example-em} with the general full-gradient diffusion and stochastic gradient scheme \eqref{eq:sde}--\eqref{eq:em} is immediate by taking $\mathcal D=I_d$ and $\mathcal Q=J$. The invariant measure is $\mu(dx)\propto e^{-U(x)}dx$ by Lemma~\ref{lem:invariance}. The structural assumptions required by Proposition~\ref{prop:ltx} are satisfied by construction, while Assumption~\ref{ass:sg} controls the stochastic gradient perturbation. Therefore, Theorem~\ref{thm:clt} yields the stated CLT with the same $V_J(h)$. The strict comparison of the small-stepsize fluctuation constants is exactly Theorem~\ref{thm:strict-vr}.
\end{proof}

\begin{remark}
Proposition~\ref{prop:nld-example} gives an algorithm-level interpretation of the classical variance reduction effect of non-reversibility for the stochastic gradient scheme. More precisely, the continuous-time asymptotic variance reduction transfers to the leading-order fluctuation constant of the stochastic gradient Euler--Maruyama estimator in the small-stepsize regime $\eta\to0$.
\end{remark}

%%%%%%%%%%%%%%%%%%%%%%%%%%%%%%%%%%%%%%%%%%%%%%%%%%%%%%%%
\subsection{Hessian-free high resolution dynamics}

We next consider an augmented-state example, that is, the Hessian-free high-resolution (HFHR) dynamics in \cite{li2022hessian} that is motivated by the high--resolution ordinary differential equation (ODE) viewpoint on \textit{Nesterov's accelerated gradient} (NAG) method in optimization \cite{Nesterov1983,Nesterov2013}.
Its deterministic backbone is the high--resolution ODE
\begin{equation}\label{eq:hfhr-ode}
\begin{cases}
\dot \theta_t = r_t-\alpha \nabla U(\theta_t),\\
\dot r_t = -\gamma r_t-\nabla U(\theta_t),
\end{cases}
\end{equation}
where $\alpha,\gamma>0$ are fixed parameters.
This is the deterministic backbone underlying the HFHR dynamics introduced by \cite{li2022hessian}, which is a stochastic perturbation of \eqref{eq:hfhr-ode} that fits into the framework of Section~\ref{sec:model_and_assumptions}.
Let the state variable be $\mathbf z=(\theta,r)\in\R^{2d}$ and define the Hamiltonian
\begin{equation}\label{eq:hfhr-H}
H(\theta,r)=U(\theta)+\frac12\|r\|^2.
\end{equation}
Set
\begin{equation}\label{eq:hfhr-DQ}
\mathcal D=
\begin{pmatrix}
\alpha I_d & 0\\
0 & \gamma I_d
\end{pmatrix},
\qquad
\mathcal Q=
\begin{pmatrix}
0 & -I_d\\
I_d & 0
\end{pmatrix}.
\end{equation}
Then $\mathcal D$ is symmetric positive definite and $\mathcal Q$ is anti-symmetric. Moreover,
\[
-(\mathcal D+\mathcal Q)\nabla H(\theta,r)
=
\begin{pmatrix}
r-\alpha \nabla U(\theta)\\
-\gamma r-\nabla U(\theta)
\end{pmatrix}.
\]
Hence, the generalized non-reversible Langevin dynamics \eqref{eq:sde} becomes
\begin{equation}\label{eq:hfhr-sde}
\begin{cases}
d\theta_t = \left(r_t-\alpha \nabla U(\theta_t)\right)dt + \sqrt{2\alpha}dB_t^{(1)},\\
dr_t = \left(-\gamma r_t-\nabla U(\theta_t)\right)dt + \sqrt{2\gamma}dB_t^{(2)},
\end{cases}
\end{equation}
where $B_t^{(1)}$ and $B_t^{(2)}$ are two independent standard $d$-dimensional Brownian motions.
The non-asymptotic convergence guarantees for HFHR dynamics \eqref{eq:hfhr-sde}
and its discretaizations have been studied in \cite{li2022hessian} when $U$ is strong-convex by using
the synchronous coupling method. 
Very recently, by adopting the reflection/synchronous coupling framework, \cite{WWZ2026} obtains
the non-asymptotic convergence guarantees for HFHR dynamics \eqref{eq:hfhr-sde}
when $U$ may be non-convex.

By Lemma~\ref{lem:invariance}, the invariant measure of \eqref{eq:hfhr-sde} is
\begin{equation}\label{eq:hfhr-mu}
\mu(d\theta,dr)=Z^{-1}\exp\left(-U(\theta)-\frac12\|r\|^2\right)d\theta dr,
\end{equation}
where $Z>0$ is the normalizing constant.

The reversible baseline corresponding to this HFHR dynamics is obtained by setting $\mathcal Q=0$
with $\mathcal{D}$ the same as in \eqref{eq:hfhr-DQ}.
In that case, the dynamics \eqref{eq:sde} becomes
\begin{equation}\label{eq:hfhr-baseline-sde}
\begin{cases}
d\theta_t = -\alpha \nabla U(\theta_t)dt + \sqrt{2\alpha}dB_t^{(1)},\\
dr_t = -\gamma r_tdt + \sqrt{2\gamma}dB_t^{(2)},
\end{cases}
\end{equation}
which is precisely the combination of an overdamped Langevin diffusion in the $\theta$-variable and an independent Ornstein--Uhlenbeck process in the $r$-variable. This baseline has the same invariant measure \eqref{eq:hfhr-mu}.

Next, we introduce the stochastic gradient Hessian-free high-resolution Monte Carlo algorithm as the stochastic gradient Euler--Maruyama discretization associated with \eqref{eq:hfhr-sde}:
\begin{equation}\label{eq:hfhr-em}
\begin{cases}
\theta_{k+1}
=
\theta_k+\eta\left(r_k-\alpha\widehat{\nabla U}(\theta_k,\zeta_{k+1})\right)
+\sqrt{2\alpha\eta}\xi_{k+1}^{(1)},
\\[1mm]
r_{k+1}
=
r_k+\eta\left(-\gamma r_k-\widehat{\nabla U}(\theta_k,\zeta_{k+1})\right)
+\sqrt{2\gamma\eta}\xi_{k+1}^{(2)}.
\end{cases}
\end{equation}
Equivalently, this is the SG-GNLMC scheme with
$\widehat{\nabla H}(\theta,r,\zeta)
=
\begin{pmatrix}
\widehat{\nabla U}(\theta,\zeta)\\ r
\end{pmatrix}$.

\begin{remark}
The discretization \eqref{eq:hfhr-em} should be interpreted as a simple stochastic gradient Euler--Maruyama discretization associated with an HFHR-type limiting diffusion. It is not the tailored structure-preserving discretization analyzed in \cite{li2022hessian}. Our purpose here is only to show that, once the HFHR drift is embedded into the constant-coefficient GNLD framework, this stochastic gradient discretization falls within the scope of Theorem~\ref{thm:clt}. It is natural to expect that more sophisticated discretizations of HFHR may lead to further improvements, but analyzing such schemes is beyond the scope of the present paper.
\end{remark}

\begin{proposition}\label{prop:hfhr-example}
Assume that $U\in C^4(\R^d)$ satisfies Assumption~\ref{ass:U} with
$H(\theta,r)=U(\theta)+\frac12\|r\|^2$,
that the corresponding limiting full-gradient drift
$g_{\mathrm{HFHR}}(\theta,r)
=
\begin{pmatrix}
r-\alpha \nabla U(\theta)\\
-\gamma r-\nabla U(\theta)
\end{pmatrix}$
satisfies Assumption~\ref{ass:diss} (for instance, by the sufficient condition in Remark~\ref{rem:aug-diss}), and that the stochastic gradient oracle in \eqref{eq:hfhr-em} satisfies Assumption~\ref{ass:sg}. 
Then the conclusions of Theorem~\ref{thm:clt} apply to the stochastic gradient scheme \eqref{eq:hfhr-em}. In particular, for every $h\in C_b^2(\R^{2d})$,
\[
\eta^{-1/2}\left(\widehat{\mu}_\eta^{\mathrm{HFHR},\mathrm{SG}}(h)-\mu(h)\right)
\Rightarrow
\mathcal N\left(0,V_{\mathrm{HFHR}}(h)\right),
\qquad \text{as $\eta\to0$},
\]
where 
$\widehat{\mu}_\eta^{\mathrm{HFHR},\mathrm{SG}}(h):=\frac{1}{[\eta^{-2}]}\sum_{k=0}^{[\eta^{-2}]-1}h(\theta_k,r_{k})$
,
where $(\theta_{k},r_{k})$ is given in \eqref{eq:hfhr-em} and $\mu$ is the invariant measure in \eqref{eq:hfhr-mu}. Under the $\eta$-uniform conditional moment bound in Assumption~\ref{ass:sg}, the variance formula is unchanged by the fixed-variance stochastic gradient perturbation:
\[
V_{\mathrm{HFHR}}(h)
:=
\mu\left(\left\|\sigma^\top\nabla\phi_{\mathrm{HFHR}}\right\|^2\right),
\quad
\text{with}
\quad
\sigma=
\begin{pmatrix}
\sqrt{2\alpha}I_d & 0\\
0 & \sqrt{2\gamma}I_d
\end{pmatrix},
\]
and $\phi_{\mathrm{HFHR}}$ solving the limiting full-gradient Poisson equation
$h-\mu(h)=\mathcal A_{\mathrm{HFHR}}\phi_{\mathrm{HFHR}}$,
where the limiting full-gradient infinitesimal generator is
\begin{equation}\label{eq:hfhr-generator}
\mathcal A_{\mathrm{HFHR}} f(\theta,r)
=
\ip{r-\alpha \nabla U(\theta)}{\nabla_\theta f(\theta,r)}
+
\ip{-\gamma r-\nabla U(\theta)}{\nabla_r f(\theta,r)}
+
\alpha \Delta_\theta f(\theta,r)+\gamma \Delta_r f(\theta,r).
\end{equation}
\end{proposition}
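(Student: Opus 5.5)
The plan is to reduce Proposition~\ref{prop:hfhr-example} to Theorem~\ref{thm:clt} by checking that the HFHR data form an instance of the constant-coefficient framework \eqref{eq:sde}--\eqref{eq:em}. The proof then runs parallel to that of Proposition~\ref{prop:nld-example}.

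\textbf{Step 1: embedding.} Take $n=2d$, $\mathbf z=(\theta,r)$, $H(\theta,r)=U(\theta)+\frac12\|r\|^2$, and $\mathcal D,\mathcal Q$ as in \eqref{eq:hfhr-DQ}. Then $\mathcal D$ is constant symmetric positive definite, since $\alpha,\gamma>0$, and $\mathcal Q$ is constant anti-symmetric. The computation preceding \eqref{eq:hfhr-sde} shows that $-(\mathcal D+\mathcal Q)\nabla H=g_{\mathrm{HFHR}}$, so \eqref{eq:sde} is exactly \eqref{eq:hfhr-sde}.

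\textbf{Step 2: the scheme.} With $\widehat{\nabla H}(\theta,r,\zeta)=(\widehat{\nabla U}(\theta,\zeta),r)$, I will expand
\[
-\eta(\mathcal D+\mathcal Q)\widehat{\nabla H}=\eta\bigl(r-\alpha\widehat{\nabla U},\,-\gamma r-\widehat{\nabla U}\bigr).
\]
Since $\sqrt{2\eta\mathcal D}=\mathrm{diag}(\sqrt{2\alpha\eta}I_d,\sqrt{2\gamma\eta}I_d)$, the update \eqref{eq:em} coincides with \eqref{eq:hfhr-em}, where $\xi_{k+1}=(\xi^{(1)}_{k+1},\xi^{(2)}_{k+1})$ is standard Gaussian in $\R^{2d}$. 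The same expansion gives $\varepsilon_{k+1}=-(\mathcal D+\mathcal Q)(\delta^U_{k+1},0)$, where $\delta^U_{k+1}$ is the error of $\widehat{\nabla U}$. Assumption~\ref{ass:sg} is imposed directly on this oracle in the hypothesis, so nothing further is needed there.

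\textbf{Step 3: invoke the theorem.} Assumption~\ref{ass:U} is assumed for the augmented $H$; note that its bounded Hessian and third-derivative conditions hold automatically in the $r$ block. Assumption~\ref{ass:diss} is assumed for $g_{\mathrm{HFHR}}$, for instance via the condition $\alpha\gamma m>(1+L)^2/4$ of Remark~\ref{rem:aug-diss}. Lemma~\ref{lem:invariance} identifies the invariant law as \eqref{eq:hfhr-mu}. Theorem~\ref{thm:clt} then gives the CLT with $V(h)=\mu(\|\sigma^\top\nabla\phi\|^2)$, where $\sigma=(2\mathcal D)^{1/2}=\mathrm{diag}(\sqrt{2\alpha}I_d,\sqrt{2\gamma}I_d)$ satisfies $\mathcal D=\frac12\sigma\sigma^\top$.

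\textbf{Step 4: the generator.} Substituting into \eqref{eq:generator} gives $\langle g_{\mathrm{HFHR}},\nabla f\rangle+\operatorname{Tr}(\mathcal D\nabla^2 f)$. Splitting both terms into $\theta$ and $r$ blocks yields \eqref{eq:hfhr-generator}, with $\operatorname{Tr}(\mathcal D\nabla^2 f)=\alpha\Delta_\theta f+\gamma\Delta_r f$.

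\textbf{Main obstacle.} The only nontrivial point is bookkeeping in Step 2: the oracle randomness enters both blocks through the single $\widehat{\nabla U}$, and the moment bound is stated in terms of $\|\mathbf z_k\|$ rather than $\|\theta_k\|$. Since $\|\theta_k\|\le\|\mathbf z_k\|$, this is harmless, and the rest is direct substitution.
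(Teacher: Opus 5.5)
Your proposal is correct and follows essentially the same route as the paper. Both identify $\mathcal D,\mathcal Q,H$ from \eqref{eq:hfhr-DQ}, note that $\mathcal D=\frac12\sigma\sigma^\top$ with the diagonal $\sigma$, and invoke Proposition~\ref{prop:ltx} and Theorem~\ref{thm:clt}; your explicit form of $\varepsilon_{k+1}$, the block computation of the generator, and the appeal to Lemma~\ref{lem:invariance} just spell out substitutions the paper leaves implicit.
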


\begin{proof}
Take
$\mathcal D=
\begin{pmatrix}
\alpha I_d & 0\\
0 & \gamma I_d
\end{pmatrix}$,
$\mathcal Q=
\begin{pmatrix}
0 & -I_d\\
I_d & 0
\end{pmatrix}$
and $H(\theta,r)=U(\theta)+\frac12\|r\|^2$.
Then the limiting dynamics \eqref{eq:hfhr-sde} is exactly of the form \eqref{eq:sde}, and \eqref{eq:hfhr-em} is the corresponding stochastic gradient scheme \eqref{eq:em}. Moreover,
$\mathcal D=\frac12 \sigma\sigma^\top$,
with $\sigma=\mathrm{diag}(\sqrt{2\alpha}I_d,\sqrt{2\gamma}I_d)$. Therefore, Proposition~\ref{prop:ltx} applies once Assumptions~\ref{ass:U} and \ref{ass:diss} are in force, while Assumption~\ref{ass:sg} controls the stochastic gradient perturbation. Theorem~\ref{thm:clt} then yields the stated small-stepsize CLT with the same $V_{\mathrm{HFHR}}(h)$.
\end{proof}

\begin{remark}\label{rem:hfhr-projected}
Theorem~\ref{thm:clt} also applies to observables depending only on the position variable for the stochastic gradient HFHR scheme. Indeed, if $\bar h\in C_b^2(\R^d)$ and we define
$h(\theta,r):=\bar h(\theta)$,
then $h\in C_b^2(\R^{2d})$ and
$\widehat{\mu}_{\eta,\theta}^{\mathrm{HFHR},\mathrm{SG}}(\bar h)
:=
\frac{1}{[\eta^{-2}]}
\sum_{k=0}^{[\eta^{-2}]-1}\bar h(\theta_k)
=
\widehat{\mu}_{\eta}^{\mathrm{HFHR},\mathrm{SG}}(h)$.
Therefore, Theorem~\ref{thm:clt} yields the same small-stepsize CLT for the projected stochastic gradient estimator.

Moreover, the variance comparison in Theorem~\ref{thm:strict-vr} also applies to this class of observables, since it is formulated for arbitrary test functions on the full augmented state space. Thus, for $h(\theta,r)=\bar h(\theta)$, one may compare the HFHR fluctuation constant with that of its reversible baseline obtained by setting $\mathcal Q=0$.

For HFHR, this reversible baseline decouples into an overdamped Langevin dynamics in $\theta$ and an independent Ornstein--Uhlenbeck process in $r$. Hence, for observables of the form $h(\theta,r)=\bar h(\theta)$, the corresponding baseline fluctuation constant coincides with that of the overdamped Langevin dynamics for $\bar h$ on $\R^d$. Therefore, whenever Theorem~\ref{thm:strict-vr} applies, the HFHR dynamics fluctuation constant for position only observables is no larger than that of overdamped Langevin dynamics, with strict inequality under the strict condition there.
\end{remark}

%%%%%%%%%%%%%%%%%%%%%%%%%%%%%

\subsection{Gradient-adjusted underdamped Langevin dynamics}

Next, we consider the gradient-adjusted underdamped Langevin dynamics (GAUL) proposed in \cite{zuo2025gradient}. 
From the optimization viewpoint, \cite{zuo2025gradient} is motivated by recent progress on primal-dual damping and Hessian-driven damping dynamics, which enriches the Nesterov-type second-order flows underlying classical underdamped Langevin dynamics and have been observed to converge faster toward minimizers. 
The idea in \cite{zuo2025gradient} is therefore to inject suitable stochastic perturbations into these accelerated deterministic dynamics so as to obtain a sampling method that preserves the target invariant measure while potentially improving convergence over standard overdamped or underdamped Langevin dynamics.
This provides another augmented-state example covered by the constant-coefficient GNLD framework of Section~\ref{sec:model_and_assumptions}, and it may be viewed as a preconditioned extension of the HFHR dynamics discussed above.

Let the state variable be $\mathbf z=(\theta,r)\in\R^{2d}$ and define the Hamiltonian
\begin{equation}\label{eq:gaul-H}
H(\theta,r)=U(\theta)+\frac12\|r\|^2.
\end{equation}
Let $a,\gamma>0$ and $C\in\R^{d\times d}$ be a symmetric positive-definite matrix. Following \cite{zuo2025gradient}, define
\begin{equation}\label{eq:gaul-bigQ}
\mathbf Q:=
\begin{pmatrix}
aC & -C\\
I_d & \gamma I_d
\end{pmatrix}.
\end{equation}
We decompose $\mathbf Q$ into its symmetric and anti-symmetric parts:
\begin{equation}\label{eq:gaul-DQ}
\mathcal D:=\frac{\mathbf Q+\mathbf Q^\top}{2}
=
\begin{pmatrix}
aC & \frac{I_d-C}{2}\\[1mm]
\frac{I_d-C}{2} & \gamma I_d
\end{pmatrix},
\qquad
\mathcal Q:=\frac{\mathbf Q-\mathbf Q^\top}{2}
=
\begin{pmatrix}
0 & -\frac{I_d+C}{2}\\[1mm]
\frac{I_d+C}{2} & 0
\end{pmatrix}.
\end{equation}
Then $\mathcal D^\top=\mathcal D$ and $\mathcal Q^\top=-\mathcal Q$. Throughout this subsection, we additionally assume that
$\mathcal D$ is positive definite,
so that the resulting dynamics falls within the constant-coefficient GNLD framework of Section~\ref{sec:model_and_assumptions}.
%%%%%%%%%%%%%%%%%%%%%%%%%%
A direct computation gives
\[
-(\mathcal D+\mathcal Q)\nabla H(\theta,r)
=
-\mathbf Q\nabla H(\theta,r)
=
\begin{pmatrix}
Cr-aC\nabla U(\theta)\\
-\gamma r-\nabla U(\theta)
\end{pmatrix}.
\]
Hence, the generalized non-reversible Langevin dynamics \eqref{eq:sde} becomes
\begin{equation}\label{eq:gaul-sde}
d\mathbf z_t
=
-\mathbf Q\nabla H(\mathbf z_t)dt
+\sqrt{2\mathcal D}d\mathbf B_t.
\end{equation}
The original GAUL paper proves that this dynamics preserves the desired joint invariant distribution and studies convergence primarily in the quadratic Gaussian setting, with particular emphasis on the case $C=I_d$; for general preconditioners, positivity of the symmetric part is an additional constraint. Our Assumption~\ref{ass:diss} is a separate stability condition needed for the small-stepsize fluctuation theorem.
Equivalently, \eqref{eq:gaul-sde} can be written componentwise as
\begin{equation}\label{eq:gaul-sde-components}
\begin{cases}
d\theta_t = \left(Cr_t-aC\nabla U(\theta_t)\right)dt + dM_t^{(1)},\\[1mm]
dr_t = \left(-\gamma r_t-\nabla U(\theta_t)\right)dt + dM_t^{(2)},
\end{cases}
\end{equation}
where the Gaussian increment $\left(dM_t^{(1)},dM_t^{(2)}\right)\in\R^{2d}$ is centered with covariance
\[
\mathbb E
\begin{pmatrix}
dM_t^{(1)}\\
dM_t^{(2)}
\end{pmatrix}
\begin{pmatrix}
dM_t^{(1)}\\
dM_t^{(2)}
\end{pmatrix}^{\top}
=
2\mathcal Ddt
=
\begin{pmatrix}
2aC & I_d-C\\
I_d-C & 2\gamma I_d
\end{pmatrix}dt,
\]
where the joint diffusion matrix is $2\mathcal D$. This yields a positive-definite subclass of the GAUL dynamics considered in \cite{zuo2025gradient}.
The infinitesimal generator of \eqref{eq:gaul-sde} is
\begin{equation}\label{eq:gaul-generator}
\mathcal A_{\mathrm{GAUL}} f(\theta,r)
= \ip{Cr-aC\nabla U(\theta)}{\nabla_\theta f(\theta,r)}
+ \ip{-\gamma r-\nabla U(\theta)}{\nabla_r f(\theta,r)}
+ \operatorname{Tr}\left(\mathcal D\nabla^2 f(\theta,r)\right),
\end{equation}
where $\nabla^2 f$ denotes the full $2d\times 2d$ Hessian.

By Lemma~\ref{lem:invariance}, the invariant measure of \eqref{eq:gaul-sde} is
\begin{equation}\label{eq:gaul-mu}
\mu(d\theta,dr)=Z^{-1}\exp\left(-U(\theta)-\frac12\|r\|^2\right)d\theta dr,
\end{equation}
where $Z>0$ is the normalizing constant.

The reversible baseline corresponding to this GAUL dynamics is obtained by setting $\mathcal Q=0$ while keeping the same symmetric matrix $\mathcal D$. In that case, the dynamics \eqref{eq:sde} becomes
\begin{equation}\label{eq:gaul-baseline-sde}
d\mathbf z_t = -\mathcal D\nabla H(\mathbf z_t)dt+\sqrt{2\mathcal D}d\mathbf B_t,
\end{equation}
which has the same invariant measure \eqref{eq:gaul-mu}.

Next, we introduce the stochastic gradient GAULMC scheme associated with \eqref{eq:gaul-sde}:
\begin{equation}\label{eq:gaul-em}
\mathbf z_{k+1}
=
\mathbf z_k
-\eta\mathbf Q
\begin{pmatrix}
\widehat{\nabla U}(\theta_k,\zeta_{k+1})\\
r_k
\end{pmatrix}
+\sqrt{2\eta\mathcal D}\xi_{k+1}.
\end{equation}
The Gaussian increment has covariance $2\eta\mathcal D$. Suppressing this generally correlated Gaussian increment, the componentwise drift part is
\begin{align*}
&\theta_{k+1}
=
\theta_k+\eta\left(Cr_k-aC\widehat{\nabla U}(\theta_k,\zeta_{k+1})\right),
\\
&r_{k+1}
=
r_k+\eta\left(-\gamma r_k-\widehat{\nabla U}(\theta_k,\zeta_{k+1})\right).
\end{align*}

This stochastic gradient Euler--Maruyama scheme is the direct stochastic gradient analogue of the simple discretization considered in \cite[Section 3.3 and Appendix A]{zuo2025gradient}. We note that \cite{zuo2025gradient} also proposes a splitting-based discretization, which is shown there to have a smaller first-order asymptotic bias than the exact-gradient Euler--Maruyama scheme for Gaussian targets.

\begin{remark}
The GAUL dynamics reduces to the HFHR example in Section~\ref{sec:examples} when $C=I_d$ and $a=\alpha$. Thus the present example is a natural preconditioned extension of HFHR within the constant-coefficient GNLD framework.
\end{remark}

\begin{proposition}\label{prop:gaul-example}
Assume that $U\in C^4(\R^d)$ satisfies Assumption~\ref{ass:U} with $H(\theta,r)=U(\theta)+\tfrac12\|r\|^2$, that $\mathcal D$ in \eqref{eq:gaul-DQ} is positive definite, that the limiting full-gradient drift $g_{\mathrm{GAUL}}$ satisfies Assumption~\ref{ass:diss} (for instance, by the preconditioned sufficient condition in Remark~\ref{rem:aug-diss}), and that the stochastic gradient oracle in \eqref{eq:gaul-em} satisfies Assumption~\ref{ass:sg}. Then the conclusions of Theorem~\ref{thm:clt} apply to \eqref{eq:gaul-em}: for every $h\in C_b^2(\R^{2d})$,
\[
\eta^{-1/2}\left(\widehat{\mu}_\eta^{\mathrm{GAUL},\mathrm{SG}}(h)-\mu(h)\right)
\Rightarrow
\mathcal N\left(0,V_{\mathrm{GAUL}}(h)\right),
\qquad \eta\to0,
\]
where $\widehat{\mu}_\eta^{\mathrm{GAUL},\mathrm{SG}}(h)
:=\frac{1}{[\eta^{-2}]}\sum_{k=0}^{[\eta^{-2}]-1}h(\mathbf z_k)$, 
$V_{\mathrm{GAUL}}(h):=\mu\left(\|\sigma^\top\nabla\phi_{\mathrm{GAUL}}\|^2\right)$, with $\mathcal D=\tfrac12\sigma\sigma^\top$ and $\phi_{\mathrm{GAUL}}$ solving the limiting full-gradient Poisson equation $h-\mu(h)=\mathcal A_{\mathrm{GAUL}}\phi_{\mathrm{GAUL}}$. Moreover, if the strict inequality condition in Theorem~\ref{thm:strict-vr} holds for this choice of $\mathcal Q$, then
\[
V_{\mathrm{GAUL}}(h)<V_{\mathrm{GAUL},0}(h),
\]
where $V_{\mathrm{GAUL},0}(h)$ denotes the corresponding fluctuation constant for the reversible baseline \eqref{eq:gaul-baseline-sde}.
\end{proposition}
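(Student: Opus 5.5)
The plan is to reduce Proposition~\ref{prop:gaul-example} to Theorem~\ref{thm:clt} and Theorem~\ref{thm:strict-vr}, following the proofs of Propositions~\ref{prop:nld-example} and~\ref{prop:hfhr-example}. The first step is to identify the GAUL dynamics with the constant-coefficient GNLD. I take $n=2d$, $\mathbf z=(\theta,r)$, $H(\theta,r)=U(\theta)+\frac12\|r\|^2$, and $\mathcal D$, $\mathcal Q$ as in \eqref{eq:gaul-DQ}. By construction $\mathcal D$ is symmetric and, by hypothesis, positive definite. The matrix $\mathcal Q$ is constant and anti-symmetric, and $\mathcal D+\mathcal Q=\mathbf Q$. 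Hence \eqref{eq:gaul-sde} is exactly \eqref{eq:sde}, its generator is \eqref{eq:gaul-generator}, and Lemma~\ref{lem:invariance} gives the invariant measure \eqref{eq:gaul-mu}. I then set $\sigma:=(2\mathcal D)^{1/2}$, which is well defined and invertible by positive definiteness, so that $\mathcal D=\frac12\sigma\sigma^\top$. The Gaussian term in \eqref{eq:gaul-em} is therefore $\sqrt\eta\,\sigma\xi_{k+1}$.

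The second step is to show that \eqref{eq:gaul-em} is an instance of the SG-GNLMC scheme \eqref{eq:em}. I set
\[
\widehat{\nabla H}(\theta,r,\zeta):=\begin{pmatrix}\widehat{\nabla U}(\theta,\zeta)\\ r\end{pmatrix}.
\]
With this oracle, the error is $\delta_{k+1}=(\widehat{\nabla U}(\theta_k,\zeta_{k+1})-\nabla U(\theta_k),0)$ and $\varepsilon_{k+1}=-\mathbf Q\delta_{k+1}$. The hypothesis that the oracle satisfies Assumption~\ref{ass:sg} is stated directly for this scheme. If one prefers to phrase it in terms of $\widehat{\nabla U}$ alone, three transfers are needed:
\begin{itemize}
\item unbiasedness of $\widehat{\nabla U}$ gives $\mathbb E[\varepsilon_{k+1}\mid\mathcal F_k]=0$;
\item conditional independence of $\zeta_{k+1}$ from $\xi_{k+1}$ carries over to $\varepsilon_{k+1}$;
\item the $p$-th moment bound on $\widehat{\nabla U}-\nabla U$ gives \eqref{conditional:moment:bound}, with $C_p$ multiplied by $\|\mathbf Q\|_{\mathrm{op}}^p$.
\end{itemize}
Assumptions~\ref{ass:U} and~\ref{ass:diss} hold by hypothesis; Remark~\ref{rem:aug-diss} gives a sufficient condition for the latter.

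With these identifications, Proposition~\ref{prop:ltx} verifies the structural hypotheses and supplies a Poisson solution $\phi_{\mathrm{GAUL}}\in C_b^4$. Theorem~\ref{thm:clt} then gives the CLT with $V_{\mathrm{GAUL}}(h)=\mu(\|\sigma^\top\nabla\phi_{\mathrm{GAUL}}\|^2)$. For the strict inequality, the reversible baseline \eqref{eq:gaul-baseline-sde} is the same framework with $\mathcal Q=0$ and the same $\mathcal D$. Its fluctuation constant $V_{\mathrm{GAUL},0}(h)$ is therefore the quantity $V_0(h)$ of Theorem~\ref{thm:strict-vr}. Under the operator hypotheses (i)--(ii) and condition \eqref{eq:strict-cond} with this $\mathcal Q$, that theorem yields $V_{\mathrm{GAUL}}(h)<V_{\mathrm{GAUL},0}(h)$.

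The only step that is not pure bookkeeping is the treatment of the correlated noise. Because $\mathcal D$ has off-diagonal blocks $\frac{I_d-C}{2}$, the Gaussian increment is not componentwise independent. This is why I would use the symmetric square root $\sigma=(2\mathcal D)^{1/2}$ and standard $\xi_{k+1}\sim\mathcal N(0,I_{2d})$ rather than a block construction. Theorem~\ref{thm:clt} is stated for a general constant invertible $\sigma$ with $\mathcal D=\frac12\sigma\sigma^\top$, and $V(h)$ depends only on $\sigma\sigma^\top$ through $\|\sigma^\top\nabla\phi\|^2=2\langle\nabla\phi,\mathcal D\nabla\phi\rangle$. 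So this choice causes no difficulty, and the statement is invariant under the choice of factorization of $\mathcal D$.
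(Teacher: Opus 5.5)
Your proposal is correct and follows essentially the same route as the paper's proof. Both arguments identify GAUL with the constant-coefficient GNLD via $\mathcal D+\mathcal Q=\mathbf Q$ and obtain an invertible $\sigma$ with $\mathcal D=\tfrac12\sigma\sigma^\top$ from positive definiteness of $\mathcal D$; then both invoke Lemma~\ref{lem:invariance}, Proposition~\ref{prop:ltx} and Theorem~\ref{thm:clt} for the CLT, and Theorem~\ref{thm:strict-vr} for the strict comparison. Your explicit oracle embedding $\widehat{\nabla H}=(\widehat{\nabla U},r)$ and your remark that $V_{\mathrm{GAUL}}(h)$ depends only on $\sigma\sigma^\top=2\mathcal D$ are correct details that the paper leaves implicit.
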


\begin{proof}
By construction $\mathcal D+\mathcal Q=\mathbf Q$, so $-(\mathcal D+\mathcal Q)\nabla H=g_{\mathrm{GAUL}}$ and the limiting dynamics \eqref{eq:gaul-sde} is of the form \eqref{eq:sde} with the scheme \eqref{eq:gaul-em} as its stochastic gradient discretization \eqref{eq:em}. Since $\mathcal D$ is positive definite, a constant invertible $\sigma$ with $\mathcal D=\tfrac12\sigma\sigma^\top$ exists, and Lemma~\ref{lem:invariance} identifies the invariant measure as \eqref{eq:gaul-mu}. Under Assumptions~\ref{ass:U} and \ref{ass:diss}, Proposition~\ref{prop:ltx} applies, while Assumption~\ref{ass:sg} controls the stochastic gradient perturbation. The CLT then follows from Theorem~\ref{thm:clt}, and the strict comparison from Theorem~\ref{thm:strict-vr}.
\end{proof}

\begin{remark}\label{rem:gaul-projected}
As in Remark~\ref{rem:hfhr-projected}, Theorem~\ref{thm:clt} also applies to observables depending only on the position variable for the stochastic gradient GAUL scheme. Indeed, if $\bar h\in C_b^2(\R^d)$ and we define
\[
h(\theta,r):=\bar h(\theta),
\]
then $h\in C_b^2(\R^{2d})$, and Theorem~\ref{thm:clt} yields a small-stepsize CLT for the projected stochastic gradient estimator
\[
\frac{1}{[\eta^{-2}]}
\sum_{k=0}^{[\eta^{-2}]-1}\bar h(\theta_k).
\]
The variance comparison in Theorem~\ref{thm:strict-vr} also applies to this class of observables, since it is formulated for arbitrary test functions on the full augmented state space. Thus, for observables of the form $h(\theta,r)=\bar h(\theta)$, the comparison established in this paper is naturally between the full augmented GAUL dynamics and its reversible baseline obtained by setting $\mathcal Q=0$ while keeping the same symmetric part $\mathcal D$.
\end{remark}

\begin{remark}
Proposition~\ref{prop:gaul-example} shows that a positive-definite subclass of GAUL fits naturally into our constant-coefficient GNLD framework. In this sense, the stochastic gradient algorithm-level fluctuation analysis developed here applies not only to the classical NLD and the HFHR dynamics, but also to their preconditioned GAUL extension.
\end{remark}

%%%%%%%%%%%%%%%%%%%%%%%%%%%%%%%%%%%%%%%%%%%%%%%%

\section{Explicit Variance Reduction under Quadratic Potentials}\label{sec:quadratic}

In this section, we complement the $C_b^2$ theory of Theorem~\ref{thm:clt} with explicit Gaussian calculations for quadratic Hamiltonians and linear observables. The linear observables considered below are not covered directly by Theorem~\ref{thm:clt}; nevertheless, in the quadratic Gaussian setting, the Poisson equation can be solved explicitly and the martingale decomposition can be verified directly. This yields closed-form expressions for the small-stepsize fluctuation constant and makes the variance-reduction effect of the anti-symmetric perturbation completely transparent.

\begin{remark}[Linear observables and stochastic gradient perturbations]\label{rem:linear-observables-caveat}
The linear observables considered in this section are not covered directly by Theorem~\ref{thm:clt}, because they are not elements of $C_b^2$. Nevertheless, in the quadratic Gaussian setting, the Poisson equation can be solved explicitly, and the same martingale argument can be verified directly using Gaussian moment bounds. Under the fixed-variance scaling in Assumption~\ref{ass:sg}, the stochastic gradient perturbation still contributes only $o_P(1)$ to the normalized estimator. Therefore, the closed-form constants below are the same as those obtained in the exact-gradient discretization.
\end{remark}

\subsection{General linear-Gaussian formula}

Consider the constant-coefficient GNLD \eqref{eq:sde} with the quadratic Hamiltonian
\begin{equation}\label{eq:quad-H-general}
H(\mathbf z)=\frac12 \mathbf z^\top P\mathbf z,
\end{equation}
where $P\in\R^{n\times n}$ is a symmetric positive-definite constant matrix. Then
$\nabla H(\mathbf z)=P\mathbf z$,
and \eqref{eq:sde} becomes the linear diffusion
\begin{equation}\label{eq:quad-linear-sde}
d\mathbf z_t=-A\mathbf z_tdt+\sqrt{2\mathcal D}d\mathbf B_t,
\qquad
A:=(\mathcal D+\mathcal Q)P.
\end{equation}
The invariant measure is
$\mu(d\mathbf z)=Z^{-1}\exp\left(-\frac12\mathbf z^\top P\mathbf z\right)d\mathbf z
=\mathcal N(0,P^{-1})$.

We first compute the fluctuation constant for linear observables.

\begin{proposition}[Explicit formula for linear observables]\label{prop:quadratic-linear}
Assume that $H$ is given by \eqref{eq:quad-H-general}, with $P$ symmetric positive definite, and let
$h(\mathbf z)=u^\top \mathbf z$,
$u\in\R^n$.
Then $\mu(h)=0$, and the Poisson equation
$h=\mathcal A\phi$
admits the linear solution
\[
\phi(\mathbf z)=-\left(\left(A^{-1}\right)^\top u\right)^\top \mathbf z.
\]
Consequently,
\begin{equation}\label{eq:V-general-linear}
V(h)
=
2u^\top A^{-1}\mathcal D \left(A^{-1}\right)^\top u.
\end{equation}
For the reversible baseline $\mathcal Q=0$, one has
\begin{equation}\label{eq:V0-general-linear}
V_0(h)
=
2u^\top ( \mathcal D P )^{-1}\mathcal D\left((\mathcal D P)^{-1}\right)^\top u.
\end{equation}
\end{proposition}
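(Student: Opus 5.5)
The proof is a direct computation built on three facts: $\mu=\mathcal N(0,P^{-1})$ is centered, the generator acts on linear functions as a linear map, and the representation $V(h)=\mu(\|\sigma^\top\nabla\phi\|^2)$ from Theorem~\ref{thm:clt} reduces to a quadratic form when $\nabla\phi$ is constant. Since the invariant measure is symmetric about the origin, $\mu(h)=u^\top\mu(\mathbf z)=0$.

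\textbf{Step 1: solve the Poisson equation.} For the ansatz $\phi(\mathbf z)=w^\top\mathbf z$ we have $\nabla^2\phi=0$ and $\nabla\phi=w$. Hence
\[
\mathcal A\phi(\mathbf z)=-\langle A\mathbf z,w\rangle=-(A^\top w)^\top\mathbf z .
\]
Matching this with $h(\mathbf z)=u^\top\mathbf z$ requires $A^\top w=-u$, i.e.\ $w=-(A^\top)^{-1}u=-(A^{-1})^\top u$, which is the stated solution.

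\textbf{Step 2: invertibility of $A$.} This is the only genuinely non-formal point. If $A\mathbf v=0$, then since $P$ is invertible, $\mathbf x:=P\mathbf v$ satisfies $(\mathcal D+\mathcal Q)\mathbf x=0$. Pairing with $\mathbf x$ and using anti-symmetry, $\mathbf x^\top\mathcal Q\mathbf x=0$, so $\mathbf x^\top\mathcal D\mathbf x=0$. Positive definiteness of $\mathcal D$ then forces $\mathbf x=0$, hence $\mathbf v=0$. So $A$ is invertible. The same argument with $\mathcal Q=0$ shows $\mathcal DP$ is invertible.

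\textbf{Step 3: the variance formula.} Since $\nabla\phi\equiv w$ is constant, the formula $V(h)=\mu(\|\sigma^\top\nabla\phi\|^2)$ with $\sigma\sigma^\top=2\mathcal D$ gives
\[
V(h)=w^\top\sigma\sigma^\top w=2w^\top\mathcal D w .
\]
Substituting $w=-(A^{-1})^\top u$ yields $2u^\top A^{-1}\mathcal D(A^{-1})^\top u$, which is \eqref{eq:V-general-linear}. Setting $\mathcal Q=0$ gives $A=\mathcal DP$ and hence \eqref{eq:V0-general-linear}.

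\textbf{Main obstacle: justifying the CLT identification.} The delicate step is not the algebra but justifying that this $V(h)$ is the CLT constant, since $h\notin C_b^2$ and Theorem~\ref{thm:clt} does not apply directly. Following Remark~\ref{rem:linear-observables-caveat}, I would repeat the martingale decomposition of the proof of Theorem~\ref{thm:clt} with the explicit $\phi$. Because $\nabla^2\phi=0$, the Taylor expansion is exact at first order, so all second- and third-order remainder terms vanish identically. What remains is the Brownian martingale $H_\eta^B$ and the stochastic-gradient martingale $H_\eta^{\rm SG}$.

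Assumption~\ref{ass:sg} is imposed on $\mathbf z$, not on smoothness of $H$, so Lemma~\ref{lem:sg-moment} still gives uniform moment bounds here. These bounds control three things:
\begin{enumerate}[label=(\alph*)]
\item the boundary term $\sqrt\eta\,(\phi(\mathbf z_{[\eta^{-2}]}^\eta)-\phi(\mathbf z_0^\eta))$, which is $O(\sqrt\eta)$ in $L^1$;
\item the replacement of $[\eta^{-2}]^{-1}$ by $\eta^2$;
\item $H_\eta^{\rm SG}$, via the $L^2$ estimate of Lemma~\ref{lem:sg-negligible}.
\end{enumerate}
For $H_\eta^B$, the conditional variance sum equals $\eta^2[\eta^{-2}]\,\|\sigma^\top w\|^2\to V(h)$ deterministically. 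The conditional Lindeberg condition holds trivially because $\nabla\phi$ is constant. The martingale CLT then concludes the argument.
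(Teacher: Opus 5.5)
Your proof is correct and follows the paper's argument essentially step for step: centering of $\mu=\mathcal N(0,P^{-1})$, invertibility of $A$ via $\mathbf x^\top(\mathcal D+\mathcal Q)\mathbf x=\mathbf x^\top\mathcal D\mathbf x$, the linear ansatz forcing $A^\top w=-u$, and $V(h)=2w^\top\mathcal D w$ because $\nabla\phi$ is constant. Your final section justifying the CLT identification is not needed for this statement; the paper proves it separately in Proposition~\ref{prop:quad-sg-linear-clt}, using the same exact first-order telescoping and uniform moment bounds, and noting that the Brownian sum is exactly Gaussian, which matches your sketch.
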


\begin{proof}
Since $h(\mathbf z)=u^\top \mathbf z$ is odd and $\mu$ is centered Gaussian, $\mu(h)=0$.
The infinitesimal generator of \eqref{eq:quad-linear-sde} is given by
\[
\mathcal A f(\mathbf z)
=
-\langle A\mathbf z,\nabla f(\mathbf z)\rangle
+\tr\left(\mathcal D\nabla^2 f(\mathbf z)\right).
\]
Since $\mathcal D$ is symmetric positive definite and $\mathcal Q$ is anti-symmetric, the matrix $\mathcal D+\mathcal Q$ is invertible: if $(\mathcal D+\mathcal Q)v=0$, then
$0=v^\top(\mathcal D+\mathcal Q)v=v^\top\mathcal D v$,
and hence $v=0$. Therefore, $A=(\mathcal D+\mathcal Q)P$ is invertible as well.
We seek a linear solution $\phi(\mathbf z)=c^\top \mathbf z$. Then $\nabla\phi=c$ and $\nabla^2\phi=0$, so that
\[
\mathcal A\phi(\mathbf z)
=
-\langle A\mathbf z,c\rangle
=
-\left(A^\top c\right)^\top \mathbf z.
\]
Thus, $\mathcal A\phi=h$ if and only if
$-A^\top c=u$,
that is,
$c=-\left(A^{-1}\right)^\top u$.
Hence
$\nabla\phi=-\left(A^{-1}\right)^\top u$.
Since $\mathcal D=\frac12\sigma\sigma^\top$,
\[
V(h)=\mu\left(\left\|\sigma^\top\nabla\phi\right\|^2\right)
=
2\langle \nabla\phi,\mathcal D\nabla\phi\rangle
=
2u^\top A^{-1}\mathcal D \left(A^{-1}\right)^\top u,
\]
which proves \eqref{eq:V-general-linear}. Formula \eqref{eq:V0-general-linear} follows by setting $\mathcal Q=0$, so that $A_0=\mathcal D P$.
This completes the proof.
\end{proof}

The previous proposition identifies the variance constant at the Poisson-equation level. The next proposition verifies directly that the same constant indeed governs the stochastic gradient empirical average for linear observables in the quadratic Gaussian setting.

\begin{proposition}[Quadratic Gaussian CLT for linear observables under SG noise]\label{prop:quad-sg-linear-clt}
Assume that the quadratic Hamiltonian \eqref{eq:quad-H-general} is in force with $P$ symmetric positive definite, so that Assumption~\ref{ass:U} is satisfied. Assume further that the corresponding linear drift satisfies Assumption~\ref{ass:diss}, and that Assumption~\ref{ass:sg} holds. Let
$h(\mathbf z)=u^\top\mathbf z$,
$u\in\mathbb R^n$.
Then the stochastic gradient empirical average satisfies
\[
\eta^{-1/2}
\left(
\widehat\mu_\eta^{\rm SG}(h)-\mu(h)
\right)
\Rightarrow
\mathcal N(0,V(h)),
\]
as $\eta\rightarrow 0$, where
\[
V(h)=2u^\top A^{-1}\mathcal D\left(A^{-1}\right)^\top u,
\qquad
A=(\mathcal D+\mathcal Q)P.
\]
\end{proposition}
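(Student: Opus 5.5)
Since $h$ and the Poisson solution are linear, the whole argument is a direct martingale decomposition with no Taylor remainder. By Proposition~\ref{prop:quadratic-linear}, $\phi(\mathbf z)=c^\top\mathbf z$ with $c=-(A^{-1})^\top u$ solves $h=\mathcal A\phi$, so $h(\mathbf z)=-(A^\top c)^\top\mathbf z=-c^\top A\mathbf z$. The chain is
\[
\mathbf z_{k+1}^\eta=\mathbf z_k^\eta-\eta A\mathbf z_k^\eta+\sqrt\eta\sigma\xi_{k+1}+\eta\varepsilon_{k+1}.
\]
Applying $c^\top$ gives the exact identity
\[
\eta h(\mathbf z_k^\eta)=c^\top(\mathbf z_{k+1}^\eta-\mathbf z_k^\eta)-\sqrt\eta\, c^\top\sigma\xi_{k+1}-\eta\, c^\top\varepsilon_{k+1}.
\]
Because $\phi$ is linear, $\nabla^2\phi=0$, so there are no second- or third-order terms. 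This replaces Lemmas~\ref{lem:sg-negligible} and~\ref{lem:exact-remainder-sg-path}, which cannot be quoted since $\phi\notin C_b^4$.

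Multiplying by $\eta^{1/2}$ and summing over $k<N:=[\eta^{-2}]$ gives
\[
\eta^{3/2}\sum_{k<N}h(\mathbf z_k^\eta)=\sqrt\eta\,c^\top(\mathbf z_N^\eta-\mathbf z_0^\eta)-\eta\sum_{k<N}c^\top\sigma\xi_{k+1}-\eta^{3/2}\sum_{k<N}c^\top\varepsilon_{k+1}.
\]
Since $\mu(h)=0$ and $N^{-1}=\eta^2(1+O(\eta^2))$, the left side equals $\eta^{-1/2}\widehat\mu_\eta^{\rm SG}(h)$ up to a factor $1+O(\eta^2)$. That factor is harmless once we show the left side is tight, which will follow from the decomposition itself.

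The three terms are handled as follows.
\begin{enumerate}
\item \emph{Boundary term.} Its $L^1$ norm is at most $C\sqrt\eta\,\sup_{k\le N}\mathbb E\|\mathbf z_k^\eta\|$. Lemma~\ref{lem:sg-moment} applies verbatim, since it uses only Assumptions~\ref{ass:U}, \ref{ass:diss} and~\ref{ass:sg} and not boundedness of test functions. So this term is $O(\sqrt\eta)$.
\item \emph{Stochastic-gradient martingale.} By orthogonality of martingale increments, Assumption~\ref{ass:sg}(iii) with Jensen ($p>4\ge2$), and Lemma~\ref{lem:sg-moment}, its second moment is at most
\[
C\eta^3\sum_{k<N}\mathbb E\bigl(1+\|\mathbf z_k^\eta\|^2\bigr)\le C\eta\to0.
\]
\item \emph{Brownian term.} Here $\xi_{k+1}$ is independent of $\mathcal F_k$ and the coefficient $c^\top\sigma$ is deterministic. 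Hence $-\eta\sum_{k<N}c^\top\sigma\xi_{k+1}$ is exactly Gaussian with variance
\[
\eta^2N\|\sigma^\top c\|^2\to\|\sigma^\top c\|^2=2c^\top\mathcal D c=2u^\top A^{-1}\mathcal D(A^{-1})^\top u=V(h).
\]
No martingale CLT or Poisson LLN is needed.
\end{enumerate}

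Slutsky's theorem then gives the claim.

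The main point needing care is the one place earlier results are not directly applicable: confirming that the moment bound of Lemma~\ref{lem:sg-moment} and Assumption~\ref{ass:sg} suffice without bounded test functions. This holds because only first moments (boundary term) and second moments (SG martingale) of $\mathbf z_k^\eta$ enter, uniformly in $k\le N$. The quadratic $H$ has constant Hessian $P$, so $\nabla^3H=0$, and it satisfies Assumption~\ref{ass:U} with $m=\lambda_{\min}(P)$, $b=0$. Assumption~\ref{ass:diss} is assumed in the statement, so the lemma's hypotheses hold.
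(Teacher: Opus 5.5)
Your proposal is correct and follows essentially the same route as the paper: the exact linear telescoping identity with no Taylor remainder, negligibility of the boundary and stochastic-gradient martingale terms via Lemma~\ref{lem:sg-moment}, and the exact Gaussian law of the Brownian sum with variance $\eta^2[\eta^{-2}]\|\sigma^\top\nabla\phi\|^2\to V(h)$. The differences are cosmetic. You absorb the $[\eta^{-2}]^{-1}$ versus $\eta^2$ discrepancy as a deterministic factor $1+O(\eta^2)$, whereas the paper bounds it additively. You also bound the boundary term with first moments and the stochastic-gradient martingale with second moments, whereas the paper uses second and $p$-th moments.
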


\begin{proof}
By Proposition~\ref{prop:quadratic-linear}, the Poisson equation has the linear solution
\[
\phi(\mathbf z)
=
-\left(\left(A^{-1}\right)^\top u\right)^\top\mathbf z,
\qquad
A=(\mathcal D+\mathcal Q)P.
\]
Thus $\nabla\phi$ is constant and $\nabla^2\phi=0$.

First, since $\left|[\eta^{-2}]^{-1}-\eta^2\right|=\mathcal O(\eta^4)$ and Lemma~\ref{lem:sg-moment} gives
$\sup_{0\le k\le [\eta^{-2}]}\mathbb E\|\mathbf z_k^\eta\|<\infty$,
we have
\[
\eta^{-1/2}
\left|
\left(\frac1{[\eta^{-2}]}-\eta^2\right)
\sum_{k=0}^{[\eta^{-2}]-1}h(\mathbf z_k^\eta)
\right|
\to0
\]
in probability as $\eta\rightarrow 0$. Hence,
\[
\eta^{-1/2}
\left(
\widehat\mu_\eta^{\rm SG}(h)-\mu(h)
\right)
=
\eta^{3/2}\sum_{k=0}^{[\eta^{-2}]-1}\mathcal A\phi(\mathbf z_k^\eta)
+o_P(1).
\]

Since $\phi$ is linear and
\[
\mathbf z_{k+1}^\eta-\mathbf z_k^\eta
=
\eta g(\mathbf z_k^\eta)
+\sqrt\eta\sigma\xi_{k+1}
+\eta\varepsilon_{k+1},
\]
we have the exact identity
\[
\phi(\mathbf z_{k+1}^\eta)-\phi(\mathbf z_k^\eta)
=
\eta\left\langle\nabla\phi,g(\mathbf z_k^\eta)\right\rangle
+\sqrt\eta\left\langle\nabla\phi,\sigma\xi_{k+1}\right\rangle
+\eta\left\langle\nabla\phi,\varepsilon_{k+1}\right\rangle.
\]
Since $\mathcal A\phi=\langle g,\nabla\phi\rangle$, telescoping gives
\[
\begin{aligned}
\eta^{-1/2}
\left(
\widehat\mu_\eta^{\rm SG}(h)-\mu(h)
\right)
&=
\sqrt\eta\left(\phi(\mathbf z_{[\eta^{-2}]}^\eta)-\phi(\mathbf z_0^\eta)\right)\\
&\quad
-\eta\sum_{k=0}^{[\eta^{-2}]-1}
\langle\nabla\phi,\sigma\xi_{k+1}\rangle
-\eta^{3/2}\sum_{k=0}^{[\eta^{-2}]-1}
\langle\nabla\phi,\varepsilon_{k+1}\rangle
+o_P(1).
\end{aligned}
\]

The boundary term is negligible because $\phi$ is linear and Lemma~\ref{lem:sg-moment} gives
\[
\mathbb E\left[
\eta\left|\phi(\mathbf z_{[\eta^{-2}]}^\eta)-\phi(\mathbf z_0^\eta)\right|^2
\right]
\le
C\eta
\left(
\mathbb E\|\mathbf z_{[\eta^{-2}]}^\eta\|^2+\mathbb E\|\mathbf z_0^\eta\|^2
\right)
\to0,
\]
as $\eta\rightarrow 0$.
The stochastic gradient martingale is negligible since
\[
\mathbb E\left|
\eta^{3/2}\sum_{k=0}^{[\eta^{-2}]-1}
\langle\nabla\phi,\varepsilon_{k+1}\rangle
\right|^2
\le
C\eta^3\sum_{k=0}^{[\eta^{-2}]-1}
\mathbb E\left[1+\|\mathbf z_k^\eta\|^p\right]
\le C\eta\to0,
\]
as $\eta\rightarrow 0$.
Finally, because $\nabla\phi$ is constant,
$-\eta\sum_{k=0}^{[\eta^{-2}]-1}
\langle\nabla\phi,\sigma\xi_{k+1}\rangle$
is a centered Gaussian random variable with variance
\[
\eta^2[\eta^{-2}]\left\|\sigma^\top\nabla\phi\right\|^2
\to
\left\|\sigma^\top\nabla\phi\right\|^2
=
2u^\top A^{-1}\mathcal D\left(A^{-1}\right)^\top u
=
V(h).
\]
This proves the claim.
\end{proof}

\begin{remark}\label{rem:quadratic-general}
Proposition~\ref{prop:quadratic-linear} shows that, in the separate quadratic Gaussian calculation for linear observables, the small-stepsize fluctuation constant is determined by an explicit matrix formula. Therefore, the variance-reduction effect of the anti-symmetric perturbation can be quantified exactly, rather than only through an abstract comparison theorem.
\end{remark}

\subsection{The NLD case}
\label{sec:nld_quadratic}

We first consider the classical NLD example in Section~\ref{sec:examples}. Let
$\mathcal D=I_d$,
$\mathcal Q=J$,
$H(x)=\frac12 x^\top P x$,
where $P\in\R^{d\times d}$ is symmetric positive definite and $J^\top=-J$. Then
$A=(I_d+J)P$.
For a linear observable
$h(x)=u^\top x$,
$u\in\R^d$,
Proposition~\ref{prop:quadratic-linear} yields
\begin{equation}\label{eq:V-nld-general}
V_J(h)
=
2u^\top A^{-1}\left(A^{-1}\right)^\top u
=
2u^\top P^{-1}(I_d+J)^{-1}\left((I_d+J)^{-1}\right)^\top P^{-1}u.
\end{equation}
The reversible baseline corresponds to $J=0$, and its fluctuation constant is
\begin{equation}\label{eq:V-nld-reversible}
V_0(h)=2u^\top P^{-2}u.
\end{equation}

The comparison can also be written in a more transparent form in the isotropic case $P=I_d$. More generally, if we set $w:=P^{-1}u$, then \eqref{eq:V-nld-general} and \eqref{eq:V-nld-reversible} become
\[
V_J(h)=2w^\top (I_d+J)^{-1}\left((I_d+J)^{-1}\right)^\top w,
\qquad
V_0(h)=2\|w\|^2.
\]
Thus, the variance-reduction mechanism is the same as in the isotropic case.

By Proposition~\ref{prop:quad-sg-linear-clt}, these constants also describe the stochastic gradient NLD scheme \eqref{eq:nld-example-em} in the quadratic Gaussian setting under Assumptions~\ref{ass:diss} and~\ref{ass:sg}.

\begin{corollary}[Isotropic case]\label{cor:nld-isotropic}
Consider a linear observable $h(x)=u^\top x$, $u\in\R^d$.
Assume $P=I_d$, so that $H(x)=\frac12\|x\|^2$. Then for every $u\in\R^d$,
\begin{equation}\label{eq:V-nld-isotropic}
V_J(h)
=
2u^\top \left(I_d-J^2\right)^{-1}u.
\end{equation}
In contrast,
$V_0(h)=2\|u\|^2$.
Hence
$V_J(h)\le V_0(h)$
for every $u\in\R^d$, with strict inequality whenever $u$ has a nontrivial component in a direction on which $J\neq 0$.
\end{corollary}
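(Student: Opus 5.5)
Starting from the general formula \eqref{eq:V-nld-general} with $P=I_d$, we get $A=I_d+J$, and therefore
\[
V_J(h)=2u^\top (I_d+J)^{-1}\bigl((I_d+J)^{-1}\bigr)^\top u .
\]
Since $J^\top=-J$, we have $\bigl((I_d+J)^{-1}\bigr)^\top=(I_d-J)^{-1}$. Also $I_d+J$ and $I_d-J$ commute, so
\[
(I_d+J)^{-1}(I_d-J)^{-1}=\bigl((I_d-J)(I_d+J)\bigr)^{-1}=(I_d-J^2)^{-1}.
\]
This gives \eqref{eq:V-nld-isotropic}. Invertibility of $I_d\pm J$ was already shown in the proof of Proposition~\ref{prop:quadratic-linear}. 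Independently, $-J^2=J^\top J\succeq 0$, so $I_d-J^2=I_d+J^\top J\succeq I_d$ is symmetric positive definite. The baseline value $V_0(h)=2\|u\|^2$ is \eqref{eq:V-nld-reversible} with $P=I_d$.

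For the comparison, the plan is to diagonalize $S:=J^\top J=-J^2$. It is symmetric positive semidefinite, so $S=O\Lambda O^\top$ with $\Lambda=\mathrm{diag}(\lambda_i)$ and all $\lambda_i\ge0$. Writing $v=O^\top u$, we get
\[
V_0(h)-V_J(h)=2\sum_i v_i^2\Bigl(1-\frac{1}{1+\lambda_i}\Bigr)=2\sum_i v_i^2\frac{\lambda_i}{1+\lambda_i}\ge 0.
\]
Equivalently, $V_0-V_J=2u^\top S(I_d+S)^{-1}u$, and $S(I_d+S)^{-1}$ is a commuting product of positive semidefinite matrices, hence positive semidefinite.

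For strictness, I will make precise the phrase ``a nontrivial component in a direction on which $J\neq0$'' as $u\notin\ker J$, i.e.\ $u$ has a nonzero projection onto $(\ker J)^\perp=\mathrm{range}(J^\top J)$. Since $\ker S=\ker J$ (because $u^\top S u=\|Ju\|^2$), the condition $u\notin\ker J$ means some $v_i\neq0$ with $\lambda_i>0$. That term makes the sum strictly positive. Conversely, if $Ju=0$ the two constants coincide, which confirms that this is the right interpretation.

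I expect no real obstacle here, since everything is finite-dimensional linear algebra. The only points needing care are two identities: that the transpose of the inverse is $(I_d-J)^{-1}$, and that the product collapses to $(I_d-J^2)^{-1}$ because the factors commute. This is also consistent with Theorem~\ref{thm:strict-vr}: there $\phi_0=-u^\top x$, so the integrand in \eqref{eq:strict-cond} is $\langle Jx,u\rangle^2$, whose $\mu$-integral is $\|J^\top u\|^2=\|Ju\|^2$. It is positive exactly when $Ju\neq0$.
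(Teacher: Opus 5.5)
Your proposal is correct and follows the paper's proof. Both specialize the general NLD formula to $A=I_d+J$, collapse $(I_d+J)^{-1}\bigl((I_d+J)^{-1}\bigr)^\top$ to $(I_d-J^2)^{-1}$ using $J^\top=-J$, and compare the two constants through $I_d-J^2=I_d+J^\top J\succeq I_d$. Your eigen-decomposition of $J^\top J$ spells out the strictness claim, which holds exactly when $Ju\neq 0$; the paper only asserts this in a single line.
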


\begin{proof}
If $P=I_d$, then $A=I_d+J$, and \eqref{eq:V-nld-general} gives
\[
V_J(h)=2u^\top (I_d+J)^{-1}\left((I_d+J)^{-1}\right)^\top u.
\]
Since $J^\top=-J$,
\[
(I_d+J)(I_d+J)^\top=(I_d+J)(I_d-J)=I_d-J^2,
\]
and therefore
\[
(I_d+J)^{-1}\left((I_d+J)^{-1}\right)^\top=\left(I_d-J^2\right)^{-1}.
\]
This proves \eqref{eq:V-nld-isotropic}. Since $J^2$ is negative semidefinite, $I_d-J^2\succeq I_d$, and hence
$(I_d-J^2)^{-1}\preceq I_d$,
which implies $V_J(h)\le V_0(h)$. The inequality is strict unless $u$ lies entirely in the nullspace of $J$.
This completes the proof.
\end{proof}

\begin{remark}\label{rem:nld-quadratic}
Corollary~\ref{cor:nld-isotropic} gives an explicit and fully quantitative version of the variance-reduction effect for NLD in the Gaussian case. In particular, the reduction factor depends on the strength and geometry of the anti-symmetric perturbation $J$ through the matrix $(I_d-J^2)^{-1}$.
\end{remark}

\subsection{The HFHR case}

We next consider the HFHR dynamics in Section~\ref{sec:examples} under a quadratic potential
$U(\theta)=\frac12 \theta^\top K\theta$,
where $K\in\R^{d\times d}$ is symmetric positive definite. 
Then, we have the following corollary.

\begin{corollary}\label{cor:HFHR}
Consider a linear position observable
$h(\theta,r)=u^\top\theta$, $u\in\R^d$.
Assume that $U(\theta)=\frac12 \theta^\top K\theta$ where $K\in\R^{d\times d}$ is symmetric positive definite. Then, 
\begin{equation}\label{eq:V-hfhr-closed}
V_{\mathrm{HFHR}}(h)
=
\frac{2\gamma}{1+\alpha\gamma}u^\top K^{-2}u,
\quad
\text{and}
\quad
V_{\mathrm{HFHR},0}(h)
=
\frac{2}{\alpha}u^\top K^{-2}u.
\end{equation}
Consequently,
\begin{equation}\label{eq:V-hfhr-ratio}
\frac{V_{\mathrm{HFHR}}(h)}{V_{\mathrm{HFHR},0}(h)}
=
\frac{\alpha\gamma}{1+\alpha\gamma}
<1.
\end{equation}

By Proposition~\ref{prop:quad-sg-linear-clt}, the same constants describe the stochastic gradient HFHR scheme \eqref{eq:hfhr-em} in the quadratic Gaussian setting under Assumptions~\ref{ass:diss} and~\ref{ass:sg}.

\end{corollary}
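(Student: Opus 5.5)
The plan is to apply Proposition~\ref{prop:quadratic-linear} directly, with the HFHR data in block form. Here $\mathbf z=(\theta,r)$ and $H(\theta,r)=\frac12\theta^\top K\theta+\frac12\|r\|^2$, so $P=\mathrm{diag}(K,I_d)$. The observable is $h(\mathbf z)=u_{\mathbf z}^\top\mathbf z$ with $u_{\mathbf z}:=(u,0)$, and $\mathcal D,\mathcal Q$ are as in \eqref{eq:hfhr-DQ}.

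Rather than inverting $A=(\mathcal D+\mathcal Q)P$ explicitly, I will solve the linear system $A^\top c=-u_{\mathbf z}$ for $c=\nabla\phi=(c_1,c_2)$. I will then evaluate $V(h)=2c^\top\mathcal D c=2(\alpha\|c_1\|^2+\gamma\|c_2\|^2)$, which is equivalent to \eqref{eq:V-general-linear}.

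\textbf{Non-reversible case.} Since $P$ is symmetric and $\mathcal Q^\top=-\mathcal Q$,
\[
A^\top=P(\mathcal D-\mathcal Q)=\begin{pmatrix}\alpha K & K\\ -I_d & \gamma I_d\end{pmatrix}.
\]
The system therefore reads $\alpha Kc_1+Kc_2=-u$ and $-c_1+\gamma c_2=0$. The second equation gives $c_1=\gamma c_2$. Substituting into the first gives $(1+\alpha\gamma)Kc_2=-u$, so
\[
c_2=-\frac{K^{-1}u}{1+\alpha\gamma},\qquad c_1=-\frac{\gamma K^{-1}u}{1+\alpha\gamma}.
\]
Then
\[
V_{\mathrm{HFHR}}(h)=\frac{2(\alpha\gamma^2+\gamma)}{(1+\alpha\gamma)^2}\,u^\top K^{-2}u=\frac{2\gamma}{1+\alpha\gamma}\,u^\top K^{-2}u,
\]
using that $K^{-1}$ is symmetric.

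\textbf{Reversible baseline.} With $\mathcal Q=0$ we have $A_0^\top=P\mathcal D=\mathrm{diag}(\alpha K,\gamma I_d)$. This gives $c_1=-\alpha^{-1}K^{-1}u$ and $c_2=0$, so $V_{\mathrm{HFHR},0}(h)=2\alpha\cdot\alpha^{-2}u^\top K^{-2}u=\frac{2}{\alpha}u^\top K^{-2}u$.

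\textbf{Ratio and stochastic-gradient statement.} Dividing the two constants gives $\alpha\gamma/(1+\alpha\gamma)<1$, which is well defined when $u\neq0$. The claim about the stochastic gradient HFHR scheme is then immediate from Proposition~\ref{prop:quad-sg-linear-clt}, whose hypotheses are exactly those stated.

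There is no real obstacle here. The only care needed is the transpose bookkeeping: one must use $A^\top=P(\mathcal D-\mathcal Q)$ rather than $P(\mathcal D+\mathcal Q)$, and correctly identify the invertible block system.
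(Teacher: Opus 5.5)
Your proposal is correct and follows the paper's proof. Both apply Proposition~\ref{prop:quadratic-linear} with $P=\mathrm{diag}(K,I_d)$ and $\bar u=(u,0)$. Your $\nabla\phi=-\frac{1}{1+\alpha\gamma}(\gamma K^{-1}u,\,K^{-1}u)$ equals $-(A^{-1})^\top\bar u$ for the explicit inverse $A^{-1}=\frac{1}{1+\alpha\gamma}\begin{pmatrix}\gamma K^{-1} & K^{-1}\\ -I_d & \alpha I_d\end{pmatrix}$ that the paper uses. Solving $A^\top c=-\bar u$ is therefore just a slightly more economical way to do the same computation, and your caveat that the ratio requires $u\neq0$ is a correct refinement the paper leaves implicit.
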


\begin{proof}
Under our assumption, 
$U(\theta)=\frac12 \theta^\top K\theta$ where $K\in\R^{d\times d}$ is symmetric positive definite.
Then, we have
\[
H(\theta,r)=\frac12\theta^\top K\theta+\frac12\|r\|^2
=
\frac12
\begin{pmatrix}
\theta\\ r
\end{pmatrix}^{\top}
\begin{pmatrix}
K & 0\\
0 & I_d
\end{pmatrix}
\begin{pmatrix}
\theta\\ r
\end{pmatrix}.
\]
Thus $P=
\begin{pmatrix}
K & 0\\
0 & I_d
\end{pmatrix}$,
$\mathcal D=
\begin{pmatrix}
\alpha I_d & 0\\
0 & \gamma I_d
\end{pmatrix}$,
$\mathcal Q=
\begin{pmatrix}
0 & -I_d\\
I_d & 0
\end{pmatrix}$,
and therefore
\begin{equation}\label{eq:A-hfhr}
A=(\mathcal D+\mathcal Q)P
=
\begin{pmatrix}
\alpha K & -I_d\\
K & \gamma I_d
\end{pmatrix}.
\end{equation}

For a position observable
$h(\theta,r)=u^\top\theta$,
$u\in\R^d$,
write
$\bar u=
\begin{pmatrix}
u\\
0
\end{pmatrix}
\in\R^{2d}$.
Then Proposition~\ref{prop:quadratic-linear} yields
\begin{equation}\label{eq:V-hfhr-general}
V_{\mathrm{HFHR}}(h)
=
2\bar u^\top A^{-1}\mathcal D \left(A^{-1}\right)^\top\bar u.
\end{equation}
In this case, the inverse of $A$ can be computed explicitly:
$A^{-1}
=
\frac{1}{1+\alpha\gamma}
\begin{pmatrix}
\gamma K^{-1} & K^{-1}\\
-I_d & \alpha I_d
\end{pmatrix}$.
Substituting this into \eqref{eq:V-hfhr-general} gives
\begin{equation}
V_{\mathrm{HFHR}}(h)
=
\frac{2\gamma}{1+\alpha\gamma}u^\top K^{-2}u.
\end{equation}

The reversible baseline is obtained by setting $\mathcal Q=0$, so that
$A_0=\mathcal D P
=
\begin{pmatrix}
\alpha K & 0\\
0 & \gamma I_d
\end{pmatrix}$.
Hence,
\begin{equation}
V_{\mathrm{HFHR},0}(h)
=
2\bar u^\top A_0^{-1}\mathcal D (A_0^{-1})^\top\bar u
=
\frac{2}{\alpha}u^\top K^{-2}u.
\end{equation}
Consequently, \eqref{eq:V-hfhr-ratio} holds.
%\begin{equation}
%\frac{V_{\mathrm{HFHR}}(h)}{V_{\mathrm{HFHR},0}(h)}
%=
%\frac{\alpha\gamma}{1+\alpha\gamma}
%<1.
%\end{equation}
This completes the proof.
\end{proof}

\begin{remark}
The baseline formula in \eqref{eq:V-hfhr-closed} agrees with the fluctuation constant of the overdamped Langevin dynamics in the $\theta$-variable, which is consistent with Remark~\ref{rem:hfhr-projected}. Thus, for quadratic targets and linear position observables, the variance reduction of HFHR relative to its reversible baseline can be interpreted directly as variance reduction relative to overdamped Langevin dynamics.
\end{remark}

\begin{remark}\label{rem:hfhr-quadratic}
Formulas \eqref{eq:V-hfhr-general}--\eqref{eq:V-hfhr-closed} provide a convenient quantitative diagnostic for the effect of the HFHR coupling. In particular, one may compare
$\frac{V_{\mathrm{HFHR}}(h)}{V_{\mathrm{HFHR},0}(h)}$
as a function of the parameters $\alpha$ and $\gamma$. By \eqref{eq:V-hfhr-ratio}, this ratio is exactly $\alpha\gamma/(1+\alpha\gamma)$, so that in the quadratic case the reduction factor is independent of the spectrum of $K$.
\end{remark}

\subsection{The GAUL case}

We now turn to the GAUL dynamics in Section~\ref{sec:examples}, again under the quadratic potential
$U(\theta)=\frac12 \theta^\top K\theta$,
with $K\in\R^{d\times d}$ symmetric positive definite. Then
$H(\theta,r)=\frac12\theta^\top K\theta+\frac12\|r\|^2$
and $P=
\begin{pmatrix}
K & 0\\
0 & I_d
\end{pmatrix}$.
Recall that
$\mathbf Q=
\begin{pmatrix}
aC & -C\\
I_d & \gamma I_d
\end{pmatrix}$,
$\mathcal D=\frac{\mathbf Q+\mathbf Q^\top}{2}$,
$\mathcal Q=\frac{\mathbf Q-\mathbf Q^\top}{2}$,
where $a,\gamma>0$ and $C$ is symmetric positive definite. Since $\mathcal D+\mathcal Q=\mathbf Q$, we have
\begin{equation}\label{eq:A-gaul}
A=(\mathcal D+\mathcal Q)P=\mathbf QP
=
\begin{pmatrix}
aCK & -C\\
K & \gamma I_d
\end{pmatrix}.
\end{equation}

\begin{corollary}[Quadratic GAUL for linear position observables]\label{cor:GAUL}
Consider the position observable
$h(\theta,r)=u^\top\theta$, $u\in\R^d$,
and assume that
$U(\theta)=\frac12\theta^\top K\theta$,
where $K\in\R^{d\times d}$ is symmetric positive definite. Let
$\mathbf Q=
\begin{pmatrix}
aC & -C\\
I_d & \gamma I_d
\end{pmatrix}$,
$\mathcal D=\frac{\mathbf Q+\mathbf Q^\top}{2}$,
$\mathcal Q=\frac{\mathbf Q-\mathbf Q^\top}{2}$,
where $a,\gamma>0$, $C$ is symmetric positive definite, and $\mathcal D$ is positive definite. Then
\begin{equation}\label{eq:V-gaul-closed-cor}
V_{\mathrm{GAUL}}(h)
=
\frac{2\gamma}{1+a\gamma}u^\top K^{-1}C^{-1}K^{-1}u,
\end{equation}
and
\begin{equation}\label{eq:V-gaul-baseline-closed-cor}
V_{\mathrm{GAUL},0}(h)
=
2u^\top K^{-1}S^{-1}K^{-1}u,
\qquad
S:=aC-\frac{1}{4\gamma}(I_d-C)^2.
\end{equation}
Under the assumptions of Theorem~\ref{thm:strict-vr}, it follows that
\[
V_{\mathrm{GAUL}}(h)\le V_{\mathrm{GAUL},0}(h).
\]
Moreover, the inequality is strict whenever the strict condition in Theorem~\ref{thm:strict-vr} holds.

By Proposition~\ref{prop:quad-sg-linear-clt}, the same constants describe the stochastic gradient GAUL scheme \eqref{eq:gaul-em} in the quadratic Gaussian setting under Assumptions~\ref{ass:diss} and~\ref{ass:sg}.

\end{corollary}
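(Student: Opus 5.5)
The plan is to compute both constants directly from the matrix formulas \eqref{eq:V-general-linear} and \eqref{eq:V0-general-linear} of Proposition~\ref{prop:quadratic-linear}. We apply them with $P=\mathrm{diag}(K,I_d)$, the lifted vector $\bar u=(u^\top,0)^\top$ (as in the proof of Corollary~\ref{cor:HFHR}), and $A=\mathbf QP$ from \eqref{eq:A-gaul}. The comparison and strictness statements will then simply be read off from Theorem~\ref{thm:strict-vr}, which is assumed to apply. The final sentence about the stochastic gradient scheme is an immediate citation of Proposition~\ref{prop:quad-sg-linear-clt}.

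\textbf{Non-reversible constant.} Rather than inverting $A$ blockwise, I would set $w:=(A^{-1})^\top\bar u$. Then $V_{\mathrm{GAUL}}(h)=2w^\top\mathcal D w$, and $w=(w_1,w_2)$ solves $A^\top w=\bar u$, where
\[
A^\top=\begin{pmatrix} aKC & K\\ -C & \gamma I_d\end{pmatrix}.
\]
The second block row gives $w_2=\gamma^{-1}Cw_1$. Substituting into the first row gives $K(aC+\gamma^{-1}C)w_1=u$. Writing $v:=K^{-1}u$ and $s:=(1+a\gamma)^{-1}$, this yields
\[
w_1=s\gamma C^{-1}v,\qquad w_2=sv.
\]
Expanding with the block form \eqref{eq:gaul-DQ},
\[
w^\top\mathcal D w=a\,w_1^\top Cw_1+w_1^\top(I_d-C)w_2+\gamma\|w_2\|^2 .
\]
The terms are $a s^2\gamma^2 v^\top C^{-1}v$, then $s^2\gamma(v^\top C^{-1}v-\|v\|^2)$, then $s^2\gamma\|v\|^2$. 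The $\|v\|^2$ contributions cancel, leaving $s^2\gamma(1+a\gamma)\,v^\top C^{-1}v=s\gamma\, v^\top C^{-1}v$. This gives \eqref{eq:V-gaul-closed-cor}, using that $K$ is symmetric so $v^\top C^{-1}v=u^\top K^{-1}C^{-1}K^{-1}u$.

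\textbf{Baseline constant.} Here $A_0=\mathcal DP$, so $A_0^{-1}\mathcal D(A_0^{-1})^\top=P^{-1}\mathcal D^{-1}P^{-1}$. Since $P^{-1}\bar u=(v,0)$, we get $V_{\mathrm{GAUL},0}(h)=2v^\top(\mathcal D^{-1})_{11}v$. By the block-inverse (Schur complement) formula, $(\mathcal D^{-1})_{11}$ is the inverse of $aC-\tfrac{I_d-C}{2}\gamma^{-1}\tfrac{I_d-C}{2}=S$, which gives \eqref{eq:V-gaul-baseline-closed-cor}. Positive definiteness of $\mathcal D$ guarantees that the Schur complement $S$ is positive definite, hence invertible; this is the only place the positivity hypothesis on $\mathcal D$ enters.

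\textbf{Comparison and main obstacle.} Since both constants coincide with the operator quantities of Remark~\ref{rem:V-operator}, Theorem~\ref{thm:strict-vr} gives $V_{\mathrm{GAUL}}(h)\le V_{\mathrm{GAUL},0}(h)$, with strict inequality under \eqref{eq:strict-cond}. The main obstacle is purely algebraic: keeping track of the non-commuting blocks $C$ and $K$ and of the off-diagonal block $(I_d-C)/2$ of $\mathcal D$. Working with $w=(A^{-1})^\top\bar u$, instead of the full inverse, avoids ever needing $CK=KC$. A direct matrix proof of $\frac{\gamma}{1+a\gamma}C^{-1}\preceq S^{-1}$ is not needed given the theorem, so I would not attempt it.
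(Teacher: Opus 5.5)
Your proposal is correct and follows essentially the paper's route: you apply Proposition~\ref{prop:quadratic-linear} with $P=\mathrm{diag}(K,I_d)$ and $\bar u=(u^\top,0)^\top$, obtain the non-reversible constant from $w=(A^{-1})^\top\bar u$, get the baseline from the Schur complement of the $\gamma I_d$ block of $\mathcal D$, and cite Theorem~\ref{thm:strict-vr} for the comparison and strictness. The only difference is cosmetic: you solve $A^\top w=\bar u$ where the paper writes out $A^{-1}$ explicitly. That explicit inverse also does not require $CK=KC$, so your route offers no real advantage on that point, but your algebra checks out line by line.
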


\begin{proof}
For the position observable
$h(\theta,r)=u^\top\theta$,
$u\in\R^d$,
define
$\bar u=
\begin{pmatrix}
u\\
0
\end{pmatrix}$.
By Proposition~\ref{prop:quadratic-linear},
\begin{equation}\label{eq:V-gaul-general}
V_{\mathrm{GAUL}}(h)
=
2\bar u^\top A^{-1}\mathcal D \left(A^{-1}\right)^\top\bar u,
\end{equation}
where
$A=(\mathcal D+\mathcal Q)P=\mathbf QP
=
\begin{pmatrix}
aCK & -C\\
K & \gamma I_d
\end{pmatrix}$.

A direct computation gives
$A^{-1}
=
\frac{1}{1+a\gamma}
\begin{pmatrix}
\gamma K^{-1}C^{-1} & K^{-1}\\
-C^{-1} & a I_d
\end{pmatrix}$.
Substituting this into \eqref{eq:V-gaul-general} yields
$V_{\mathrm{GAUL}}(h)
=
\frac{2\gamma}{1+a\gamma}u^\top K^{-1}C^{-1}K^{-1}u$.

For the reversible baseline, $A_0=\mathcal DP$. Since
$\mathcal D
=
\begin{pmatrix}
aC & \frac{I_d-C}{2}\\[1mm]
\frac{I_d-C}{2} & \gamma I_d
\end{pmatrix}$,
the Schur complement of the lower-right block is
$S:=aC-\frac{1}{4\gamma}(I_d-C)^2$.
Because $\mathcal D$ is positive definite, $S$ is positive definite as well. A block computation then yields
$V_{\mathrm{GAUL},0}(h)
=
2u^\top K^{-1}S^{-1}K^{-1}u$.
Finally, under the assumptions of Theorem~\ref{thm:strict-vr}, the inequality
$V_{\mathrm{GAUL}}(h)\le V_{\mathrm{GAUL},0}(h)$
follows from that theorem, and it is strict under the strict condition stated there.
\end{proof}

Unlike the HFHR case, the GAUL formulas do not immediately reduce to a scalar multiplicative factor. Therefore, the variance-reduction effect is not as transparent directly from \eqref{eq:V-gaul-closed-cor}--\eqref{eq:V-gaul-baseline-closed-cor}. Nevertheless, under the assumptions of Theorem~\ref{thm:strict-vr}, the general comparison theorem established earlier implies
$V_{\mathrm{GAUL}}(h)\le V_{\mathrm{GAUL},0}(h)$,
with strict inequality under the strict condition stated there. Thus, in the quadratic GAUL setting, the explicit formulas are consistent with the general variance-reduction principle, even though the comparison is less transparent than in the HFHR case.

\begin{remark}\label{rem:gaul-quadratic}
Formulas \eqref{eq:V-gaul-closed-cor} and \eqref{eq:V-gaul-baseline-closed-cor} show that the variance-reduction effect of GAUL can also be computed explicitly in the quadratic setting. In contrast to HFHR, the comparison does not reduce to a scalar factor independent of the geometry, because it depends on both $C^{-1}$ and the Schur complement $S^{-1}$. When $C=I_d$ and $a=\alpha$, these formulas reduce to the HFHR expressions \eqref{eq:V-hfhr-closed}.
\end{remark}

%%%%%%%%%%%%%%%%%%%%%%%%%%%%%%%%%%%%%%%%%%%%%%%%
\section{Numerical Experiments}\label{sec:numerical}

% rewrite the section to stochastic graident version and reorder the whole section. The old version is moved to numerical_junk_file.

In this section, we will first report stochastic gradient versions of three non-reversible samplers, then we will implement Bayesian linear regression with synthetic data and Bayesian logistic regression with real data.

\subsection{Basic Numerical Experiments}
\label{sec:basic:numerical}
We now report basic numerical experiments for the stochastic gradient versions of the three non-reversible samplers: SGNLMC, SGHFHRMC, and SGGAULMC. In all experiments, the stochastic gradient oracle is
$$
\widehat{\nabla U}(\theta_k,\zeta_{k+1})
=\nabla U(\theta_k)+\tau\zeta_{k+1},\qquad\zeta_{k+1}\sim \mathcal N(0,I_d),
$$
with $\tau=1$, where the oracle noise is independent of the Gaussian noise in the numerical scheme. We take $d=3$ and compute empirical RMSEs from $2000$ independent trajectories, each averaged over $N_\eta=\lfloor \eta^{-2}\rfloor$ iterates, with $\eta\in\left\{0.1,0.1\times 2^{-1},\ldots,0.1\times 2^{-6}\right\}$. For each algorithm, we consider both a quadratic test case $U(\theta)=\frac12\|\theta\|^2$, with a linear observable, and a nonquadratic test case $U(\theta)=\frac12\|\theta\|^2+\beta\sum_{i=1}^3\cos(\theta_i)$ and $\beta=1.2$ with a nonlinear observable. In both cases the observable is odd and the potential is even, so that $\mu(h)=0$.

\subsubsection{The SGNLMC case}
For SGNLMC, we use $J =
\begin{pmatrix}
0 & -1 & 1 \\
1 & 0 & -1\\
-1 & 1 & 0
\end{pmatrix}$, $u=\left(\frac{\sqrt2}{2},-\frac{\sqrt2}{2},0\right)^\top$, and take $h(\theta)=u^\top\theta$ in the quadratic experiment and
$h(\theta)=\tanh(u^\top\theta)$ in the nonquadratic experiment. The reversible baseline is obtained by setting $J=0$.

\begin{figure}[H]
    \centering
    \begin{subfigure}{0.48\linewidth}
        \centering
        \includegraphics[width=0.85\linewidth]{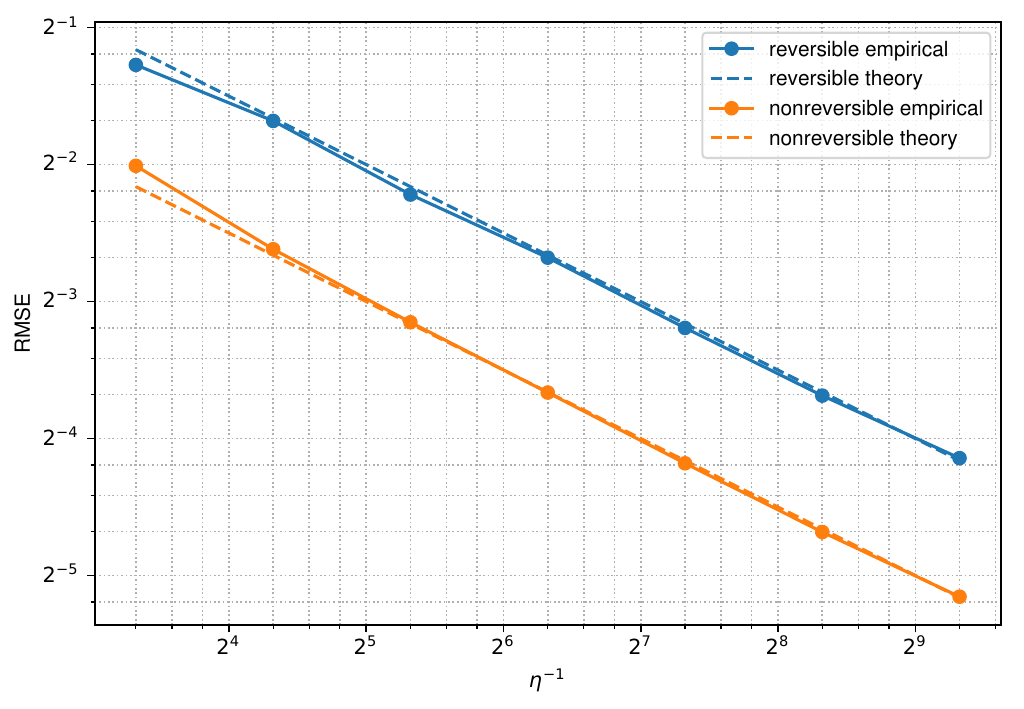}
        \caption{RMSE comparison.}
        \label{fig:nld_quadratic_loglog}
    \end{subfigure}
    \hfill
    \begin{subfigure}{0.48\linewidth}
        \centering
        \includegraphics[width=0.85\linewidth]{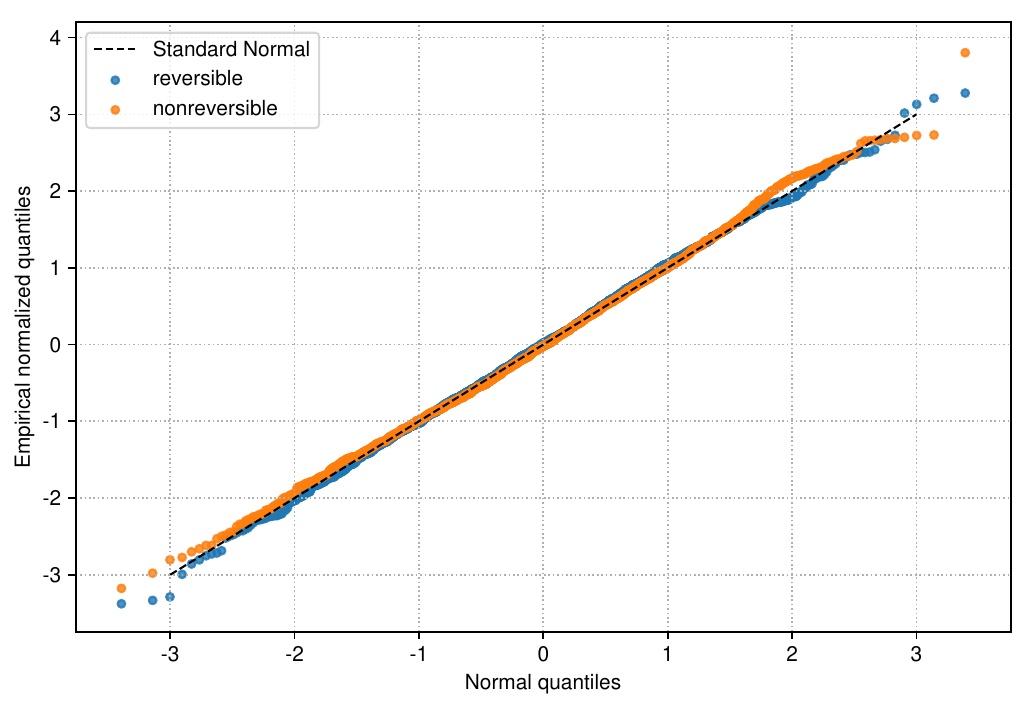}
        \caption{Q-Q plot for the normalized empirical errors.}
        \label{fig:nld_quadratic_qq_plot}
    \end{subfigure}
    \caption{Numerical results for SGNLMC with quadratic potential.}
    \label{fig:qq:nld}
\end{figure}

\begin{figure}[H]
    \centering
    \begin{subfigure}{0.48\linewidth}
        \centering
        \includegraphics[width=0.85\linewidth]{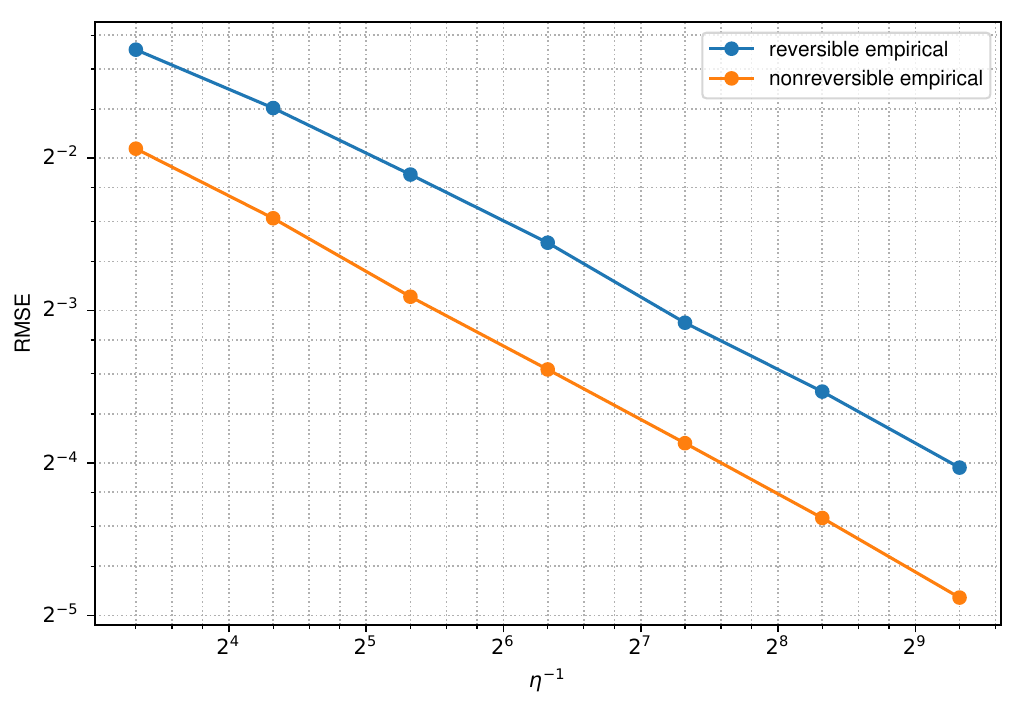}
        \caption{RMSE comparison.}
        \label{fig:nld_nonquadratic_loglog}
    \end{subfigure}
    \hfill
    \begin{subfigure}{0.48\linewidth}
        \centering
        \includegraphics[width=0.85\linewidth]{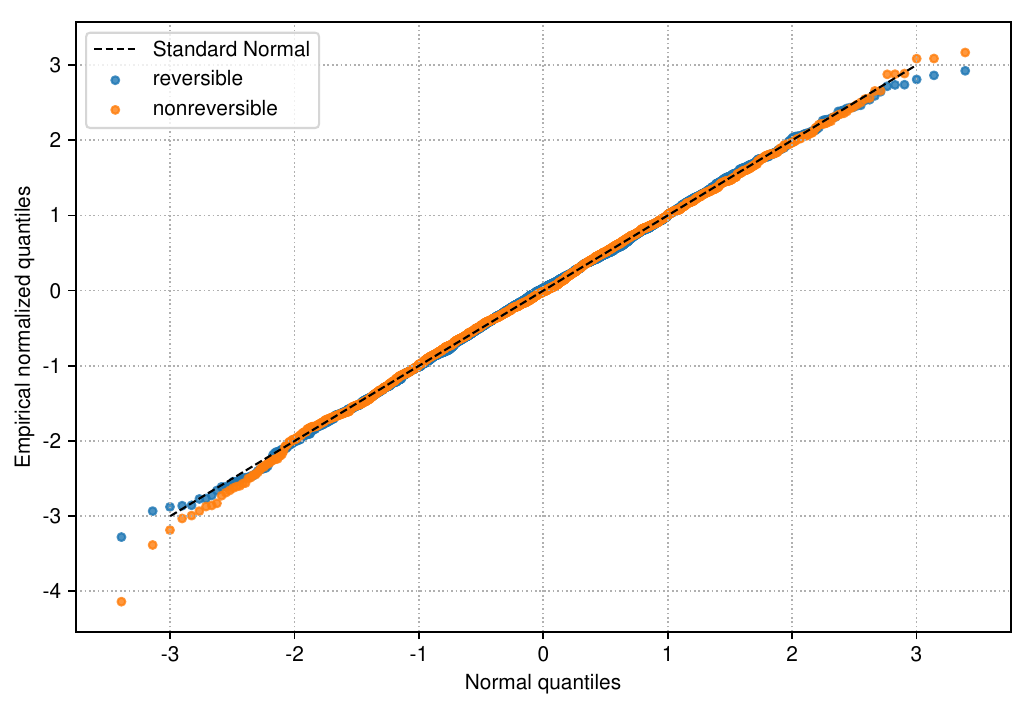}
        \caption{Q-Q plot for the studentized normalized empirical errors.}
        \label{fig:nld_nonquadratic_qq_plot}
    \end{subfigure}
    \caption{Numerical results for SGNLMC with nonquadratic potential.}
    \label{fig:qq:nld:2}
\end{figure}

\subsubsection{The SGHFHRMC case}
For SGHFHRMC, we take $\alpha=1$ and $\gamma=2$, with state variable $(\theta,r)\in\mathbb R^{2d}$. The observable is position only: $h(\theta,r)=u^\top\theta$ in the quadratic experiment and $h(\theta,r)=\tanh(u^\top\theta)$ in the nonquadratic experiment. The reversible baseline is given by the stochastic gradient version of Euler–Maruyama discretization of \eqref{eq:hfhr-baseline-sde}.

\begin{figure}[H]
    \centering
    \begin{subfigure}{0.48\linewidth}
        \centering
        \includegraphics[width=0.85\linewidth]{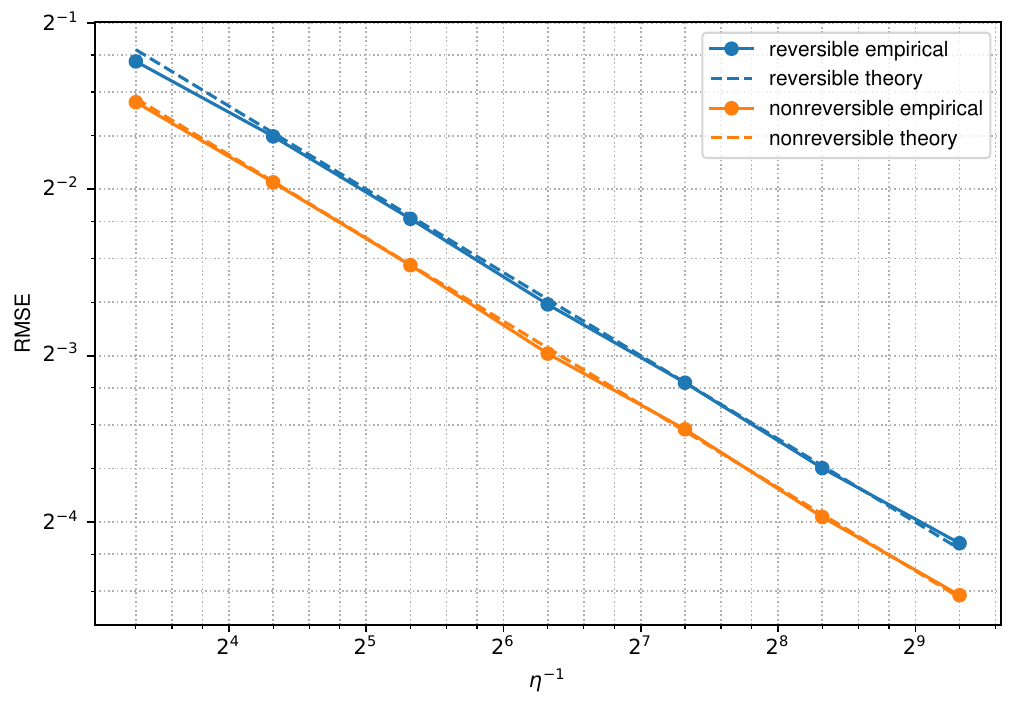}
        \caption{RMSE comparison.}
        \label{fig:hfhr_quadratic_loglog}
    \end{subfigure}
    \hfill
    \begin{subfigure}{0.48\linewidth}
        \centering
        \includegraphics[width=0.85\linewidth]{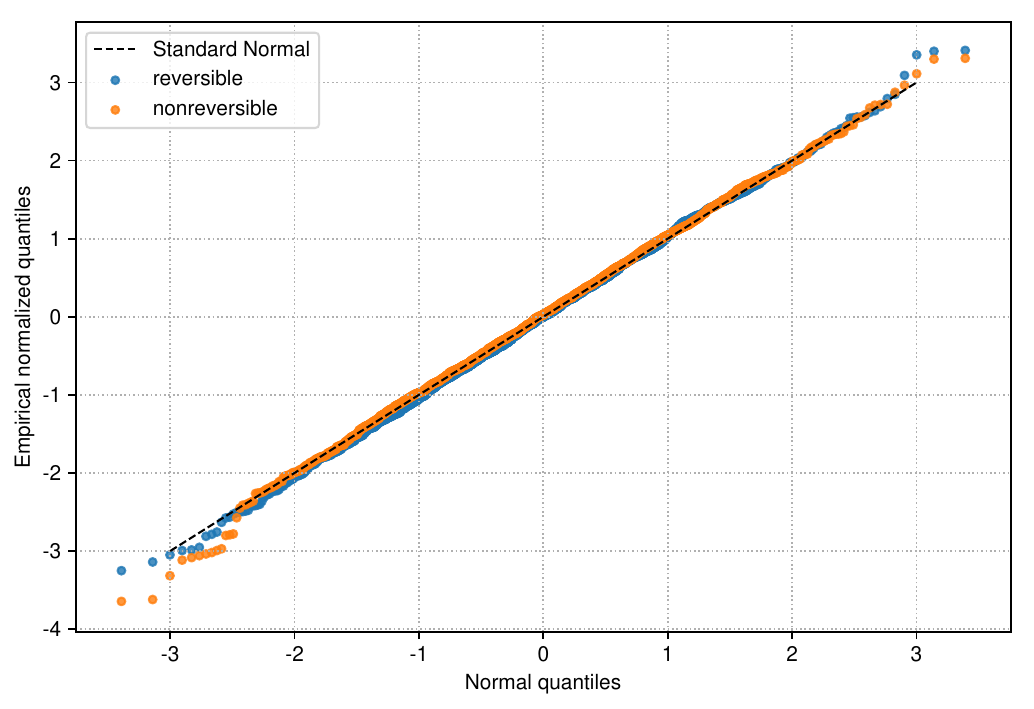}
        \caption{Q-Q plot for the normalized empirical errors.}
        \label{fig:hfhr_quadratic_qq_plot}
    \end{subfigure}
    \caption{Numerical results for SGHFHRMC with quadratic potential.}
    \label{fig:qq:hfhr}
\end{figure}

\begin{figure}[H]
    \centering
    \begin{subfigure}{0.48\linewidth}
        \centering
        \includegraphics[width=0.85\linewidth]{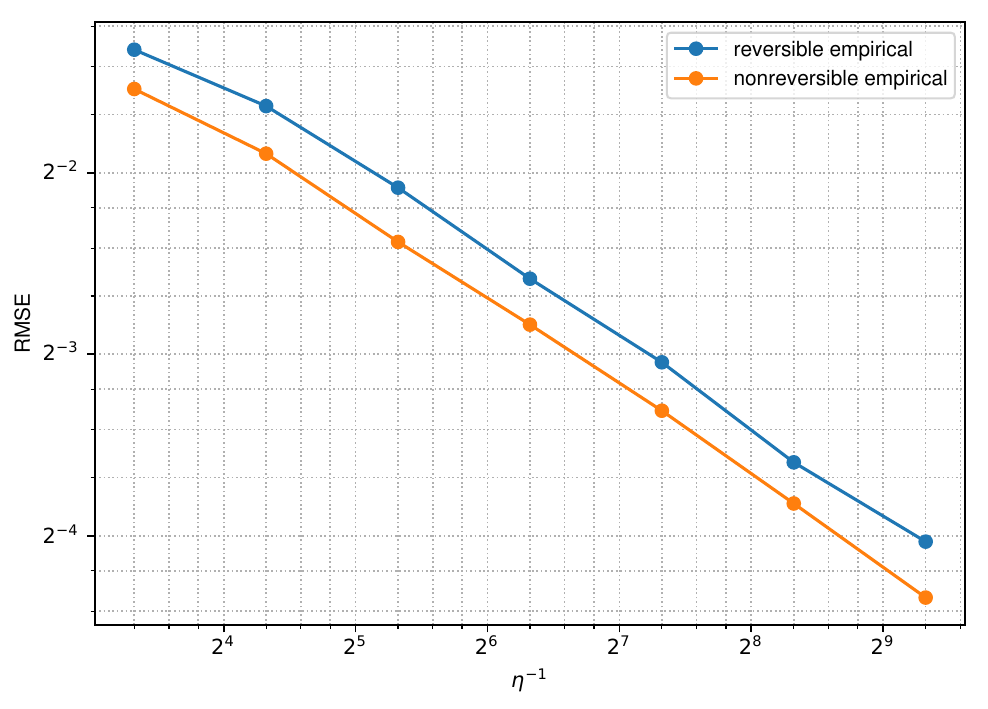}
        \caption{RMSE comparison.}
        \label{fig:hfhr_nonquadratic_loglog}
    \end{subfigure}
    \hfill
    \begin{subfigure}{0.48\linewidth}
        \centering
        \includegraphics[width=0.85\linewidth]{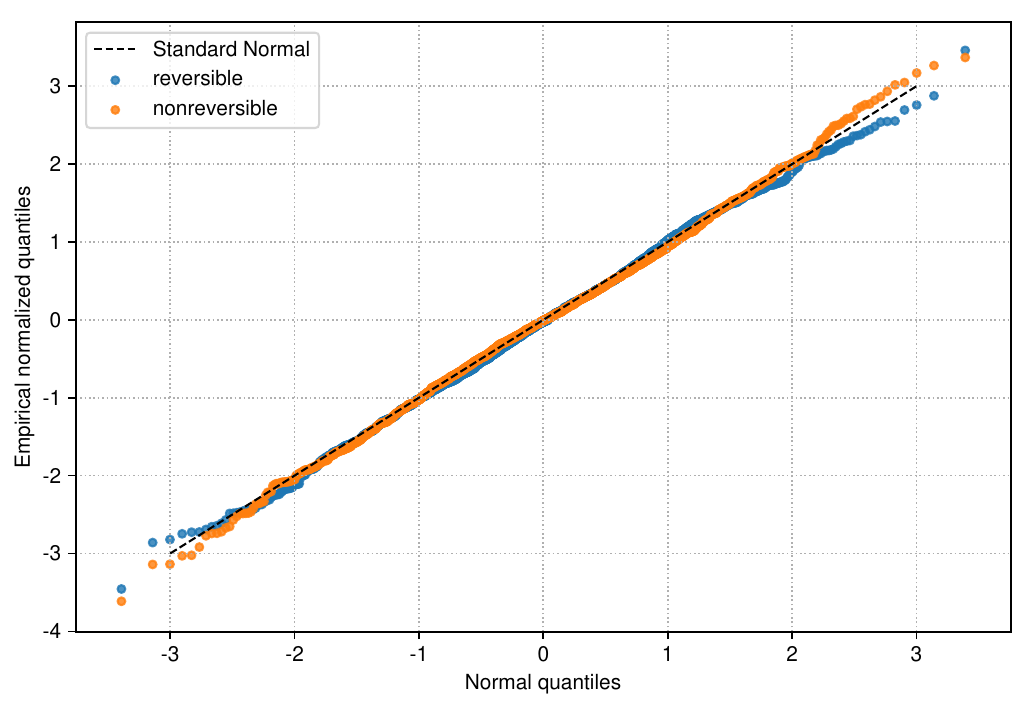}
        \caption{Q-Q plot for the studentized normalized empirical errors.}
        \label{fig:hfhr_nonquadratic_qq_plot}
    \end{subfigure}
    \caption{Numerical results for SGHFHRMC with nonquadratic potential.}
    \label{fig:qq:hfhr:2}
\end{figure}

\subsubsection{The SGGAULMC case}
For SGGAULMC, we take $a=1$, $\gamma=2$, and $C=
\begin{pmatrix}
\frac12 & 0 & 0\\
0 & \frac32 & 0\\
0 & 0 & 2
\end{pmatrix}$. Again, we use the position only observables $h(\theta,r)=u^\top\theta$ and $h(\theta,r)=\tanh(u^\top\theta)$ for the quadratic and nonquadratic experiments, respectively. The reversible baseline is defined by Euler–Maruyama discritization in the stochastic gradient version of \eqref{eq:gaul-baseline-sde}.

\begin{figure}[H]
    \centering
    \begin{subfigure}{0.48\linewidth}
        \centering
        \includegraphics[width=0.85\linewidth]{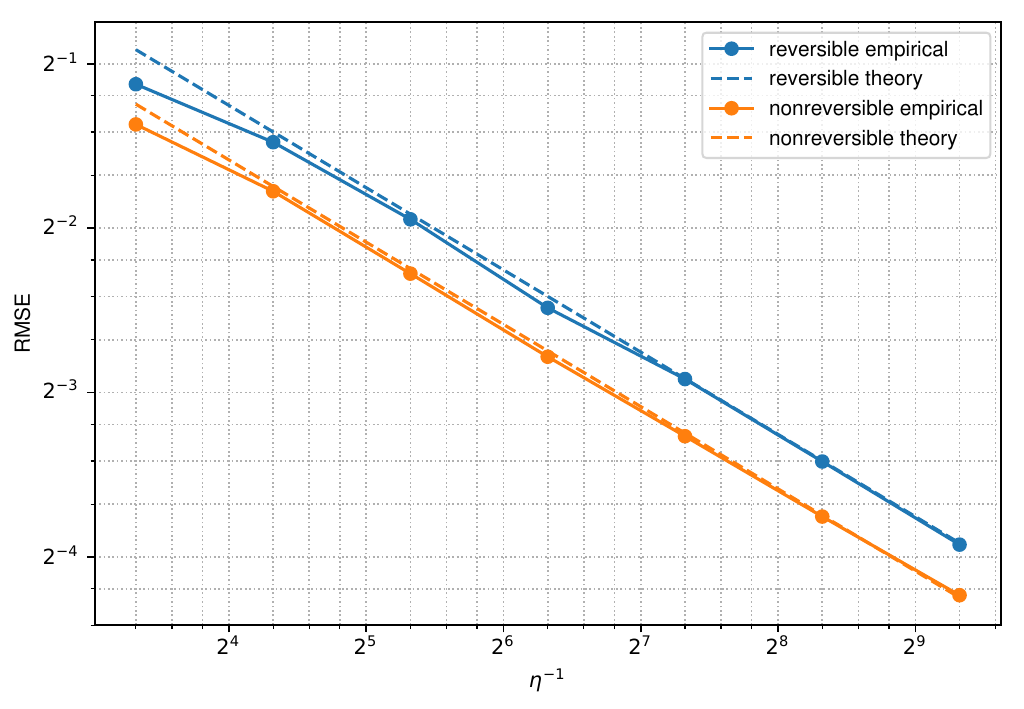}
        \caption{RMSE comparison.}
        \label{fig:gaul_quadratic_loglog}
    \end{subfigure}
    \hfill
    \begin{subfigure}{0.48\linewidth}
        \centering
        \includegraphics[width=0.85\linewidth]{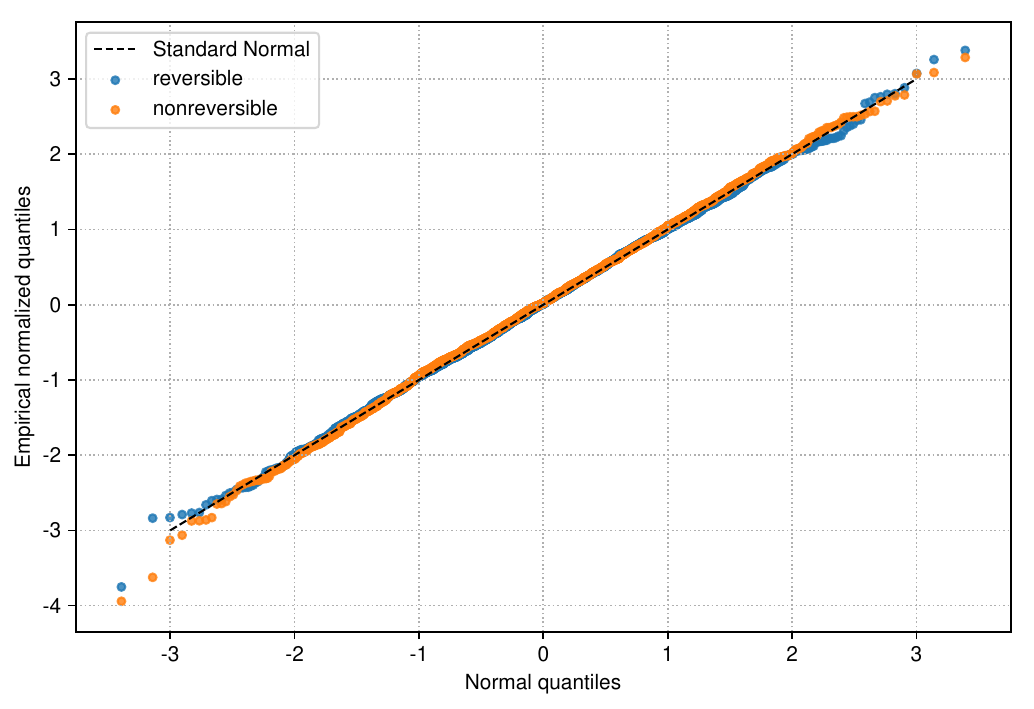}
        \caption{Q-Q plot for the normalized empirical errors.}
        \label{fig:gaul_quadratic_qq_plot}
    \end{subfigure}
    \caption{Numerical results for SGGAULMC with quadratic potential.}
    \label{fig:qq:gaul}
\end{figure}

\begin{figure}[H]
    \centering
    \begin{subfigure}{0.48\linewidth}
        \centering
        \includegraphics[width=0.85\linewidth]{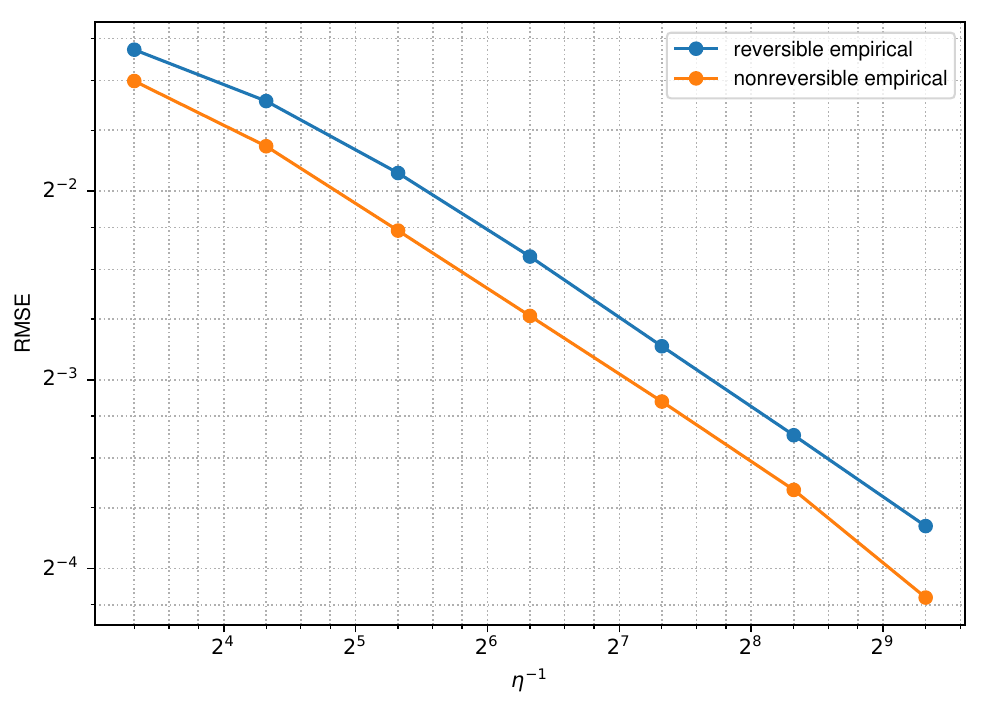}
        \caption{RMSE comparison.}
        \label{fig:gaul_nonquadratic_loglog}
    \end{subfigure}
    \hfill
    \begin{subfigure}{0.48\linewidth}
        \centering
        \includegraphics[width=0.85\linewidth]{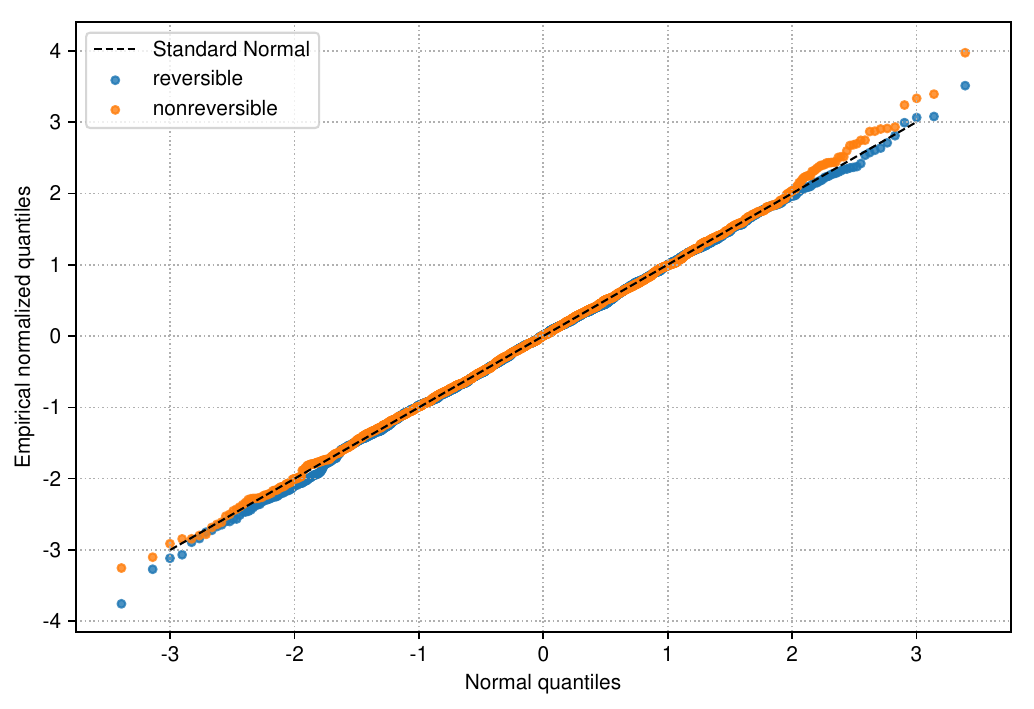}
        \caption{Q-Q plot for the studentized normalized empirical errors.}
        \label{fig:gaul_nonquadratic_qq_plot}
    \end{subfigure}
    \caption{Numerical results for SGGAULMC with nonquadratic potential.}
    \label{fig:qq:gaul:2}
\end{figure}

Figures~\ref{fig:qq:nld}, \ref{fig:qq:hfhr}, and \ref{fig:qq:gaul} summarize the quadratic experiments. In all three algorithms, the non-reversible sampler achieves a smaller RMSE than its reversible baseline across the tested stepsizes. Moreover, the empirical RMSE curves closely follow the theoretical reference lines determined by the corresponding fluctuation constants. The Q-Q plots, computed at the smallest stepsize $\eta=0.1\times 2^{-6}$, show that the normalized empirical errors are close to the standard Gaussian reference line, supporting the predicted fluctuation limit.

Figures~\ref{fig:qq:nld:2}, \ref{fig:qq:hfhr:2}, and \ref{fig:qq:gaul:2} summarize the nonquadratic experiments. Since closed-form fluctuation constants are not available in this setting, we do not include theoretical dashed reference lines, and we normalize the Q-Q errors using the empirical proxy $\widehat V_\eta(h)=\widehat{\mathrm{MSE}}_\eta(h)/\eta$.

The RMSE plots again show a consistent variance-reduction effect: the non-reversible algorithms have smaller RMSEs than their reversible baselines, and in the small-stepsize regime the two curves are approximately parallel on the log-log scale, with the non-reversible curve having a lower vertical offset. The studentized Q-Q plots remain close to the standard Gaussian reference line, indicating that the same asymptotic normal behavior is observed beyond the quadratic setting.

%%%%%%%%%%%%%%%%%%%%%%%%%%%%%%%%%%%%

\subsection{Bayesian Linear Regression with Synthetic Data}\label{sec:Bayesian:linear}

\subsubsection{General Settings}
We consider a Bayesian linear regression model with covariate $x\in\mathbb R^d$, response $y\in\mathbb R$, and
$$
    y=x^\top\beta+\varepsilon, \quad
    \varepsilon\sim\mathcal N(0,1),
$$
where $\beta\in\mathbb R^d$ is the regression coefficient. Throughout this experiment we take $d=3$ and impose the isotropic Gaussian prior
$
    \beta\sim\mathcal N(0,\sigma_0^2I_d)$, $
    \sigma_0=2$.
Given $N$ independent observations $\{(x_i,y_i)\}_{i=1}^N$, with design matrix $X=(x_1,\ldots,x_N)^\top$ and response vector $Y=(y_1,\ldots,y_N)^\top$, the posterior density of $\beta$ is proportional to $\exp\{-U(\beta)\}$, where
\begin{align}
    U(\beta)
    =
    \frac12\sum_{i=1}^N(y_i-x_i^\top\beta)^2
    +
    \frac{1}{2\sigma_0^2}\|\beta\|^2 .
    \label{log-posterior potential}
\end{align}
Equivalently,
\begin{align}
    U(\beta)
    =
    \frac12(\beta-m_N)^\top K_N(\beta-m_N)+\mathrm{const},
    \quad
    K_N=X^\top X+\sigma_0^{-2}I_d,
    \quad
    m_N=K_N^{-1}X^\top Y .
    \label{eq:linear regression quadratic potential}
\end{align}

We generate a synthetic dataset by sampling $x_{ij}\sim \mathrm{Unif}[-1/2,1/2]$ and
$$
    y_i=x_i^\top\beta^\star+\varepsilon_i \quad \varepsilon_i\sim\mathcal N(0,1)\quad 
    \beta^\star=(1,-1,0.5)^\top,
$$
with $N=128$. After the dataset is fixed, the posterior mean of the test functional
$$
    h_\beta(\beta)=x_{\rm test}^\top\beta, \quad x_{\rm test}=\left(\frac{\sqrt2}{2},-\frac{\sqrt2}{2},0\right)^\top,
$$
is available in closed form, $\mu(h_\beta)=x_{\rm test}^\top m_N$. For augmented samplers with state $(\beta,r)$, we use the same position only observable $h(\beta,r)=x_{\rm test}^\top\beta$.

All three algorithms are implemented with a mini-batch stochastic gradient oracle. The full gradient is
$
    \nabla U(\beta)
    =
    \sum_{i=1}^N(x_i^\top\beta-y_i)x_i+\sigma_0^{-2}\beta .
$
At each Euler step and for each independent trajectory, we sample a mini-batch $B_k\subset\{1,\ldots,N\}$ of size $b=16$, with replacement, independently across steps and replications, and set
\begin{align}
    \widehat{\nabla U}(\beta,\zeta_{k+1})
    =
    \frac{N}{b}\sum_{i\in B_k}(x_i^\top\beta-y_i)x_i+\sigma_0^{-2}\beta .
    \label{eq:linear regression oracle beta}
\end{align}
This estimator is conditionally unbiased, it follows $\mathbb E\left[\widehat{\nabla U}(\beta,\zeta_{k+1})\mid \beta\right]=\nabla U(\beta)$.
The quadratic representation in \eqref{eq:linear regression quadratic potential} allows the three posterior sampling experiments to be reduced to the quadratic setting studied earlier. Indeed, after the translation $z=\beta-m_N$, we have
$$
    U(\beta)=U_z(z)+\mathrm{const},
    \qquad
    U_z(z)=\frac12 z^\top K_Nz,
    \qquad
    \nabla U(\beta)=K_Nz .
$$
Moreover,
$
    x_{\rm test}^\top\beta
    =
    x_{\rm test}^\top m_N+x_{\rm test}^\top z,
$
so the fluctuation constant for $h_\beta(\beta)$ is the same as that for the centered observable $h_z(z)=x_{\rm test}^\top z$. Therefore, for SGNLMC the constants can be evaluated explicitly as
\begin{align}
    V_{\rm NLD}(h_z)
    =
    2x_{\rm test}^\top A_J^{-1}(A_J^{-1})^\top x_{\rm test},
    \qquad
    A_J=(I_d+J)K_N,
    \label{eq:linear_regression_nld_potential_nonreversible}
\end{align}
and, for the reversible baseline $J=0$,
\begin{align}
    V_{{\rm NLD},0}(h_z)
    =
    2x_{\rm test}^\top K_N^{-2}x_{\rm test}.
    \label{eq:linear_regression_nld_potential_reversible}
\end{align}
For each tested stepsize, the time-average estimator is computed over $N_\eta=\lfloor\eta^{-2}\rfloor$ iterates, and RMSEs are estimated from $2000$ independent trajectories. In the Q-Q plots, we use the smallest tested stepsize for each algorithm and normalize the errors by $Z_{\eta,r}=\frac{\widehat\mu_{\eta,r}(h_\beta)-\mu(h_\beta)}{\eta^{1/2}\sqrt{V(h)}}$, where $V(h)$ denotes the appropriate non-reversible or reversible fluctuation constant. For SGHFHRMC and SGGAULMC, the corresponding non-reversible and reversible fluctuation constants are computed from the closed-form quadratic formulas in Section~\ref{sec:quadratic}. 

\subsubsection{The SGNLMC case}
For SGNLMC, we use the stochastic gradient NLD discretization in~\eqref{eq:nld-example-em} with state $\beta_{k}$ and take $J=0.3377
\begin{pmatrix}
0 & -1 & 1\\
1 & 0 & -1\\
-1 & 1 & 0
\end{pmatrix}.$ The reversible baseline is obtained by setting $J=0$. The tested stepsizes are $\eta \in \{0.05,\ 0.025,\ 0.0125,\ 0.00625,\ 0.003125,\ 0.0015625\}.$ We get the following plots.
\begin{figure}[H]
    \centering
    \begin{subfigure}{0.48\linewidth}
        \centering
        \includegraphics[width=0.85\linewidth]{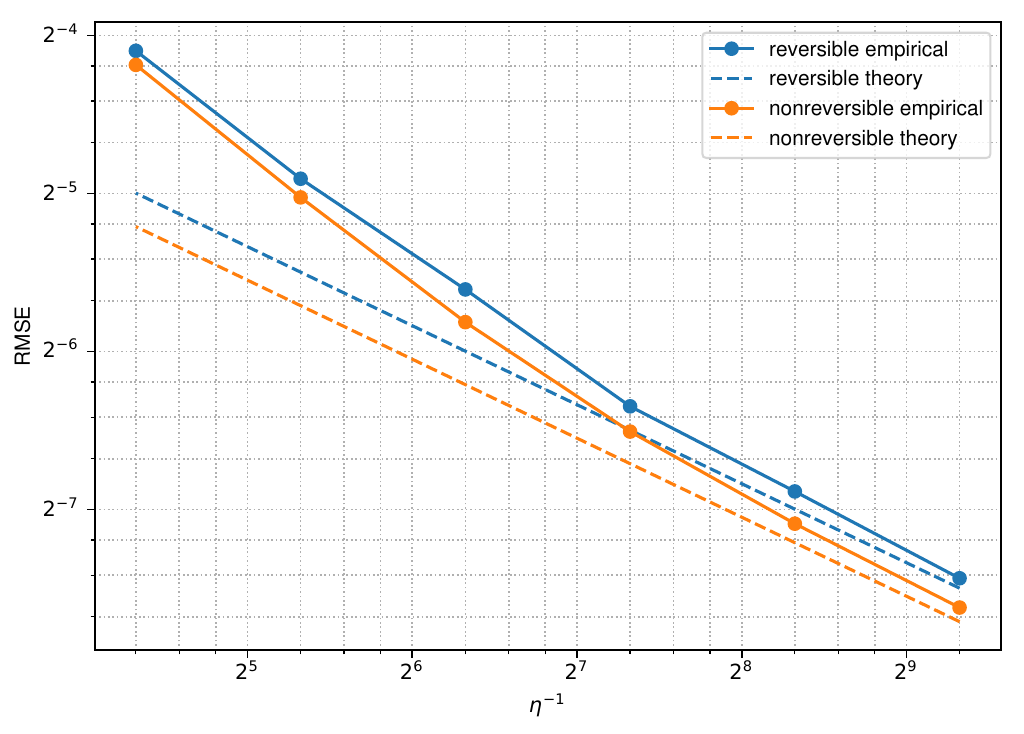}
        \caption{RMSE comparison.}
        \label{fig:linear_regression_nld_loglog}
    \end{subfigure}
    \hfill
    \begin{subfigure}{0.48\linewidth}
        \centering
        \includegraphics[width=0.85\linewidth]{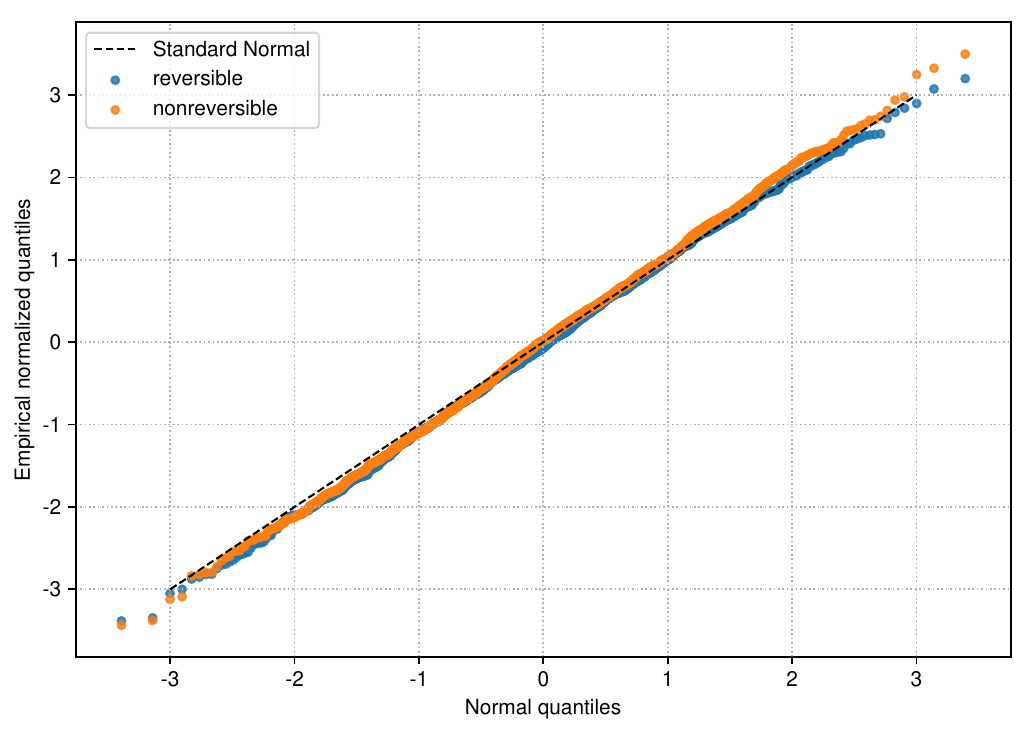}
        \caption{Q-Q plot for normalized errors.}
        \label{fig:linear_regression_nld_qq}
    \end{subfigure}
    \caption{Bayesian linear regression using SGNLMC. The RMSE is computed relative to $\mu(h_\beta)=x_{\rm test}^\top m_N$. The Q-Q plot uses $\eta=0.0015625$.}
    \label{fig:linear:reg:nld}
\end{figure}

\subsubsection{The SGHFHRMC case}
For SGHFHRMC, we use the stochastic graident SGHFHR discretization in~\eqref{eq:hfhr-em} with state $(\beta_k,r_k)$. In the experiment, $\alpha=0.5$ and $\gamma=6.0$. The reversible baseline is obtained by removing the anti-symmetric coupling while keeping the same symmetric diffusion matrix. The tested stepsizes are $\eta \in \{0.025,\ 0.0125,\ 0.00625,\ 0.003125,\ 0.0015625,\ 0.00078125\}$. We get the following plots.
\begin{figure}[H]
    \centering
    \begin{subfigure}{0.48\linewidth}
        \centering
        \includegraphics[width=0.85\linewidth]{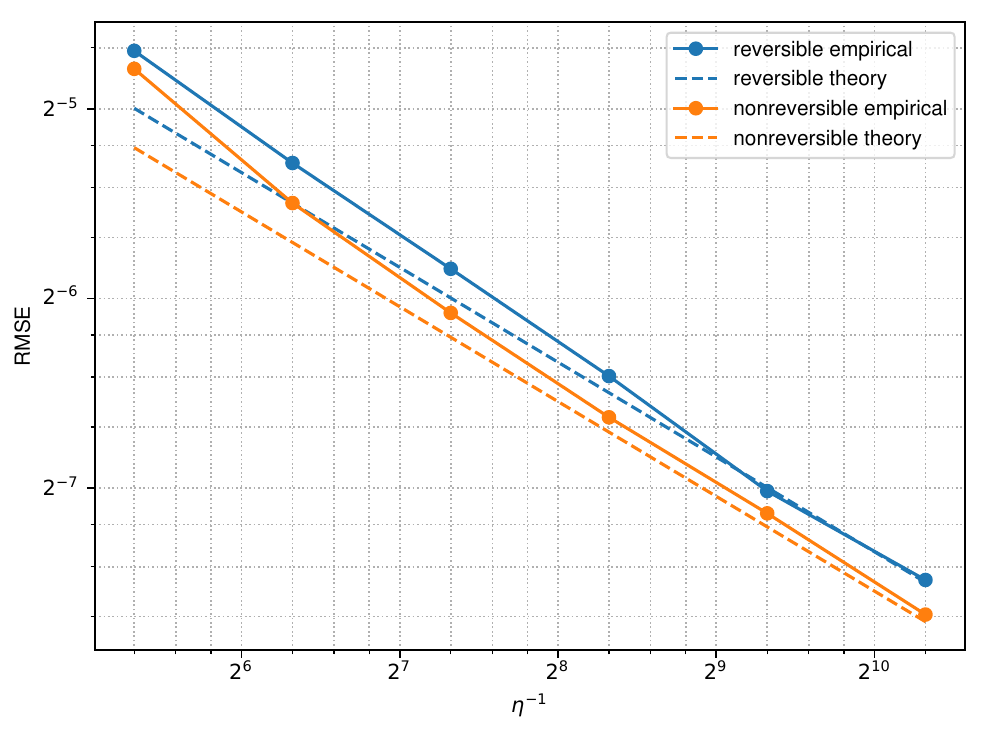}
        \caption{RMSE comparison.}
        \label{fig:linear_regression_hfhr_loglog}
    \end{subfigure}
    \hfill
    \begin{subfigure}{0.48\linewidth}
        \centering
        \includegraphics[width=0.85\linewidth]{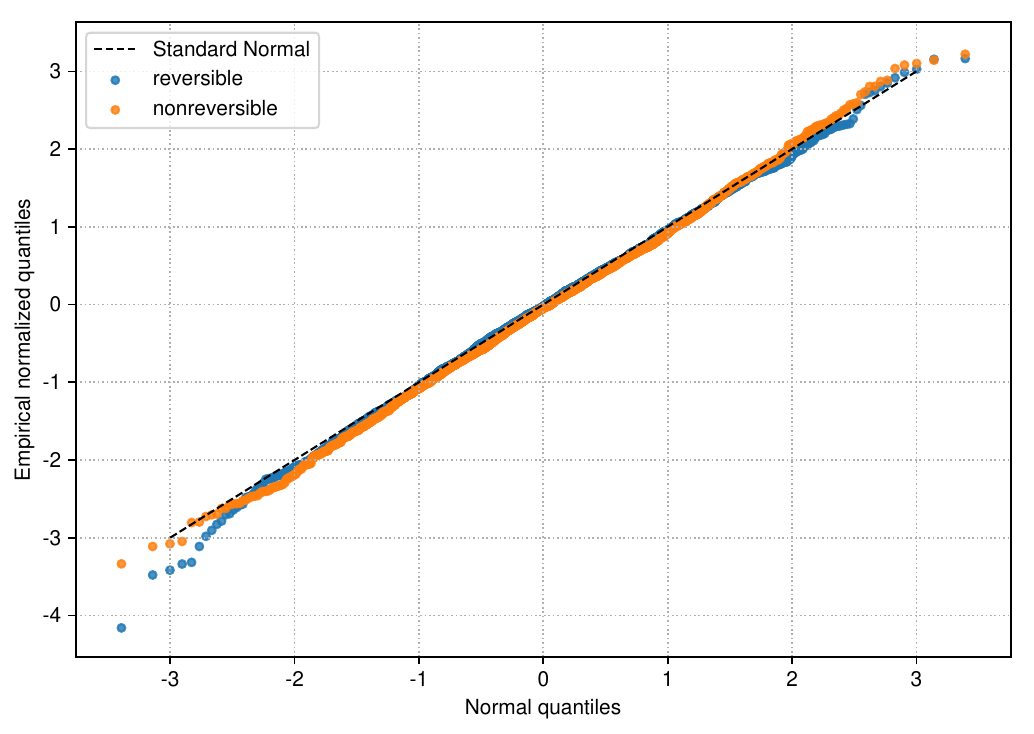}
        \caption{Q-Q plot for normalized errors.}
        \label{fig:linear_regression_hfhr_qq}
    \end{subfigure}
    \caption{Bayesian linear regression using SGHFHRMC. The RMSE is computed relative to $\mu(h_\beta)=x_{\rm test}^\top m_N$. The Q-Q plot uses $\eta=0.00078125$.}
    \label{fig:linear:reg:hfhr}
\end{figure}

\subsubsection{The SGGAULMC case}
For SGGAULMC, we use the stochastic gradient GAUL discretization \eqref{eq:gaul-em} with state $(\beta_k,r_k)$ and parameters $C=2I_d, a=0.3,\gamma=4.0$. The reversible baseline is obtained by setting the anti-symmetric part to zero while keeping the same symmetric diffusion matrix $\mathcal D$. The tested stepsizes are $\eta \in \{ 0.025,\ 0.0125,\ 0.00625,\ 0.003125,\ 0.0015625\}$. We get the following plots.

\begin{figure}[H]
    \centering
    \begin{subfigure}{0.48\linewidth}
        \centering
        \includegraphics[width=0.85\linewidth]{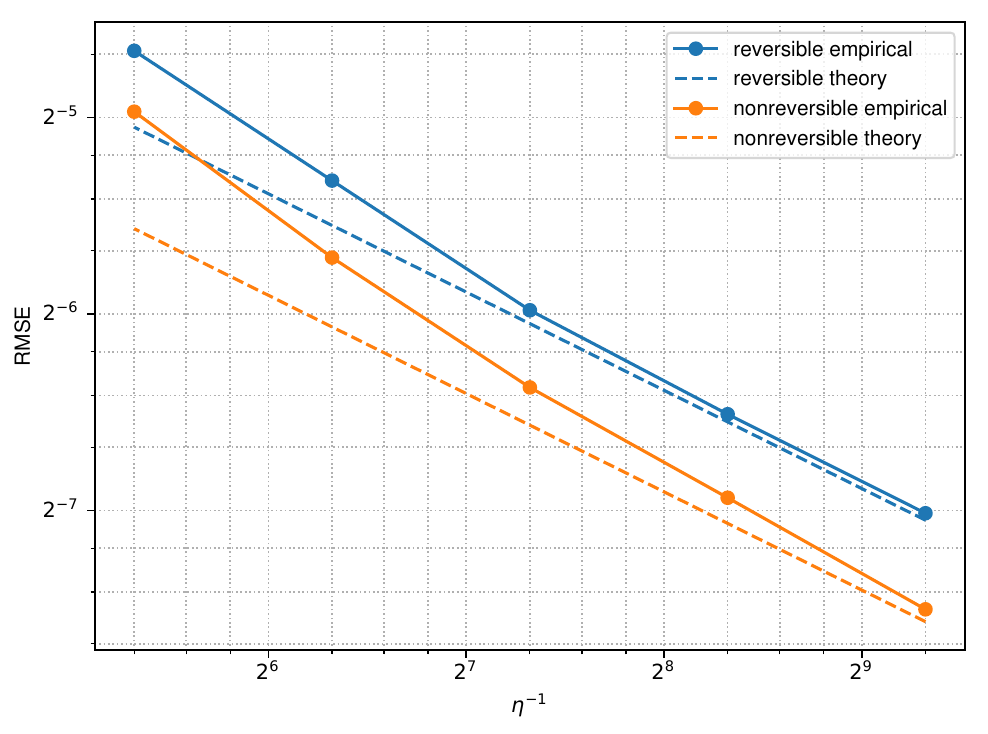}
        \caption{RMSE comparison.}
        \label{fig:linear_regression_gaul_loglog}
    \end{subfigure}
    \hfill
    \begin{subfigure}{0.48\linewidth}
        \centering
        \includegraphics[width=0.85\linewidth]{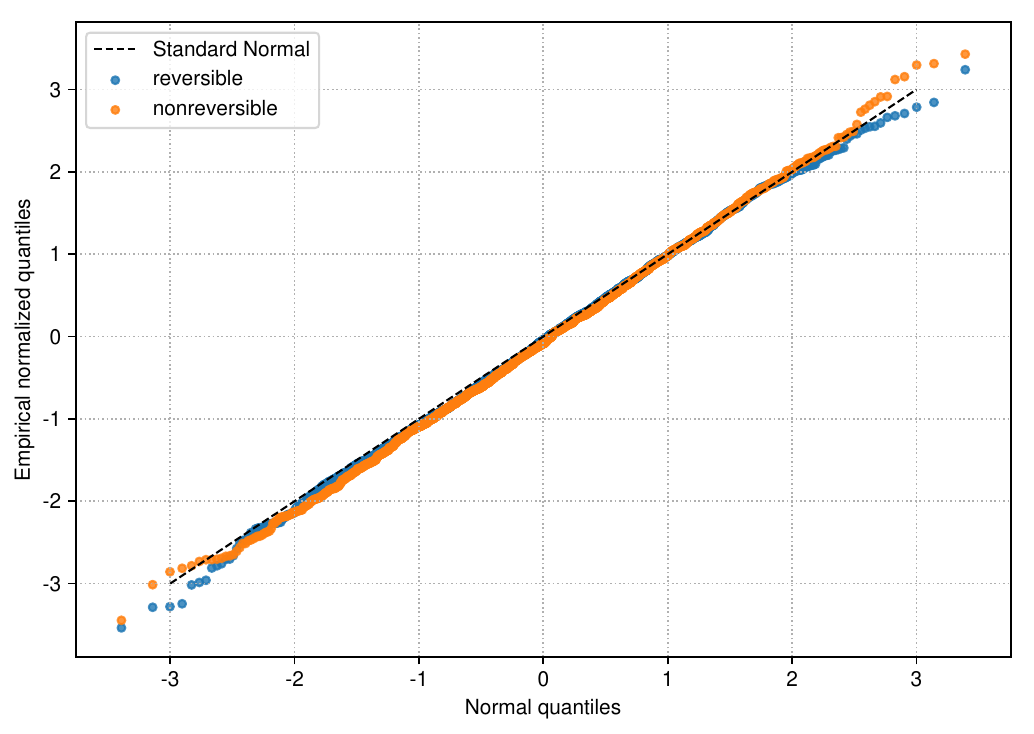}
        \caption{Q-Q plot for normalized errors.}
        \label{fig:linear_regression_gaul_qq}
    \end{subfigure}
    \caption{Bayesian linear regression using SGGAULMC. The RMSE is computed relative to $\mu(h_\beta)=x_{\rm test}^\top m_N$. The Q-Q plot uses $\eta=0.0015625$.}
    \label{fig:linear:reg:gaul}
\end{figure}

Figures~\ref{fig:linear:reg:nld}, \ref{fig:linear:reg:hfhr}, and \ref{fig:linear:reg:gaul} validate the fluctuation constant predictions for Bayesian linear regression. Across all three samplers, the non-reversible scheme achieves a smaller RMSE than the corresponding reversible baseline. As $\eta$ decreases, the empirical RMSE curves approach the theoretical reference lines determined by the quadratic fluctuation constants, confirming that the translated posterior potential falls within the quadratic theory.

The Q-Q plots in Figures~\ref{fig:linear:reg:nld}, \ref{fig:linear:reg:hfhr}, and \ref{fig:linear:reg:gaul} show the normalized empirical errors at the smallest tested stepsizes. In each case, the empirical quantiles for both the non-reversible scheme and the reversible baseline lie close to the standard normal reference line $y=x$, supporting the predicted Gaussian fluctuation limit.

%%%%%%%%%%%%%%%%%%%%%%%%%%%%%%%%%%%

\subsection{Bayesian Logistic Regression with Real Dataset}
\label{sec:real:data:numerical}

\subsubsection{General Settings}
We apply the stochastic gradient sampling algorithms to the Heart Disease (Statlog) dataset from the UCI repository~\cite{uci_statlog_heart}. The dataset contains 270 patient records and 13 clinical predictors, including age, sex, chest pain type, resting blood pressure, serum cholesterol, fasting blood sugar, resting ECG results, maximum heart rate achieved, exercise induced angina, ST depression, slope of the peak exercise ST segment, number of major vessels, and thalassemia status. The response is binary: class 0 indicates absence of heart disease, and class 1 indicates presence of heart disease. We use a stratified split with 202 observations for training and 68 observations for testing, so that the class proportions are preserved. Predictors are standardized using only the training set mean and standard deviation, and an intercept is appended to the design matrix.

We consider Bayesian logistic regression on the standardized training data. For each observation $i=1,\ldots,N$, let $x_i\in\mathbb R^p$ denote the standardized covariate vector and $y_i\in\{0,1\}$ denote the binary response. After adding an intercept term, we write
$$
    \bar x_i=(1,x_i^\top)^\top\in\mathbb R^{p+1}.
$$
Given $\theta\in\mathbb R^{p+1}$, the positive class probability is modeled as
$$
    \mathbb P(y_i=1\mid \bar x_i,\theta)
    =
    \rho(\bar x_i^\top\theta),
    \qquad
    \rho(t)=\frac{1}{1+e^{-t}}.
$$
Thus
$$
    p(y_i\mid \bar x_i,\theta)
    =
    \rho(\bar x_i^\top\theta)^{y_i}
    \left(1-\rho(\bar x_i^\top\theta)\right)^{1-y_i}
    =
    \frac{\exp(y_i\bar x_i^\top\theta)}
    {1+\exp(\bar x_i^\top\theta)}.
$$
With the standard Gaussian prior $\theta\sim\mathcal N(0,I_{p+1})$, the posterior density has Gibbs form
$$
    \pi(d\theta\mid\mathcal D)
    \propto
    e^{-U(\theta)}d\theta,
$$
where
$$
    U(\theta)
    =
    \sum_{i=1}^N
    \left[
        \log\left(1+\exp(\bar x_i^\top\theta)\right)
        -
        y_i\bar x_i^\top\theta
    \right]
    +
    \frac12\|\theta\|^2.
$$
Unlike the Bayesian linear regression experiment, this posterior potential is not quadratic, and neither the posterior expectation nor the fluctuation constant is available in closed form.

For prediction, we fix an augmented test feature vector $\bar x_{\rm test}\in\mathbb R^{p+1}$ selected from the test set. The Bayesian posterior predictive probability is
$$
    \mathbb P(y_{\rm test}=1\mid \bar x_{\rm test},\mathcal D)
    =
    \int_{\mathbb R^{p+1}}
    \rho(\bar x_{\rm test}^\top\theta)
    \pi(d\theta\mid\mathcal D).
$$
Thus the observable is
$
    h(\theta)=\rho(\bar x_{\rm test}^\top\theta).
$
For augmented samplers with state $(\theta,r)$, we use the same position only observable
$
    h(\theta,r)=\rho(\bar x_{\rm test}^\top\theta).
$
This bounded smooth observable belongs to the predictive observable class discussed in Proposition~\ref{prop:predictive-observable}.

The posterior potential has a finite sum structure
$$
    U(\theta)
    =
    \sum_{i=1}^N \ell_i(\theta)
    +
    \frac12\|\theta\|^2,
    \qquad
    \ell_i(\theta)
    =
    \log\left(1+\exp(\bar x_i^\top\theta)\right)
    -
    y_i\bar x_i^\top\theta .
$$
Hence
$
    \nabla U(\theta)
    =
    \sum_{i=1}^N
    \left[
        \rho(\bar x_i^\top\theta)-y_i
    \right]\bar x_i
    +
    \theta .
$
At each Euler step and for each independent trajectory, we sample a mini-batch $B_k\subset\{1,\ldots,N\}$ of size $b=32$ with replacement, independently across steps and replications. The stochastic gradient oracle is
\begin{align}
    \widehat{\nabla U}(\theta,\zeta_{k+1})
    =
    \frac{N}{b}
    \sum_{i\in B_k}
    \left[
        \rho(\bar x_i^\top\theta)-y_i
    \right]\bar x_i
    +
    \theta .
    \label{eq:logistic_stochastic_gradient_oracle}
\end{align}
The prior gradient term is included exactly, and only the likelihood gradient is estimated by mini-batching. The oracle is conditionally unbiased such that
$
    \mathbb E\!\left[
        \widehat{\nabla U}(\theta,\zeta_{k+1})
        \mid \theta
    \right]
    =
    \nabla U(\theta).
$
The oracle randomness is independent of the Gaussian noise in the discretized sampling schemes.

Since $\pi(h)$ is not available in closed form, we use an empirical reference for each dynamics. Specifically, for each sampler and its reversible baseline, we approximate $\pi(h)$ by averaging the $R=2000$ trajectory estimates at the smallest tested stepsize $\eta_{\min}=0.00125$. RMSEs are computed relative to this empirical reference. For each tested stepsize, the estimator is averaged over
$
    N_\eta=\lfloor \eta^{-2}\rfloor
$
iterates, and RMSEs are computed from $2000$ independent trajectories. In the normal quantile plots, we use the smallest tested stepsize and normalize the trajectory level errors by
$
    Z_{\eta,r}^{d}
    =
    \frac{\widehat\mu_{\eta,r}^{d}(h)-\mu_{\rm ref}^{d}}
    {\widehat{\mathrm{RMSE}}_{\eta}^{d}},
$
where $d$ denotes the chosen dynamics, $\mu_{\rm ref}^{d}$ is the empirical reference, and $\widehat{\mathrm{RMSE}}_{\eta}^{d}$ is computed from the trajectory level errors at the same stepsize. All three samplers use the stepsizes
$
 \eta \in \{   0.01,\ 0.005,\ 0.0025,\ 0.00125 \}.
$

\subsubsection{The SGNLMC case}
For SGNLMC, with dimension $p+1=14$, we use the stochastic gradient non-reversible Langevin Monte Carlo scheme \eqref{eq:nld-example-em}. The reversible baseline is obtained by setting $J=0$. In the non-reversible sampler, we take
$
    J=\rho J_0,
    \rho=1.0113,
$
where $J_0\in\mathbb R^{(p+1)\times(p+1)}$ is block diagonal with $2\times2$ antisymmetric blocks
$
    \begin{pmatrix}
        0 & -1\\
        1 & 0
    \end{pmatrix}.
$ We can get the following plots.

\begin{figure}[H]
    \centering
    \begin{subfigure}{0.48\linewidth}
        \centering
        \includegraphics[width=0.85\linewidth]{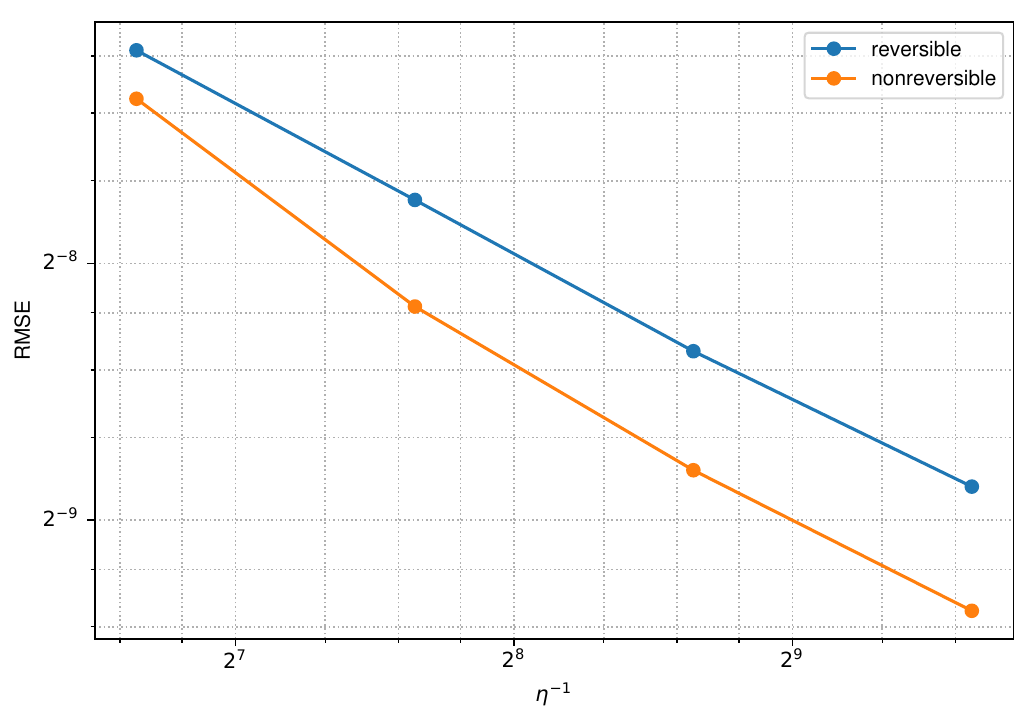}
        \caption{RMSE comparison.}
        \label{fig:logistic_classification_nld_loglog}
    \end{subfigure}
    \hfill
    \begin{subfigure}{0.48\linewidth}
        \centering
        \includegraphics[width=0.85\linewidth]{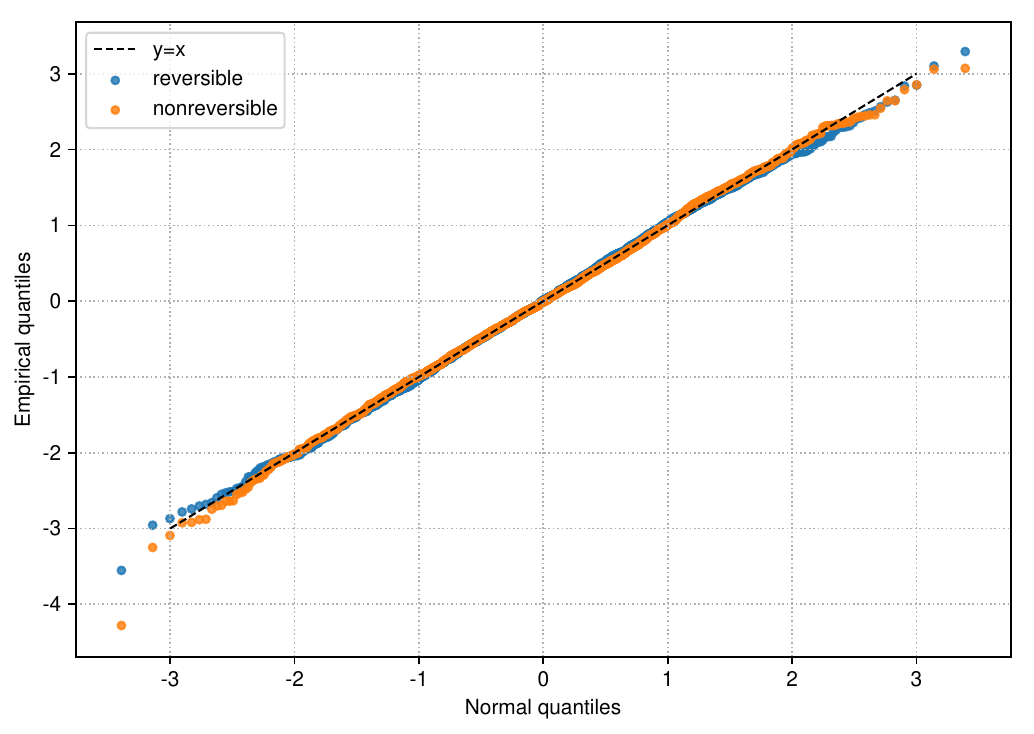}
        \caption{Normal quantile plot.}
        \label{fig:logistic_classification_nld_qq}
    \end{subfigure}
    \caption{Bayesian logistic classification using SGNLMC.}
    \label{fig:log:reg:nld}
\end{figure}

\subsubsection{The SGHFHRMC case}
For SGHFHRMC, we use the stochastic gradient Hessian free high resolution Monte Carlo scheme \eqref{eq:hfhr-em} with state $(\theta,r)\in\mathbb R^{2(p+1)}$. In the experiment, we take
$
    \alpha=0.1,
    \gamma=40.0.
$
The reversible baseline is obtained by setting the antisymmetric coupling to zero while keeping the same symmetric diffusion matrix, it can be obtained by Euler-Maruyama discretization of~\eqref{eq:hfhr-baseline-sde}.
%\begin{equation}
%\begin{cases}
%\theta_{k+1}
%=
%\theta_k-\eta\alpha \widehat{\nabla U}(\theta_k,\zeta_{k+1})
%+\sqrt{2\alpha\eta}\xi_{k+1}^{(1)},\\
%r_{k+1}
%=
%r_k-\eta\gamma r_k
%+\sqrt{2\gamma\eta}\xi_{k+1}^{(2)}.
%\end{cases}
%\end{equation}
In both schemes, the stochastic gradient $\widehat{\nabla U}$ is in \eqref{eq:logistic_stochastic_gradient_oracle}.

\begin{figure}[H]
    \centering
    \begin{subfigure}{0.48\linewidth}
        \centering
        \includegraphics[width=0.85\linewidth]{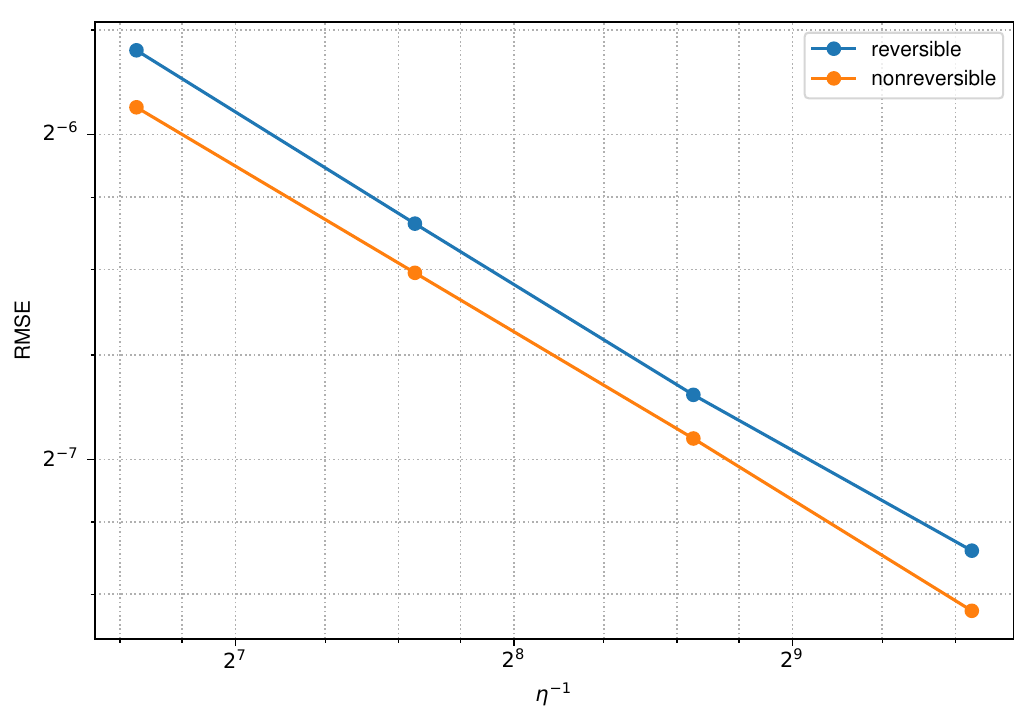}
        \caption{RMSE comparison.}
        \label{fig:logistic_classification_hfhr_loglog}
    \end{subfigure}
    \hfill
    \begin{subfigure}{0.48\linewidth}
        \centering
        \includegraphics[width=0.85\linewidth]{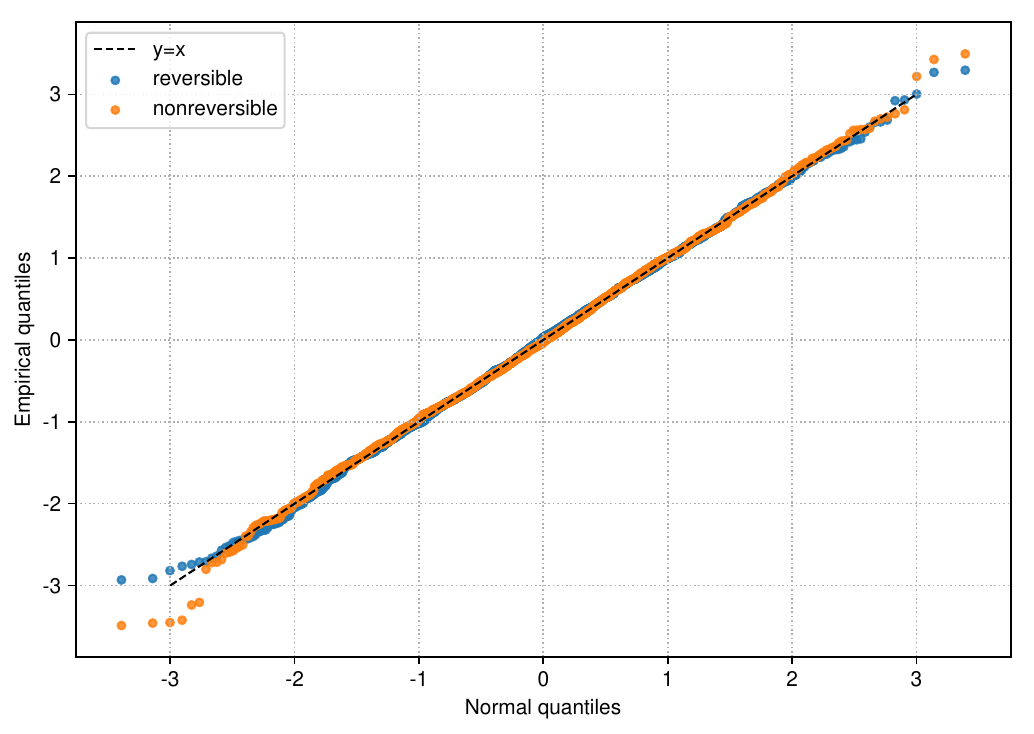}
        \caption{Normal quantile plot.}
        \label{fig:logistic_classification_hfhr_qq}
    \end{subfigure}
    \caption{Bayesian logistic classification using SGHFHRMC.}
    \label{fig:log:reg:hfhr}
\end{figure}

\subsubsection{The SGGAULMC case}
For SGGAULMC, we use the stochastic gradient gradient adjusted underdamped Langevin Monte Carlo scheme \eqref{eq:gaul-em} with state $(\theta,r)\in\mathbb R^{2(p+1)}$. In the experiment, we take
$
    C=5I_{p+1},
    a=0.1,
    \gamma=73.5.
$
The reversible baseline is obtained by setting the antisymmetric part to zero while keeping the same symmetric diffusion matrix $\mathcal D$, it follows~\eqref{eq:gaul-em} by taking $\mathbf{Q} = \mathcal{D}$.
%\begin{equation}
%    \begin{pmatrix}
%        \theta_{k+1}\\
%        r_{k+1}
%    \end{pmatrix}
%    =
%    \begin{pmatrix}
%        \theta_k\\
%        r_k
%    \end{pmatrix}
%    -
%    \eta \mathcal D
%    \begin{pmatrix}
%        \widehat{\nabla U}(\theta_k,\zeta_{k+1})\\
%        r_k
%    \end{pmatrix}
%    +
%    \sqrt{2\eta\mathcal D}\,\xi_{k+1}.
%\end{equation}
In both schemes, the stochastic gradient $\widehat{\nabla U}$ is in \eqref{eq:logistic_stochastic_gradient_oracle}.

\begin{figure}[H]
    \centering
    \begin{subfigure}{0.48\linewidth}
        \centering
        \includegraphics[width=0.85\linewidth]{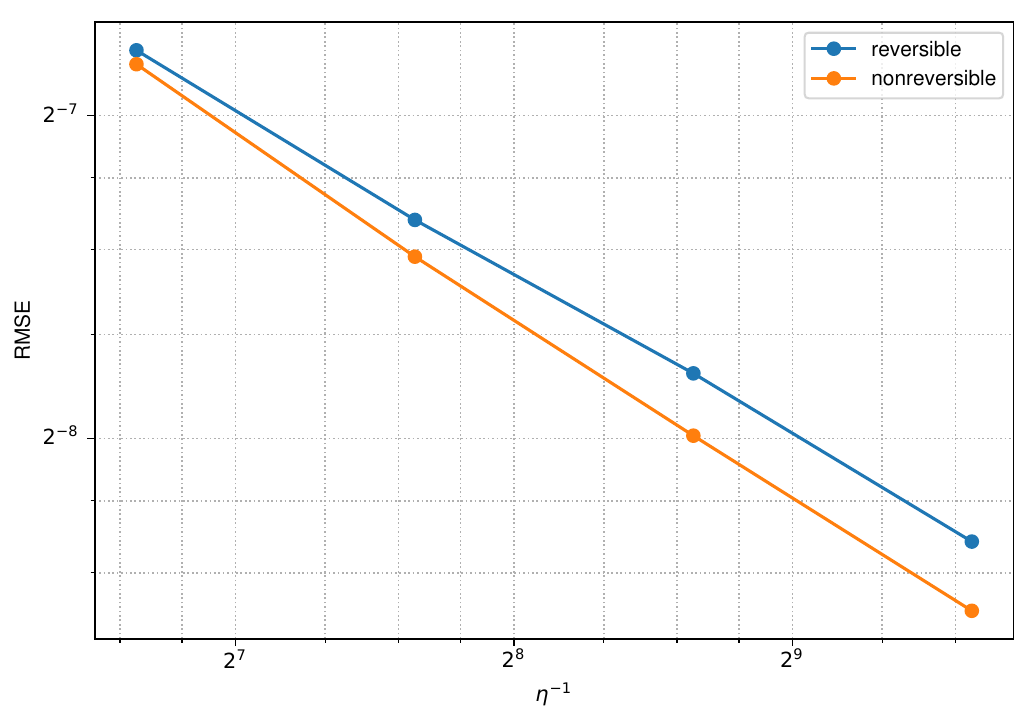}
        \caption{RMSE comparison.}
        \label{fig:logistic_classification_gaul_loglog}
    \end{subfigure}
    \hfill
    \begin{subfigure}{0.48\linewidth}
        \centering
        \includegraphics[width=0.85\linewidth]{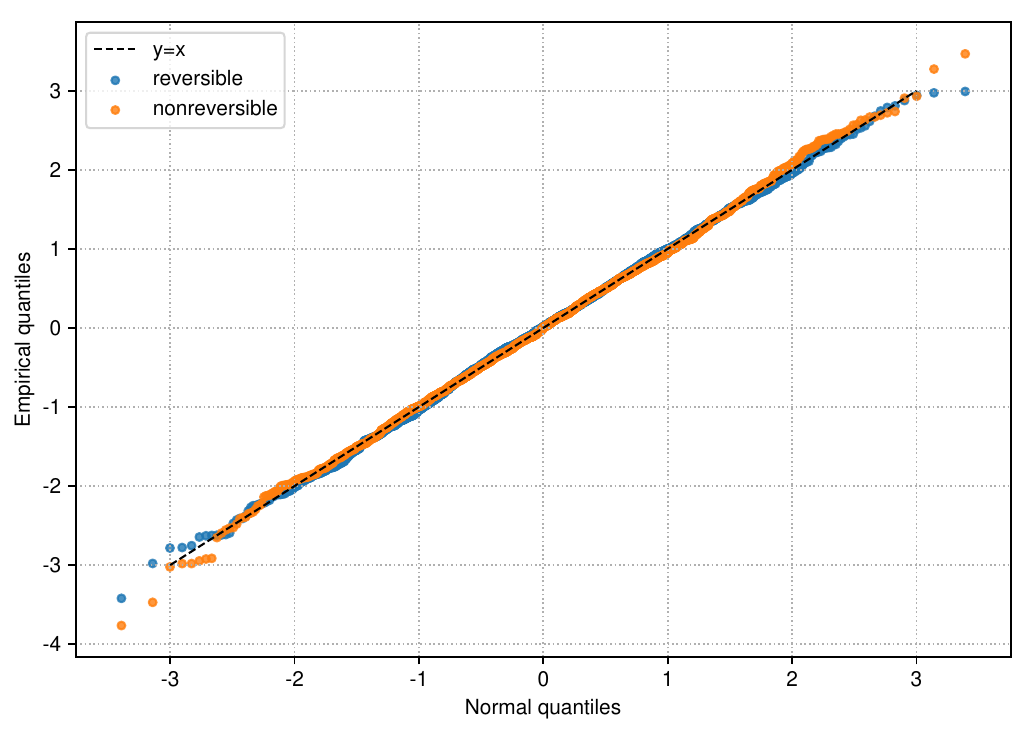}
        \caption{Normal quantile plot.}
        \label{fig:logistic_classification_gaul_qq}
    \end{subfigure}
    \caption{Bayesian logistic classification using SGGAULMC.}
    \label{fig:log:reg:gaul}
\end{figure}

Figures~\ref{fig:log:reg:nld}, \ref{fig:log:reg:hfhr}, and \ref{fig:log:reg:gaul} report the RMSE behavior for the three logistic regression experiments. In all cases, the non-reversible sampler achieves a smaller RMSE than the corresponding reversible baseline over the tested stepsizes. This suggests that the variance reduction effect observed in the quadratic and synthetic experiments persists for a nonquadratic posterior arising from a real classification dataset.

The normal quantile plots in Figures~\ref{fig:log:reg:nld}, \ref{fig:log:reg:hfhr}, and \ref{fig:log:reg:gaul} show the normalized empirical errors at $\eta=0.00125$. For all three algorithms, the empirical quantiles of both the non-reversible sampler and the reversible baseline lie close to the standard normal reference line $y=x$, supporting our theory on the Gaussian fluctuation behavior predicted by the small stepsize.

%%%%%%%%%%%%%%%%%%%%%%%%%%%%%%%%%%%%%%%%%%%%%%%%
\section{Conclusion}

We analyzed the leading-order fluctuation constant of stochastic gradient Euler--Maruyama discretizations for constant-coefficient generalized non-reversible Langevin dynamics (GNLD) in the small-stepsize regime. The main theorem follows by adopting a Poisson/martingale decomposition and showing that the additional stochastic gradient martingale is negligible under the CLT normalization. Consequently, fixed-variance unbiased stochastic gradient noise, in the sense of a conditional moment bound, does not change the leading fluctuation constant. This conclusion should not be read as covering high-variance stochastic gradient regimes in which the gradient noise grows as the stepsize goes to zero.
The comparison result shows how non-reversible variance reduction translates into a strict reduction of the stochastic gradient small-stepsize fluctuation constant under an explicit sufficient condition. The framework covers stochastic gradient versions of classical non-reversible Langevin dynamics (NLD), Hessian-free high-resolution (HFHR) dynamics, and a positive-definite subclass of gradient-adjusted underdamped Langevin (GAUL) dynamics. We also identified bounded smooth predictive observables, such as logistic predictive probabilities, Brier-type scores, and entropy-type uncertainty scores, that fall directly within our theory. As a separate Gaussian calculation beyond this bounded-test-function regime, we further computed the fluctuation constant explicitly for quadratic Hamiltonians and linear observables, yielding closed-form variance-reduction formulas including a simple reduction factor for HFHR and explicit matrix formulas for NLD and GAUL. 
The numerical experiments confirm these theoretical predictions across quadratic/nonquadratic targets and Bayesian linear regression using synthetic data, 
and Bayesian logistic regression using real data. In particular, the non-reversible schemes consistently reduce RMSE relative to their reversible baselines, and the normalized errors are well aligned with the Gaussian limits predicted by the small-stepsize CLT.

%%%%%%%%%%%%%%%%%%%%%%%%%

\section*{Acknowledgements}

Xiaoyu Wang is supported by the Guangzhou-HKUST(GZ) Joint Funding Program (No.2025A03J3556).
Lingjiong Zhu is partially supported by the grants NSF DMS-2053454 and NSF DMS-2208303.

\bibliographystyle{alpha}
\bibliography{bibtex}

\end{document}